\documentclass[11pt]{article}

\usepackage{microtype}
\usepackage{graphicx}
\usepackage{subcaption}
\usepackage{booktabs}
\usepackage{bm}
\usepackage{thmtools}
\usepackage{amsmath}
\usepackage{amssymb}
\usepackage{amsthm}
\usepackage{mathtools}
\usepackage{colortbl}
\usepackage{titletoc}
\usepackage{etoolbox}
\usepackage{multirow}
\usepackage{bbm}
\usepackage{stmaryrd}
\usepackage{enumerate}
\usepackage{physics}
\usepackage{pifont}
\usepackage{xcolor}
\usepackage{natbib}
\usepackage[margin=1in]{geometry}

\usepackage{nicefrac}
\usepackage{wrapfig}
\usepackage{xspace}
\usepackage{enumitem}
\usepackage{float}
\usepackage{siunitx}

\definecolor{lightgray}{gray}{0.92}
\definecolor{darkgreen}{rgb}{0,0.5,0}

\usepackage{algorithm}
\usepackage{algpseudocode}

\algrenewcommand\algorithmicrequire{\textbf{Input:}}
\algrenewcommand\algorithmicensure{\textbf{Output:}}

\DeclareCaptionFont{algorithmfont}{\fontsize{10}{11}\selectfont}
\newsavebox{\diffnashalgorithmbox}
\BeforeBeginEnvironment{algorithmic}{%
  \begin{lrbox}{\diffnashalgorithmbox}%
  \begin{minipage}{\linewidth}%
}
\AfterEndEnvironment{algorithmic}{%
  \end{minipage}%
  \end{lrbox}%
  \noindent
  \ifdim\dimexpr\ht\diffnashalgorithmbox+\dp\diffnashalgorithmbox\relax
    >0.88\textheight
    \resizebox*{!}{0.88\textheight}{\usebox{\diffnashalgorithmbox}}%
  \else
    \usebox{\diffnashalgorithmbox}%
  \fi
}

\AtBeginEnvironment{algorithmic}{%
  \fontsize{10}{11}\selectfont
  \setlength{\abovedisplayskip}{4pt plus 1pt minus 1pt}%
  \setlength{\belowdisplayskip}{4pt plus 1pt minus 1pt}%
  \setlength{\abovedisplayshortskip}{0pt plus 3pt}%
  \setlength{\belowdisplayshortskip}{4pt plus 3pt minus 3pt}%
}

\usepackage{hyperref}
\hypersetup{
    colorlinks=true,
    linkcolor=red,
    citecolor=darkgreen,
    filecolor=magenta,
    urlcolor=blue
}

\usepackage[capitalize,noabbrev]{cleveref}

\usepackage{thm-restate}

\theoremstyle{plain}

\newtheorem{proposition}{Proposition}
\newtheorem{lemma}{Lemma}

\theoremstyle{definition}
\newtheorem{definition}{Definition}

\theoremstyle{remark}

\usepackage{tikz}
\usetikzlibrary{
  calc, graphs, graphs.standard, shapes, arrows, arrows.meta,
  positioning, decorations.pathreplacing, decorations.markings,
  decorations.pathmorphing, fit, matrix, patterns, shapes.misc, tikzmark
}

\usepackage[textsize=tiny]{todonotes}

\newcommand{\ownpolicy}{\pi^i}
\newcommand{\opponentpolicy}{\pi^{-i}}
\newcommand{\ownaction}{\mathbf{a}^i}
\newcommand{\opponentaction}{\mathbf{a}^{-i}}
\newcommand{\opponentsample}{\mathbf{a}_{(\ell)}^{-i,k}}

\newcommand{\OMD}{\mathrm{OMD}}
\newcommand{\GWFP}{\mathrm{GWFP}}
\newcommand{\solver}{\mathrm{solver}}

\newcommand{\vaction}{\mathbf{a}}
\newcommand{\vsample}{\mathbf{z}}
\newcommand{\ELBO}{\operatorname{ELBO}}

\newcommand{\behavioriteration}{k^{(\ell)}}
\newcommand{\selfbatch}{\mathrm{B}_{\mathrm{self}}}

\newcommand{\const}{\mathrm{const}}
\newcommand{\isestimator}{\widehat{\Delta}^{k,\behavioriteration}}
\newcommand{\selfbehavioriteration}{k^{(m)}}
\newcommand{\radialsample}{\mathbf{b}}

\title{
  \textbf{DiffNash: Finding Nash Equilibrium in High-Dimensional
  Continuous Action Spaces with Diffusion Policies}
}

\author{
  Haichuan Wang$^{1,\dagger}$ \quad
  Anagha Satish$^{1,\dagger}$ \quad
  Jiale Zheng$^{1}$ \\
  Saisneha Ghatti$^{2}$ \quad
  Lingkai Kong$^{1}$ \quad
  Kai Wang$^{3}$ \quad
  Milind Tambe$^{1}$ \\
  \\
  $^{1}$Harvard University \\
  $^{2}$Massachusetts Institute of Technology \\
  $^{3}$Georgia Institute of Technology \\
  \\
  \texttt{haichuan\_wang@g.harvard.edu}
}
\date{}

\begin{document}

\maketitle

\begingroup
\renewcommand{\thefootnote}{$\dagger$}
\footnotetext{Equal Contribution.}
\endgroup

% --------------------------------------------------
% Main Paper
% --------------------------------------------------
\begin{abstract}

Computing equilibria in high-dimensional continuous games requires learning and representing rich, multi-modal strategy distributions, something conventional neural policies struggle to achieve. Diffusion models offer a natural solution, but standard diffusion training requires direct samples from a fixed target distribution, which are generally unavailable under iterative game-solving updates. We propose a diffusion-based framework for equilibrium computation that enables diffusion-based variants of Online Mirror Descent (OMD) and Fictitious Play (FP). Our approach derives a reweighted score-matching objective that trains diffusion policies without requiring direct samples from each updated policy. Our main method, $\textbf{DiffNash}$, combines the fast convergence of OMD with the sample efficiency of FP by reusing past actions through a replay buffer. To correct the resulting distribution shift from sample reuse, we derive an unbiased estimator of the difference between  current and past policies’ Evidence Lower Bounds, whose exponential provides a tractable proxy for the corresponding importance weight without evaluating diffusion likelihoods. We evaluate our approach on forest protection games and continuous Colonel Blotto games, reducing the Nash gap over the strongest baseline by $\textbf{35.6\%}$ and $\textbf{24.7\%}$, respectively, with minimal hyperparameter tuning.

\end{abstract}
\section{Introduction}

In real-world decision making, many choices are strategic: a party's outcome depends not only on its own decisions, but also on those of adversaries with competing objectives. Game theory provides a framework for modeling these interactions. In many applications, however, strategic decisions are not naturally enumerable and instead lie in continuous action spaces \citep{stein2008separable, ganzfried2021algorithm,kroer2015discretization}. In spatial security games, for example, a defender allocates resources while an adversary chooses where to attack \citep{kamra2018policy,gan2017security,alpern2022continuous}; these decisions are naturally represented by locations or trajectories rather than discrete actions. Continuous decisions also arise in economic competition, where firms choose production quantities \citep{shi2020multi}, and in resource-allocation games, where players divide continuous budgets across competing objectives \citep{roberson2006colonel}. Such settings are difficult to capture with finite strategy spaces and require reasoning over distributions on large continuous action spaces.

A Nash equilibrium is a strategy profile in which neither player can improve its outcome by unilaterally changing its strategy. A mixed strategy is a probability distribution over a player's possible actions; in continuous games, this distribution may be defined over an infinite action space \citep{glicksberg1952further}. As the dimensionality and complexity of the action space grow, representing the mixed strategy itself becomes challenging, since equilibrium play may require complex, multimodal distributions over high-dimensional continuous actions \citep{perkins2015mixed}.

Continuous-action games, even in the more well-understood two player zero-sum (2p0s) setting, present unique computation challenges. While efficient algorithms \citep{dantzig1951proof} and practical solvers \citep{jain2011double,cerny2024layered} exists for 2p0s games with finite action set, they cannot directly handle infinitely many actions in continuous-action games. Therefore, prior works \citep{kamra2019deep,ijcai2023p317} parameterize player strategies with neural networks and optimizes them through iterative algorithms such as Fictitious Play (FP) or Online Mirror Descent (OMD) to approximate Nash equilibria. However, these neural policies can suffer from mode collapse in high-dimensional action spaces, failing to capture the multimodal distributions that naturally arise in continuous-action games. Since diffusion model is capable of modeling multimodality in continuous vectors \citep{ho2020denoising,wang2023diffusion,li2024learning}, we ask 

\begin{center}
\emph{%To better serve users,
Can we parameterize player strategy with diffusion policies in iterative game-solving algorithms?}
\end{center}

In this work we propose \textbf{DiffNash}, a method for introducing diffusion models into equilibrium computation for games with continuous action. We parameterize each player's mixed strategy with a diffusion model and use Online Mirror Descent (OMD) \cite{blair1985problem} to iteratively update these strategies towards equilibrium. At each iteration, we sample candidate actions from the player's current diffusion policy and evaluate how useful those actions would be against the opponent's current policy. Standard diffusion training assumes access to samples from the target distribution, while the OMD update defines this distribution only implicitly. Hence, we adapt the reweighting technique of \citet{ma2025efficient} to modify the diffusion training loss, provably recovering the correct target update. To reduce the cost of repeatedly sampling news actions from diffusion and stabilize training along OMD iterations, we reuse samples from past policies through a replay buffer. We correct the resulting distribution shift with importance weighting and approximate the importance weights using Evidence Lower Bounds (ELBOs).

Our main contributions are: (1) We develop a diffusion-policy framework for equilibrium computation in continuous action games using utility-weighted score matching to represent complex, potentially multimodal mixed strategies (\Cref{sec:rsm}) (2) We incorporate a diffusion update into OMD, with an empirical comparison showing substantially faster convergence than a corresponding diffusion-FP approach (\Cref{sec:comparison}) (3) We construct an ELBO-difference based importance weighting that allows past actions to be reused for opponent evaluation (\Cref{subsec:efficient-IS}) and stabilizing training (\Cref{subsec:stailizer}) (4) We evaluate our approach on two continuous-action games, reducing the Nash gap over the strongest baseline by $\textbf{35.6\%}$ and $\textbf{24.7\%}$, respectively, with minimal hyperparameter tuning (\Cref{sec:experiments}).
\section{Background}
% \textbf{Notation} We use superscripts for game-theoretic notation, such as player and equilibrium-solver iteration indices, and subscripts for diffusion model notation. For a vector $\vaction \in \mathbb{R}^d$, we denote its $j$-th coordinate by $[\vaction]_j$. We use a parenthesized superscript $a^{(n)}$ to denote the $n$-th sample from a distribution $\mathcal D$. More details are discussed below. 

\subsection{Problem Formulation}
We consider a two-player normal-form game $\mathcal{G}=(\mathcal{A}^1,\mathcal{A}^2,u^1,u^2)$, where $\mathcal{A}^i \subseteq \mathbb{R}^{d^i}$ is a non-empty compact action space of
player $i$ and
$u^i:\mathcal{A}^1\times\mathcal{A}^2\rightarrow\mathbb{R}$
be its bounded and measurable utility function \footnote{We use superscripts for game-theoretic notation, such as player and equilibrium-solver iteration indices, and subscripts for diffusion model notation.}. We consider high-dimensional continuous action set, which may also subject to global feasibility constraints, such as a fixed resource
budget or a bounded spatial domain. A mixed strategy $\ownpolicy\in\Delta(\mathcal{A}^i)$ is a probability
distribution over $\mathcal A^i$. Given a strategy profile
$\pi=(\pi^1,\pi^2)$, the expected payoff to player 1 is
$U^i(\pi^1,\pi^2)
=
\mathbb{E}_{\substack{
    \vaction^1\sim\pi^1\\
    \vaction^2\sim\pi^2
}}
\left[u^i(\vaction^1,\vaction^2)\right]
$. Our goal is to find the Nash equilibrium of the game, where neither player has the incentive to deviate from their chosen strategy. Formally, 
\begin{definition}[Nash Equilibrium]
$\pi^\star=(\pi^{1,\star},\pi^{2,\star})$
is a Nash equilibrium if, for $i\in\{1,2\}$, $U^i(\pi^{i,\star},\pi^{-i,\star})
\geq
U^i(\pi^i,\pi^{-i,\star})
\qquad
\text{for all }\pi^i\in \Delta(\mathcal{A}^i).$
\end{definition}

We focus primarily on two-player zero-sum (2p0s) games, where $u^1(\vaction^1,\vaction^2)=-u^2(\vaction^1,\vaction^2)$. 2p0s games with finite action can be solved by linear programming \citep{dantzig1951proof}. For games with large combinatorial action spaces, such as network security games, practical methods exploit problem structure and propose practical solvers \citep{jain2011double,cerny2024layered}. Continuous-action games require representing and optimizing distributions over an infinite action space. Recent approaches therefore use neural parameterizations of mixed strategies within iterative game-solving algorithms \citep{kamra2019deep, ijcai2023p317}. Our work follows this latter setting, using diffusion models as expressive representations of continuous mixed strategies.

We primarily study the 2p0s setting, where equilibrium objective and associated learning dynamics are well understood. Methods with strong convergence guarantees in 2p0s games are also commonly studied and applied in general-sum settings, although convergence to Nash equilibrium is no longer guaranteed~\citep{anagnostides2022optimistic}. We also include general-sum games, where player's objectives are not necessarily opposed and equilibrium computation is generally more challenging. We measure solution quality using the \emph{Nash gap}, defined as the total improvement available to players through unilateral best responses \citep{lanctot2017unified}. It is non-negative and equals zero at Nash equilibrium, measuring how exploitable the strategy profile is.

\begin{definition}[Nash Gap]\label{def:nash-gap}
     $ \operatorname{Gap}(\pi_1,\pi_2)
=
\max_{\pi_1'\in\Delta(\mathcal{A}_1)}
    U(\pi_1',\pi_2)
-
\min_{\pi_2'\in\Delta(\mathcal{A}_2)}
    U(\pi_1,\pi_2').$
\end{definition}

A low Nash gap limits gains from opponent's unilateral deviations. We seek such strategies in high-dimensional action spaces, where conventional neural policies struggle \citep{kamra2019deep}.

\subsection{Diffusion Models}
\label{sec:diffusion-background}
Denoising diffusion probabilistic models (DDPMs) \citep{sohl2015deep, ho2020denoising} is a generative model consisting of two stochastic processes: one forward process which progressively adds noise to the data, and a reverse (denoising) process which learns to gradually remove noise. Formally, given a clean data from the target distribution $\vsample_0 \sim p_0$, the forward process gradually perturbs the data by $q(\vsample_t \mid \vsample_{t-1})
= \mathcal{N}\!\left(\vsample_t \mid \sqrt{1-\beta_t}\vsample_{t-1},\, \beta_t \mathbf{I}\right)$, where $\vsample_t$ is the perturbed data at time $t$ and $\beta_t$ is the noise variance at time step $t = 1, \cdots, T$. The corresponding joint distribution of forward trajectory is $q_{0:T}(\vsample_{0:T})=p_0(\vsample_0)\prod_{t=1}^T q_{t|t-1}(\vsample_t|\vsample_{t-1})$. Since each step adds independent Gaussian noise, we can sample $\vsample_t$ by $\vsample_t\sim p_t(\vsample_t)
= \int p_0(\vsample_0)\,q_{t\mid 0}(\vsample_t\mid \vsample_0)\,\mathrm{d}\vsample_0$, where $q_{t\mid 0}(\vsample_t\mid \vsample_0)
:= \mathcal{N}\!\left(
    \vsample_t;\sqrt{\bar{\alpha}_t}\,\vsample_0,\,
    (1-\bar{\alpha}_t)\mathbf{I}
\right),
\
\bar{\alpha}_t:=\prod_{\ell=1}^{t}\alpha_\ell,
\ 
\alpha_t:=1-\beta_t$. \footnote{Both $p$ and $q$ are density function. We use $q$ to denote tractable distributions like Gaussian corruption. When there is no ambiguity, we use $p$ and $p_0$ interchangably.} 

The backward process recovers the data distribution from a noise distribution $p_T$ with a iterative process. When $\beta_t$ is small, one can approximate the denoising process by the score function $s(\vsample_t) := \nabla_{\vsample_t} \log p(\vsample_t)$. Specifically, $q(\vsample_{t-1}|\vsample_t) \approx N(\vsample_{t-1}; \frac{1}{\alpha_t}(x_t + \beta_t s(x_t)), \sigma_t^2 \mathbf{I})$, where $\sigma_t = \frac{1-\bar{\alpha}_{t-1}}{1-\bar{\alpha_t}}\beta_t$. Therefore, one can follow the following iterative procedure to draw samples from $p_0$:
\begin{align}\label{eq:diff-sampling}
    \vsample_{t-1} = \frac{1}{\alpha_t} (\vsample_t + \beta_t s(\vsample_t)) + \frac{1- \bar{\alpha}_{t-1}}{1-\bar{\alpha}_t}\beta_t \epsilon_t
\end{align}
for $t$ from $T$ to $1$ and $\epsilon_t \sim \mathcal N(0,1)$. However, we do not know $s(\vsample_t)$ in closed form, so we train a score network $s_\theta(\vsample_t, t)$ to approximate it, with the following tractable loss function \citep{ho2020denoising}:
\begin{align}\label{eq:ddpm-score-loss}
   L(\theta)= \mathbb E_{t \in [T]}(1-\bar{\alpha}_t) \mathbb{E}_{\vsample_0 \sim p_0, \vsample_t \sim q_{t|0}}[||s_\theta(\vsample_t;t) - \nabla_{\vsample_t}\log q_{t|0}(\vsample_t|\vsample_0)||^2]
\end{align}
\citet{ma2025efficient} shows that fix a time $t$, \Cref{eq:ddpm-equivalent-loss} has the same global minimizer as \Cref{eq:ddpm-score-loss}:
\begin{align}\label{eq:ddpm-equivalent-loss}
    L(\theta;t) = \mathbb E_{\vsample_t \sim p_t}[||s_\theta(\vsample_t;t) - \nabla_{\vsample_t} \log p_t(\vsample_t)]
\end{align}
Though \Cref{eq:ddpm-equivalent-loss} is intractable because we do not know $\nabla_{\vsample_t} \log p_t(\vsample_t)$, it's important for the reweighting technique used in \Cref{sec:rsm}.

% \newpage 

% With large enough $T$, the repeated noise gradually transforms the data distribution into (approximately) Gaussian distribution, i.e. $q(\vsample_t) \approx N(0, T\bet\vaction^2I)$. 

% \newpage 

% \haichuan{Will finish rewrite this section by tomorrow morning, also need to rewrite Proposition 2 theorem statement and all notations in this paper}

% Denoising diffusion probabilistic models (DDPMs) learn complex distributions by
% reversing a gradual Gaussian noising process~\citep{ho2020denoising}. Given a
% clean sample $a_0 \sim p_{\mathrm{data}}$, the forward process is
% \[
%     q(a_t \mid a_0)
%     =
%     \mathcal{N}\!\left(
%         \sqrt{\bar{\alpha}_t}\,a_0,\,
%         (1-\bar{\alpha}_t)I
%     \right),
% \]
% where $\alpha_t=1-\beta_t$ and
% $\bar{\alpha}_t=\prod_{s=1}^{t}\alpha_s$. Equivalently,
% \[
%     a_t
%     =
%     \sqrt{\bar{\alpha}_t}\,a_0
%     +
%     \sqrt{1-\bar{\alpha}_t}\,\epsilon,
%     \qquad
%     \epsilon\sim\mathcal{N}(0,I).
% \]

% A neural network $\epsilon_\theta(a_t,t)$ is trained to predict the injected
% noise using the denoising objective
% \[
%     \mathcal{L}_{\mathrm{diff}}(\theta)
%     =
%     \mathbb{E}_{a_0,t,\epsilon}
%     \left[
%         \left\|
%             \epsilon-
%             \epsilon_\theta(a_t,t)
%         \right\|_2^2
%     \right].
% \]
% At inference time, samples are generated by starting from Gaussian noise and
% iteratively applying the learned reverse denoising process. In our setting,
% $a_0$ represents a player's action, so the resulting diffusion model
% parameterizes the player's mixed strategy $\pi_\theta$.

\subsection{Equilibrium Solvers: Online Mirror Descent and Fictitious Play}
\label{sec:equilibrium-solvers}

% \[
%     u_i(a_i;\opponentpolicy)
%     :=
%     \mathbb{E}_{a_{-i}\sim\opponentpolicy}
%     [u_i(a_i,a_{-i})]
% \]

Let $u^i(\ownaction; \opponentpolicy) :=
    \mathbb{E}_{\opponentaction \sim \opponentpolicy}
    [u^i(\ownaction,\opponentaction)]$
denote player $i$'s expected utility for action $\ownaction$ against opponent
strategy $\opponentpolicy$. We introduce two major iterative algorithms used to compute Nash equilibrium for continuous action in ML literature: Online Mirror Descent (OMD) and Fictitious Play (FP). 

\textbf{Online mirror descent.}
In OMD \citep{kivinen1997exponentiated,beck2003mirror}, each player computes a KL-regularized best response to the opponent’s strategy from the previous iteration. At iteration $k$, player i computes
\begin{align}\label{eq:mirror-decent-update}
     \pi^{i,k+1}(\ownaction) = \pi^{i,k}(\ownaction)\exp(\frac{1}{\eta} u_i(\ownaction;\pi^{-i,k}))/Z_{\mathbf{OMD}}^{i,k}\ ,
\end{align}
where $\frac{1}{\eta}$ is called temperature and $Z_{\mathbf{OMD}}^{i,k}
=
\int_{\mathcal A^i}
\pi^{i,k}(\ownaction)
\exp\!\left(
\frac{1}{\eta}\,u_i(\ownaction;\pi^{-i,k})
\right)
\, da^i$ is an intractable unknown constant. We define empirical average of OMD responses as $\Pi^{i,k} = \frac{1}{k}\sum_{h=1}^{k} \pi_{\mathbf{OMD}}^{i,h}$. Under standard regularity conditions and an appropriate learning rate schedule, OMD is a no-regret algorithm on continuous action space \citep{krichene2015hedge}. When both players play such algorithms, their empirical strategies converge asymptotically to the Nash equilibrium set \citep{balandat2016minimizing}. 

\textbf{Fictitious play.}
While in OMD player computes a KL-regularized best response to the opponent’s current strategy, in FP \citep{brown1951iterative} player  best responds to the empirical distribution of the opponent's past play \citep{brown1951iterative}. FP is known to converge to Nash equilibrium with continuous action space and continuous utility function \citep{danskin1954fictitious}. We consider a special variant called Generalised Weakened Fictitious Play (GWFP) \citep{leslie2006generalised}. Let \(\pi_{\mathbf{GWFP}}^{i,k}\) denote player \(i\)'s best response at iteration \(k\), and let $\Pi^{i,k} := \frac{1}{k}\sum_{h=1}^{k} \pi_{\mathbf{GWFP}}^{i,h}$ denote the empirical average of its past responses. Under GWFP, player $i$ computes
\begin{align}\label{eq:FP-update-rule}
\pi_{\mathbf{GWFP}}^{i,k+1}
=
\exp\left(\frac{1}{\eta^k}
u^i(\ownaction,\Pi^{-i,k})\right)/{Z^{i,k}}  \ ,
\end{align}

% B_{k+1}^i
% \in
% \arg\max_{\ownpolicy \in \Delta(\mathcal A_i)}
% u_i\left(\ownpolicy,\Pi_k^{-i}\right).
where $\eta^k$ is called temperature and $Z^{i,k} =
\int_{\mathcal A^i}
\exp\left(
\frac{u^i(a^i,\Pi^{-i,k})}{\eta_k}
\right)\mathrm{d} \vaction^i$ is an intractable normalization constant. Under vanishing temperature condition, $(\Pi^{1,k}, \Pi^{2,k})$ converges to the set of Nash equilibria asymptotically under GWFP \citep{leslie2006generalised}.

To represent multimodal mixed strategies, we represent player $i$’s latent action with a diffusion model $p_{\theta^i}$. A game-specific map $f^\mathcal G$ will convert the latent sample to a feasible player action, i.e., $\vaction^i=f^{\mathcal G}(\vsample), \vsample\sim p_{\theta^i}$. The resulting mixed strategy is then $\pi_{\theta^i} = f_\#^\mathcal G p_{\theta^i}$. We call $\pi_{\theta^i}$ player $i$’s diffusion policy following \citet{ma2025efficient}'s notation, abstracting away latent space details.

\textbf{Understanding Convergence Dynamics.}
Computing the Nash gap requires solving an intractable best response problem, and approximating it at every iteration is prohibitively expensive. To study convergence dynamics, we use a multi-market Cournot game \citep{kamra2019deep} whose unique Nash equilibrium is an analytically derivable singleton, which we denote as $(\vaction^{1,*}, \vaction^{2,*})$. In Cournot game, two firms compete by choosing production level across $d$ independent markets to maximizes total profits. We defer Cournot game's details to Appendix \ref{app:cournot}. We define the distance of a strategy profile $\Pi$ to the equilibrium by $\frac{1}{2} \sum_{i=1}^{2} \mathbb{E}_{a^i \sim \pi^{i}} ||a^{i}-a^{i, \star}||_2.$ Because this distance is easy to calculate, we compute it at each iteration in the Cournot game to study convergence dynamics.

\section{Proposed Method}

% \Cref{sec:rsm} derives reweighted score-matching objectives for diffusion-based GWFP and OMD. In \Cref{sec:comparison}, we compare both methods and select OMD as our core backbone due to its superior convergence speed. To reduce repeated diffusion sampling and stabilizes training, \Cref{subsec:efficient-IS} and \Cref{subsec:stailizer} introduce importance sampling to reuse past iteration samples, completing \textsc{DiffNash}.

% We first establish reweighted score-matching objectives ($\mathcal{L}^{\mathrm{GWFP}}$ and $\mathcal{L}^{\mathrm{OMD}}$) in Section~4.1. In Section~4.2, we compare both methods and select OMD as our core backbone due to its superior convergence speed. To eliminate OMD's high sampling overhead, Section~4.3 introduces ELBO-based importance sampling to reuse past opponent actions, while Section~4.4 extends this scheme to self-actions to form our complete algorithm, \textsc{DiffNash}.

\begin{figure*}[t]
    \centering
\includegraphics[width=0.9\textwidth]{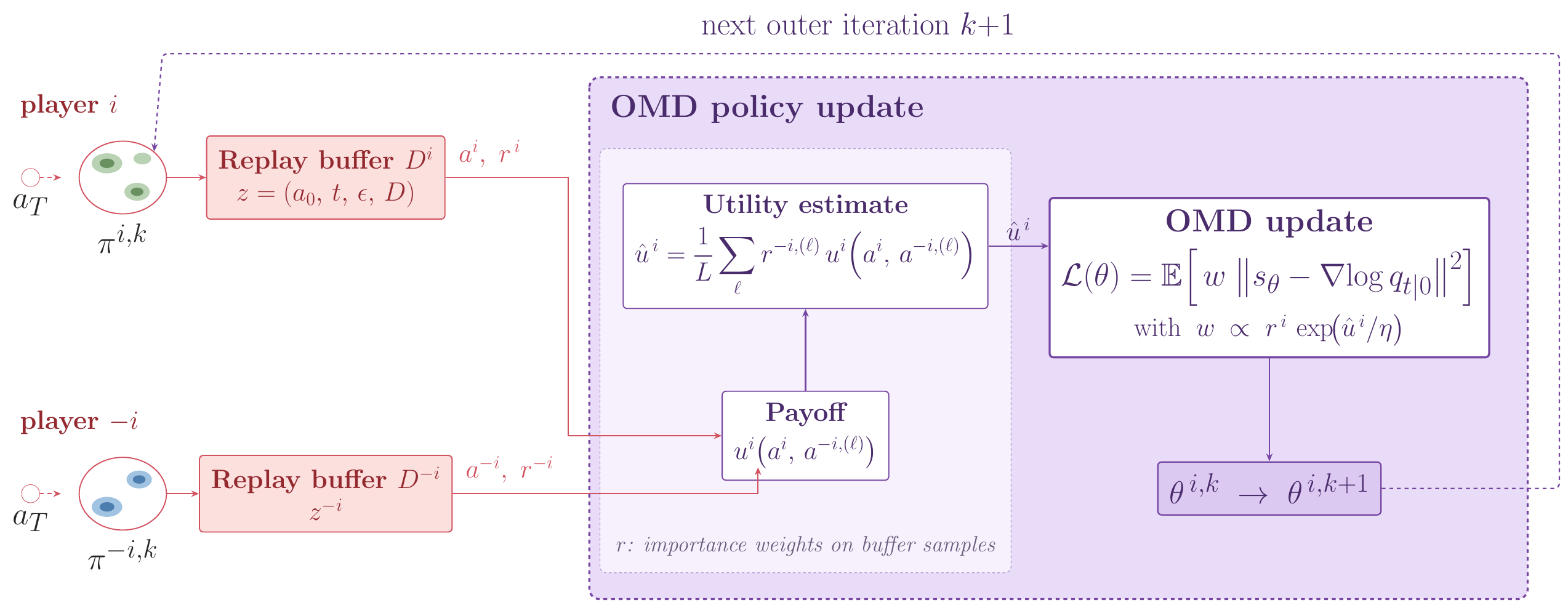}
    \caption{\textbf{Overview of DiffNash.} Each player samples actions from a diffusion policy into a replay buffer. Both lanes pass through a shared replay weight, obtained from a single-call ELBO difference (Proposition \ref{prop:elbo-difference}), and scored against the opponent's buffered actions by importance sampling. The OMD weight then reweights the denoising loss. Player $-i$ is updated symmetrically.}
    \label{fig:main}
\end{figure*}

\subsection{Enabling Diffusion-based GWFP and OMD via Reweighted Score Matching}
\label{sec:rsm}

% The equilibrium updates in Section~\ref{sec:equilibrium-solvers} define target action distributions whose closed-form solutions in \Cref{eq:FP-update-rule,eq:mirror-decent-update} contain unknown normalization constants. Such probablistic models with unknown normalization constants is called energy-based model, which is  difficult to train and sample from \citep{song2021train}. 

% whereas standard diffusion training requires samples from the target distribution. We therefore use reweighted score matching \citep{ma2025efficient} to learn the score functions of the target distributions $\pi^{\mathbf{MaxEnt}}$ and $\pi^{\mathbf{MD}}$ for FP and OMD, respectively, without directly sampling from them.
% equilibrium updates in Section~\ref{sec:equilibrium-solvers} define a
% target distribution over actions. The closed-form solution of both \Cref{eq:FP-update-rule} and \Cref{eq:mirror-decent-update} have unknown normalization constants. 

The game-solving updates in GWFP \Cref{eq:FP-update-rule} and OMD \Cref{eq:mirror-decent-update} both involve unknown normalization constants. Such unnormalized distributions can be viewed as energy-based models, which are generally difficult to train and sample from \citep{song2021train}. Moreover, standard diffusion training requires samples from the target distribution, which are not directly available for the GWFP and OMD updates. \citet{ma2025efficient} show that, instead of integrating the squared score loss in \Cref{eq:ddpm-equivalent-loss} with respect to $p_t$, the objective can be reweighted by any strictly positive function $g(\vaction_t)$ without changing its global minimizer:
\begin{align}\label{eq:reweighted-formula}
    L(\theta;t) = \int g(\vaction_t)||s_\theta(\vaction_t;t)-\nabla_{\vaction_t}\log p_t(\vaction_t)||^2d\vaction_t
\end{align}
By choosing appropriate $g(\vaction_t)$, they derive tractable loss to learn the score function of diffusion policy in online RL. We use $\pi_{t,\solver}^{i,k+1}(\vaction_t)$ to denote noise-corrupted target update at time $t$, i.e., $\pi_{t,\solver}^{i,k+1}(\vaction_t)
:=
\int \pi_{\solver}^{i,k+1}(\vaction_0)\,
q_{t\mid 0}(\vaction_t\mid \vaction_0)\,
\mathrm{d}\vaction_0$, where $\solver \in \{\GWFP, \OMD\}$. We adapt the reweighting technique to our setting by setting $p_t=\pi_{t,\solver}^{i,k+1}$ and choosing proper $g(\vaction_t)$, so that the resulting objectives can recover score functions of noise-corrupted updated in OMD and GWFP.

% address this, we apply a reweighted score matching loss, a broader loss family that provably learn the score function corresponding to $\pi_{\mathbf{OMD}}^{i, k+1}(\ownaction)$ and $\pi_{\mathbf{GWFP}}^{i, k+1}(\ownaction)$ \citep{ma2025efficient}.

\begin{restatable}[OMD and GWFP Loss]{proposition}{gwfpomdloss}
\label{lem:new-loss}
By choosing an appropriate $g_{\OMD}(\vaction_t)$ and substitute it into \Cref{eq:reweighted-formula}, we can equivalently write their loss functions as
\begin{equation}\label{eq:OMD-loss}
\mathcal{L}_{\mathrm{OMD}}(\theta;t)
=
\mathbb{E}_{\vaction_0\sim\pi^{i,k}}
\left[
\mathbb{E}_{\vaction_t\sim q_{t|0}(\cdot\mid \vaction_0)}
\left[
\exp\left(\frac{1}{\eta}u^i(\vaction_0;\pi^{-i,k})\right)
\left\|
s_{\theta}(\vaction_t,t)
-
\nabla_{\vaction_t}\log q_{t|0}(\vaction_t\mid \vaction_0)
\right\|_2^2
\right]
\right].
\end{equation}
For every $t \in (0, T]$, the minimizer of \Cref{eq:OMD-loss} over $s$ is the score function of 
$\pi^{i,k+1}_{t, \OMD}(\vaction_t)$. In Appendix \ref{app:proof-rsm}, we show with a proper $g_{\GWFP}(\vaction_t)$, we can recover the score function of $\pi^{i,k+1}_{t, \GWFP}(\vaction_t)$.
% Similarly, substituting a proper chosen $g_{\GWFP}(\vaction_t)$ into \Cref{eq:reweighted-formula}, we derive a $\mathcal{L}_{\mathrm{GWFP}}(\theta;t)$ that recovers the score function the score function of 
% $\pi^{i,k+1}_{t, \GWFP}(\vaction_t)$ at the global minimum. 
\end{restatable}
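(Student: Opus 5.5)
The plan is to pick $g_{\OMD}$ so that the reweighted objective in \Cref{eq:reweighted-formula} reduces, after dropping $\theta$-independent terms, to \Cref{eq:OMD-loss}. Let $p_t=\pi^{i,k+1}_{t,\OMD}$ and $w(\vaction_0):=\exp\!\left(\frac{1}{\eta}u^i(\vaction_0;\pi^{-i,k})\right)$. By \Cref{eq:mirror-decent-update}, $\pi^{i,k+1}_{\OMD}=\pi^{i,k}w/Z^{i,k}_{\OMD}$, so
\[
\pi^{i,k+1}_{t,\OMD}(\vaction_t)=\frac{1}{Z^{i,k}_{\OMD}}\int \pi^{i,k}(\vaction_0)\,w(\vaction_0)\,q_{t|0}(\vaction_t\mid\vaction_0)\,\mathrm{d}\vaction_0 .
\]
The natural choice is $g_{\OMD}(\vaction_t):=Z^{i,k}_{\OMD}\,\pi^{i,k+1}_{t,\OMD}(\vaction_t)$, the unnormalized noised target. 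It is strictly positive because the Gaussian kernel is positive and $w>0$ (since $u^i$ is bounded). With this choice, \Cref{eq:reweighted-formula} is simply $Z^{i,k}_{\OMD}$ times the expectation in \Cref{eq:ddpm-equivalent-loss} under $p_t$. That is a positive constant multiple, so the minimizer is unchanged and equals $\nabla\log\pi^{i,k+1}_{t,\OMD}$.

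Next I would verify that this loss equals \Cref{eq:OMD-loss} up to an additive constant, which is the usual denoising score matching identity carried through with the weight $w$.

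\textbf{Step 1: expand the square.} Expanding $\|s_\theta-\nabla\log p_t\|^2$ gives three terms: $\|s_\theta\|^2$, the cross term $-2\langle s_\theta,\nabla\log p_t\rangle$, and a constant.

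\textbf{Step 2: rewrite the cross term.} Weighted by $g_{\OMD}=Z\,p_t$, the cross term becomes $-2\int\langle s_\theta,\nabla_{\vaction_t}g_{\OMD}\rangle\,\mathrm{d}\vaction_t$. Differentiating under the integral gives
\[
\nabla_{\vaction_t}g_{\OMD}=\int\pi^{i,k}(\vaction_0)\,w(\vaction_0)\,q_{t|0}(\vaction_t\mid\vaction_0)\,\nabla_{\vaction_t}\log q_{t|0}(\vaction_t\mid\vaction_0)\,\mathrm{d}\vaction_0 .
\]

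\textbf{Step 3: collect terms.} Substituting Step 2 into the cross term, and writing the $\|s_\theta\|^2$ term as an expectation over the joint measure $\pi^{i,k}(\vaction_0)w(\vaction_0)q_{t|0}(\vaction_t\mid\vaction_0)$, the two $\theta$-dependent terms match those of \Cref{eq:OMD-loss} exactly. The two objectives therefore differ only by a $\theta$-independent constant.

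Pointwise minimization in $s$ then confirms the claim. The minimizer of \Cref{eq:OMD-loss} at each $\vaction_t$ is the $w$-weighted posterior mean of $\nabla\log q_{t|0}$, which by Step 2 equals $\nabla g_{\OMD}/g_{\OMD}=\nabla\log\pi^{i,k+1}_{t,\OMD}$.

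The main obstacle is justifying the interchange of differentiation and integration and the finiteness of all terms. For the interchange I would use boundedness of $u^i$, which makes $w$ bounded above and away from zero, together with compactness of $\mathcal{A}^i$. Under those conditions the Gaussian kernel and its gradient are uniformly integrable for $t>0$, and dominated convergence applies. The constant term $\int g\|\nabla\log p_t\|^2$ also needs to be finite, and I would bound it using the same Gaussian moment estimates.

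The GWFP case follows the same template. From \Cref{eq:FP-update-rule}, choose $g_{\GWFP}(\vaction_t)=\int\exp\!\left(u^i(\vaction_0,\Pi^{-i,k})/\eta^k\right)q_{t|0}(\vaction_t\mid\vaction_0)\,\mathrm{d}\vaction_0$. The expectation over $\pi^{i,k}$ is replaced by integration against a reference measure, for example uniform on $\mathcal{A}^i$ with the volume factor absorbed into the constant.
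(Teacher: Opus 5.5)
Your OMD argument matches the paper's. You use the same choice $g_{\OMD}=Z^{i,k}_{\OMD}\,\pi^{i,k+1}_{t,\OMD}$ and the same weighted denoising-score-matching identity: expand the square, rewrite the cross term via $\nabla_{\vaction_t}g_{\OMD}$ as an integral of $\nabla\log q_{t|0}$, and drop the $\theta$-free term. Your pointwise posterior-mean check and dominated-convergence remarks are extra care the paper leaves implicit.

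The genuine difference is in GWFP. The paper takes $g_{\GWFP}=\bar\alpha_t^{d^i/2}Z^{i,k}_{\GWFP}\,h_t\,\pi^{i,k+1}_{t,\GWFP}$ for an arbitrary positive density $h_t$. It then uses $\phi_{0|t}(\vaction_0\mid\vaction_t)=\bar\alpha_t^{d^i/2}q_{t|0}(\vaction_t\mid\vaction_0)$ to read the double integral as an expectation in which $\vaction_t\sim h_t$ is drawn first and $\tilde{\vaction}_0\sim\phi_{0|t}(\cdot\mid\vaction_t)$ second; this is the reverse-sampling form of \citet{ma2025efficient}. You instead take the flat choice $g_{\GWFP}=Z^{i,k}_{\GWFP}\,\pi^{i,k+1}_{t,\GWFP}$. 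This yields the forward-sampling loss $\mathrm{vol}(\mathcal{A}^i)\,\mathbb{E}_{\vaction_0\sim\mathrm{Unif}(\mathcal{A}^i),\,\vaction_t\sim q_{t|0}(\cdot\mid\vaction_0)}\bigl[\exp\bigl(u^i(\vaction_0;\Pi^{-i,k})/\eta^k\bigr)\|s_\theta(\vaction_t,t)-\nabla_{\vaction_t}\log q_{t|0}(\vaction_t\mid\vaction_0)\|_2^2\bigr]$ plus a constant. That is exactly the OMD loss with $\pi^{i,k}$ replaced by the uniform prior (and $\pi^{-i,k},\eta$ by $\Pi^{-i,k},\eta^k$).

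Your route is simpler and handles both solvers with one identity. It also keeps $\vaction_0$ inside $\mathcal{A}^i$, where $u^i$ is defined, whereas $\phi_{0|t}$ is a Gaussian on all of $\mathbb{R}^{d^i}$. The paper's route buys the freedom to choose $h_t$, i.e.\ which noised points the score is fitted on, independently of the target. Since the statement only asserts that some proper $g_{\GWFP}$ exists, either choice suffices.

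One wording fix: $\mathrm{vol}(\mathcal{A}^i)$ is a positive multiplicative factor, not part of the additive constant. It is harmless for the minimizer, but you should say so.
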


We defer \Cref{lem:new-loss}'s proof to Appendix \ref{app:proof-rsm}. The intuition of choosing $g({\vaction_t})$ is to ensure we can sample from all distributions appear in the resulting loss. For example, at iteration \(k+1\), we have sampling access to \(\pi^{i,k}\) and $q_{t|0}$, while \(u^i(\tilde{\vaction}_0;\pi^{-i, k})\) can be unbiasedly estimated with samples from opponent's current strategy $\pi^{-i, k}$. Thus, the objectives in \Cref{lem:new-loss} are tractable, and optimizing them recover the corresponding score function \(\nabla_{\vaction_t}\log \pi_{t,\solver}^{i,k+1}(\vaction_t)\). Recall we only need the score function to perform the iterative diffusion sampling process specified in \Cref{eq:diff-sampling}. Hence, these objectives suffice to implement diffusion-based OMD and GWFP updates.

\subsection{Comparison Between FP and OMD}\label{sec:comparison}
\begin{wrapfigure}{r}{0.45\textwidth}
    \centering
    \includegraphics[width=\linewidth]{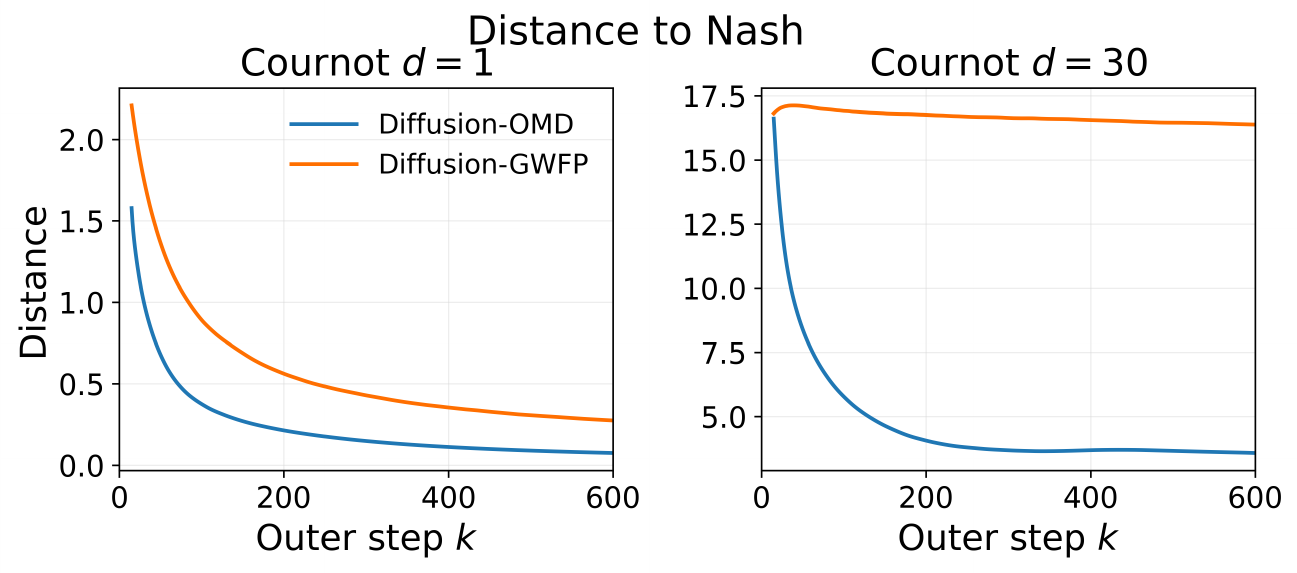}
    \caption{We run diffusion-based GWFP and OMD on the Cournot game for $600$ iterations across three random seeds and report their mean distances to the Nash equilibrium.}
    \label{fig:cournot_gwfp_vs_omd}
\end{wrapfigure}

In this section, we examine the complementary strengths of FP and OMD to guide the design of an algorithm that combines their advantages. \citep{perolat2021scaling,lauriere2022scalable} report faster convergence for OMD, partly because best responding to stale historical strategies can slow FP's convergence. We confirm this finding in \Cref{fig:cournot_gwfp_vs_omd}: we run diffusion-based GWFP and OMD for $600$ iterations on the Cournot game. OMD converges substantially faster in both dimensionalities. With action dimension $d=30$, GWFP's distance to equilibrium decreases only marginally, highlighting the difficulty of convergence in high-dimensional action spaces. While prior works are often based on FP \citet{kamra2019deep,karthikeyan2025difffp}, this result motivates using OMD as our equilibrium solver's backbone.

Evaluating \(\mathcal{L}_{\OMD}(\theta;t)\) in
\Cref{eq:OMD-loss} requires estimating the expected utility against the
current opponent policy. Using \(L\) freshly generated opponent actions,
this expectation can be estimated as $\widehat{u}^i(\vaction_0;\pi^{-i,k})
=
\frac{1}{L}
\sum_{\ell=1}^{L}
u^i\!\left(\vaction_0,\opponentsample \right),\opponentsample \sim \pi^{-i,k}.
$
However, since diffusion generation involves an
iterative denoising process, repeatedly sampling fresh actions from the
current opponent policy can be computationally expensive. FP largely avoids this cost because each player best-responds to the
opponent's historical mixture \(\Pi^{-i,k}\). Following
\citet{kamra2019deep}, samples from this mixture can be maintained in a
replay buffer, making them cheap to reuse. To improve
the sample efficiency of our OMD solver, we similarly maintain a replay buffer for
each player and use importance sampling (IS) to correct the distribution shift between the policy that generates a sample and $\pi^{-i,k}$. Formally, let $k^{(\ell)}$ denote the iteration
at which the \(\ell\)-th sampled action was generated, so that
$\vaction_{(\ell)}^{-i,k^{(\ell)}} \sim \pi^{-i, k^{(\ell)}}$. We then have
\begin{equation}\label{eq:opponent-replay-is}
\widehat{u}^i(\vaction_0;\pi^{-i,k})
=
\frac{1}{L}
\sum_{\ell=1}^{L}
r^{-i,k,k^{(\ell)}}
u^i\!\left(\vaction_0,\vaction_{(\ell)}^{-i,k^{(\ell)}}\right),
\ \text{where } r^{-i, k, k^{(\ell)}} = \frac{
    \pi_{\theta^{-i,k}}\!\left(\vaction_{(\ell)}^{-i,k^{(\ell)}}\right)
}{
    \pi_{\theta^{-i,k_\ell}}\!\left(\vaction_{(\ell)}^{-i,k^{(\ell)}}\right),
}
\end{equation}
$r^{-i, k, k^{(\ell)}}$ is sample $l$'s importance weight\footnote{In practice, we use $\min\left\{
r^{-i,k,k^{(\ell)}},
C
\right\}$ as the weight in \Cref{eq:opponent-replay-is}, where $C$ caps the weight to prevent excessively large values \citep{schulman2017proximal}.}. However, computing \(\pi_\theta(\cdot)\) in $r^{-i, k, k^{(\ell)}}$ exactly remains expensive for diffusion models \citep{song2020score}, motivating us to develop a principled proxy.

\subsection{Enable Importance Sampling}\label{subsec:efficient-IS}
% Fix a player $i$. Suppose $\vaction_0$ was sampled from the behavior policy $\pi_i^b$ at iteration b. To reuse this sample when estimating an expectation under the policy $\pi_i^k$ at iteration k, we weight its contribution by the importance ratio $\frac{\pi_i^k(\vaction_0)}{\pi_i^b(\vaction_0)}$. However, computing the exact log likelihood $\log \pi_\theta(\vaction_0)$ is generally expensive for diffusion models.

In this section, we fix a player and suppress the player index. Consider a diffusion sample $\vaction_0$. 
\begin{lemma}\label{lemma:log-is}
    $r^{k,k^{(\ell)}} =
    \exp\left(
        \log \pi_{\theta^{k}}\!\left(\vaction_0)\right)
        - \log \pi_{\theta^{k^{(\ell)}}}\!\left(\vaction_0\right)
    \right)$.
\end{lemma}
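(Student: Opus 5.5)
This identity is essentially definitional, so the plan is to unfold the importance weight defined in \Cref{eq:opponent-replay-is} and rewrite a ratio of positive quantities as the exponential of a difference of logarithms. With the player index suppressed, the weight is
\[
r^{k,k^{(\ell)}} = \frac{\pi_{\theta^{k}}(\vaction_0)}{\pi_{\theta^{k^{(\ell)}}}(\vaction_0)},
\]
where $\vaction_0$ is a replay sample generated at iteration $k^{(\ell)}$. The one thing to secure is that both densities are strictly positive (and finite) at $\vaction_0$, so that $\log$ is well defined and $x = \exp(\log x)$ holds.

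\textbf{Step 1: positivity.} I would argue positivity from the diffusion construction in \Cref{sec:diffusion-background}. The policy is defined through a reverse Markov chain whose transitions are Gaussian, with variances $\sigma_t^2>0$ as in \Cref{eq:diff-sampling}, started from a Gaussian prior $p_T$. Hence the marginal density of the clean latent $p_{\theta}(\vsample_0)$ is a mixture, over earlier states, of nondegenerate Gaussians. It is therefore strictly positive and finite everywhere on the latent space.

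If the pushforward $f^{\mathcal G}$ is present, I would take the densities in $r$ to be evaluated at the latent sample, so that $\pi_\theta = p_\theta$ there. This matches the paper's convention of abstracting away the latent space. Alternatively, any change-of-variables Jacobian factor is common to numerator and denominator and cancels.

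The denominator $\pi_{\theta^{k^{(\ell)}}}(\vaction_0)$ is additionally positive almost surely, because $\vaction_0$ was drawn from that very density.

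\textbf{Step 2: rewrite.} With $a=\pi_{\theta^{k}}(\vaction_0)>0$ and $b=\pi_{\theta^{k^{(\ell)}}}(\vaction_0)>0$, we have $a/b=\exp(\log a-\log b)$. This is exactly the claimed expression.

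\textbf{Main obstacle.} The only nontrivial point is the well-definedness of the logarithms, that is, Step 1. The intended use is to replace these intractable log-likelihoods with ELBOs in the sequel, so the lemma itself requires no estimate. I would state Step 1 as a brief remark rather than belabor it.
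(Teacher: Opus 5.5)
Your proposal is correct and takes the same approach as the paper, which only remarks that the lemma follows by direct calculation: the weight in \Cref{eq:opponent-replay-is} is a density ratio, and $a/b=\exp(\log a-\log b)$. Your positivity remark is extra care the paper omits, with one caveat: under \Cref{eq:diff-sampling} as literally written, the $t=1$ noise coefficient vanishes (since $\bar{\alpha}_0=1$), so your ``mixture of nondegenerate Gaussians'' argument needs the decoder variance $\sigma_1^2>0$ assumed in the proof of \Cref{prop:elbo-difference} — although, with the convention $\log 0=-\infty$, the almost-sure positivity of the denominator already suffices.
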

\Cref{lemma:log-is} is proved by direct calculation. By \Cref{lemma:log-is}, we only need to approximate $\log \pi_\theta(\vaction_0)$ to enable efficient importance sampling. We achieve this by using evidence lower bounds (ELBO). 
\begin{definition}
   The ELBO of a sample $\vaction_0$ is  $\ELBO_{\theta}(\vaction_0)
\coloneqq
\mathbb{E}_{q(\vaction_{1:T}\mid \vaction_0)}
\left[
\log
\frac{p_{\theta}(\vaction_{0:T})}
     {q(\vaction_{1:T}\mid \vaction_0)}
\right]$.
\end{definition}

ELBO lower bounds the log-likelihood, i.e., $\ELBO_\theta(\vaction_0) \leq \log \pi_\theta(\vaction_0)$ \citep{ho2020denoising}. Since $\log \pi_\theta(\vaction_0)$ is intractable for diffusion, the ELBO is commonly used as a tractable surrogate for training diffusion models. \citet{mcallister2026flow} also uses ELBO to approximate IS weight, though their method is for flow matching policy. We develop new method for diffusion policy. Substituting log likelihood with ELBO in \Cref{lemma:log-is}, it remains to compute $\ELBO_{\theta^{k}}(\vaction_0)-\operatorname{ELBO}_{\theta^{k^{(\ell)}}}(\vaction_0)$ to obtain IS weights.  Although evaluating each ELBO is still expensive \citep{ho2020denoising}, \Cref{prop:elbo-difference} shows that their difference admits an unbiased estimator requiring only one score network evaluation.
% To approximate $r^{-i,k, k^{(\ell)}}$, it therefore remains to compute $\ELBO^{-i,k}(\vaction_0)-\operatorname{ELBO}^{-i,k^{(\ell)}}(\vaction_0)$ and exponentiate it. Although evaluating each ELBO in full remains expensive, \Cref{prop:elbo-difference} shows that their difference admits an unbiased estimator requiring only one score network evaluation.
\begin{restatable}{proposition}{noiseprediction}\label{prop:elbo-difference}
    Suppose $\pi^{\behavioriteration}$ and $\pi^{k}$ share the same forward-noising
schedule $\{\beta_t\}_{t=1}^T$. Denote $D(\theta; \vaction_0, t, \epsilon) = C_t||s_{\theta}(\vaction_t, t)+\frac{\epsilon}{\sqrt{1-\bar{\alpha}_t}}||_2^2$, where $C_t$ a constant and $\vaction_t = \sqrt{\bar{\alpha}_t}\vaction_0 + \sqrt{1-\bar{\alpha_t}}\epsilon$. If $t \sim\operatorname{Unif}\{1,\ldots,T\}$ and $\epsilon \sim \mathcal N(0,I)$, then $
\isestimator
:=
T\bigl(D(\theta^{\behavioriteration}; \vaction_0, t, \epsilon)-D(\theta^{k}; \vaction_0, t,\epsilon)\bigr)
$ is ELBO difference's unbiased estimator:
$\mathbb{E}_{t,\epsilon}
[\isestimator]
=
\operatorname{ELBO}^{k}(\vaction_0)
-
\operatorname{ELBO}^{\behavioriteration}(\vaction_0).$
\end{restatable}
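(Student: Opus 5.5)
We will use the standard DDPM decomposition of the ELBO into a sum of per-timestep KL terms, identify which terms depend on $\theta$, and then recognize the resulting sum as a uniform expectation over $t$ scaled by $T$. Concretely, we would first rewrite the ELBO as in \citet{ho2020denoising}:
\begin{align*}
\ELBO_\theta(\vaction_0)
=
\mathbb{E}_{q}\bigl[\log p_\theta(\vaction_0\mid\vaction_1)\bigr]
-\sum_{t=2}^{T}\mathbb{E}_{q}\bigl[\mathrm{KL}\bigl(q(\vaction_{t-1}\mid\vaction_t,\vaction_0)\,\|\,p_\theta(\vaction_{t-1}\mid\vaction_t)\bigr)\bigr]
-\mathrm{KL}\bigl(q(\vaction_T\mid\vaction_0)\,\|\,p_T\bigr).
\end{align*}
The prior term $\mathrm{KL}(q(\vaction_T\mid\vaction_0)\|p_T)$ does not depend on $\theta$. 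Because both policies share the schedule $\{\beta_t\}$, the posterior $q(\vaction_{t-1}\mid\vaction_t,\vaction_0)$ and the reverse variances $\sigma_t^2$ are identical for $\theta^{k}$ and $\theta^{\behavioriteration}$, so this term cancels in the difference.

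The second step handles each Gaussian KL. With $p_\theta(\vaction_{t-1}\mid\vaction_t)=\mathcal N(\mu_\theta(\vaction_t,t),\sigma_t^2 I)$ and $\mu_\theta$ parameterized through the score as in \Cref{eq:diff-sampling}, the KL equals $\frac{1}{2\sigma_t^2}\|\tilde\mu_t(\vaction_t,\vaction_0)-\mu_\theta(\vaction_t,t)\|^2$ plus a $\theta$-independent constant. Writing $\vaction_t=\sqrt{\bar\alpha_t}\vaction_0+\sqrt{1-\bar\alpha_t}\epsilon$ and using $\nabla_{\vaction_t}\log q_{t|0}(\vaction_t\mid\vaction_0)=-\epsilon/\sqrt{1-\bar\alpha_t}$, the mean difference becomes a fixed multiple of $s_\theta(\vaction_t,t)+\epsilon/\sqrt{1-\bar\alpha_t}$. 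The $t$-th term is therefore $C_t\,\mathbb{E}_\epsilon\|s_\theta(\vaction_t,t)+\epsilon/\sqrt{1-\bar\alpha_t}\|^2$ plus a constant, with $C_t$ depending only on the schedule. This is the place where the paper's unspecified constant $C_t$ is pinned down. The $t=1$ decoder term $\log p_\theta(\vaction_0\mid\vaction_1)$ is a Gaussian log-density, so it has the same squared form plus a constant, and we absorb it as the $t=1$ case. Hence $\ELBO_\theta(\vaction_0)=c(\vaction_0)-\sum_{t=1}^T\mathbb{E}_\epsilon[D(\theta;\vaction_0,t,\epsilon)]$, where $c(\vaction_0)$ does not depend on $\theta$.

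Finally, we subtract the two ELBOs. The constant $c(\vaction_0)$ cancels, leaving $\sum_{t=1}^T\mathbb{E}_\epsilon[D(\theta^{\behavioriteration};\cdot)-D(\theta^{k};\cdot)]$. Writing this sum as $T\,\mathbb{E}_{t\sim\mathrm{Unif}}$ gives $\mathbb{E}_{t,\epsilon}[\isestimator]$, which proves the claim by linearity of expectation.

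The main obstacle is the bookkeeping in the second step. We must check that every $\theta$-independent piece (posterior entropies, variance terms, the prior KL) really cancels under the shared-schedule assumption. We must also check that the $t=1$ decoder term and the reparameterization of $\mu_\theta$ via $s_\theta$ give exactly the squared-error form, so that a single $C_t$ works for each $t$. We would fix the mean parameterization $\mu_\theta(\vaction_t,t)=\frac{1}{\sqrt{\alpha_t}}(\vaction_t+\beta_t s_\theta(\vaction_t,t))$ and verify the algebra against $\tilde\mu_t$ directly.
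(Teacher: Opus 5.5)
Your proposal is correct and follows essentially the same route as the paper's proof. Both start from the DDPM ELBO decomposition. Both rewrite $\tilde\mu_t$ in $\epsilon$-form against the score-parameterized mean $\mu_\theta(\vaction_t,t)=\frac{1}{\sqrt{\alpha_t}}(\vaction_t+\beta_t s_\theta(\vaction_t,t))$, which gives $C_t=\beta_t^2/(2\sigma_t^2\alpha_t)$. Both treat the Gaussian decoder at $t=1$ with the same coefficient, cancel the $\theta$-independent constants, and rewrite $\sum_t$ as $T\,\mathbb{E}_{t\sim\mathrm{Unif}}$. The one caveat, shared with the paper, is that the cancellation needs the reverse variances $\sigma_t^2$ (with a fixed $\sigma_1^2>0$ for the decoder) and the terminal prior to be common to both iterates. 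That is an additional assumption, not a consequence of sharing $\{\beta_t\}$ alone.
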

% \begin{proof}
%   The proof is deferred to \Cref{appdx:noise-prediction-form}.
% \end{proof}

 The proof is deferred to \Cref{appdx:noise-prediction-form}. In iteration $\behavioriteration$, when we sample $\vaction_{0} \sim \pi^{\behavioriteration}$, we record $\left[\vaction_{0} , t, \epsilon,D(\theta^{\behavioriteration}; \vaction_0 , t, \epsilon)\right]$ in a buffer, so that we only need to compute $D(\theta^{k}; \vaction_{0} , t, \epsilon)$ in future iteration $k$ to construct $\isestimator$. Drawing sample from $\pi_{\theta^{k}}$ requires $T$ iterative score network calls by \Cref{eq:diff-sampling}, while computing $D(\theta^{k}; \vaction_0 , t, \epsilon)$ only requires one. Hence, our method reduces much sampling cost. We plug $\exp(\widehat{\Delta}^{k, k^{(\ell)}})$ back to \Cref{eq:opponent-replay-is} to enable reusing opponent buffers $\Pi^{-i, k}$.

\subsection{Self-Action Importance Sampling Stabilizes Training}\label{subsec:stailizer}
  \begin{wrapfigure}{r}{0.35\textwidth}
    \centering
    \includegraphics[width=\linewidth]{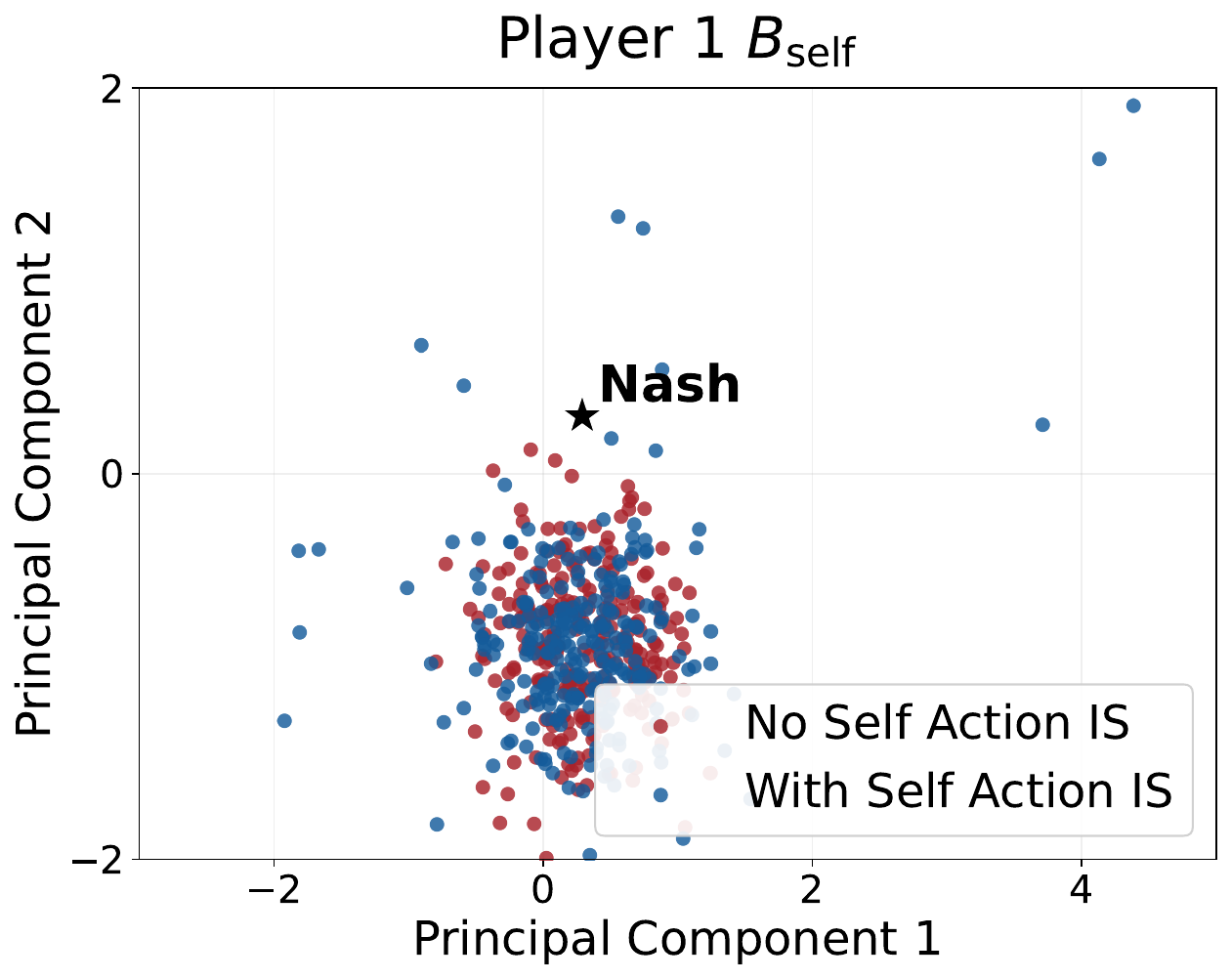}
    \caption{We use PCA to visualize the self-action buffer at iteration $101$ for Cournot $d=30$. }
    \label{fig:is_better}
\end{wrapfigure}
 
% Recall that implementing \Cref{eq:OMD-loss} requires
% $\vaction_0\sim\pi^{i,k}$. For a fixed $\vaction_0$, define the conditional loss
% $
% G_\theta^i(\vaction_0,t)
% :=
% \mathbb{E}_{\vaction_t\sim q_{t|0}(\cdot\mid \vaction_0)}
% \left[
% \exp\left(\frac{1}{\eta}u^i(\vaction_0;\pi^{-i,k})\right)
% \left\|
% s_{\theta_i}(\vaction_t,t)
% -
% \nabla_{\vaction_t}\log p_{t,\mathrm{OMD}}^{i,k+1}(\vaction_t)
% \right\|_2^2
% \right].
% $  Since we maintain a self-action replay buffer $\Pi^{i,k}$, we can reuse
% past actions through importance sampling instead of relying exclusively
% on fresh samples from the current policy. Let
% $B_{\mathrm{IS}}
% :=
% \left\lceil \zeta B_{\mathrm{self}}\right\rceil,
% \qquad
% B_{\mathrm{fresh}}
% :=
% B_{\mathrm{self}}-B_{\mathrm{IS}}.
% $
% For each replay sample $\ell\in\{1,\ldots,B_{\mathrm{IS}}\}$, let
% $b_\ell<k$ denote the iteration at which that sample was generated, so that
% $
% \widetilde \vaction_0^{(\ell)}\sim\pi^{i,b_\ell}.$ Hence, we have the following unbiased estimator for \Cref{eq:OMD-loss}.
% \begin{equation}
%     \widehat{\mathcal L}^{\mathrm{OMD}}(\theta;t)
% =
% \frac{1}{B_{\mathrm{self}}}
% \left[
% \sum_{\ell=1}^{B_{\mathrm{IS}}}
% r^{i,k, b^l}
% G_\theta^i\!\left(\widetilde \vaction_0^{(\ell)},t\right)
% +
% \sum_{j=1}^{B_{\mathrm{fresh}}}
% G_\theta^i\!\left(\vaction_0^{(j)},t\right)
% \right].
% \end{equation}

% \newpage  

Recall evaluating the OMD loss \Cref{eq:OMD-loss} requires samples from a player's self action $\pi^{i,k}$. Since we are maintaining replay buffers for both players, we can also use replay samples for a player's own action, not just opponent action. Denote a player's self action batch $\selfbatch$. Instead of always getting fresh samples from $\pi^{i,k}$, we use $\left\lceil \zeta |B_{\mathrm{self}}|\right\rceil$ IS samples with the rest from $\pi^{i,k}$. Similar to opponent buffer reuse, we use the IS weight to correct distribution shift induced by replay samples. We defer the full algorithm details to Appendix~\ref{app:full-algorithm}.

% $\selfis = \left\lceil \zeta B_{\mathrm{self}} \right\rceil$ IS samples and sample the rest $\selffresh = |\selfbatch|-\selfis$ from $\pi^{i,k}$. We use the following unbiased estimator  for \Cref{eq:OMD-loss}: $\frac{1}{\selfis}
% \sum_{l=1}^{\selfis}\isweight G_\theta^i(\vaction_{(\ell)}^{i,k^{(\ell)}}, t) + \frac{1}{\selffresh}\sum_{j=1}^{\selffresh}G_\theta^i(\vaction_{(j)}^{i,k}, t)$

% \begin{align*}
% \frac{1}{\selfis}
% \sum_{l=1}^{\selfis}\isweight G_\theta^i(\vaction_{(\ell)}^{i,k^{(\ell)}}, t) + \frac{1}{\selffresh}\sum_{j=1}^{\selffresh}G_\theta^i(\vaction_{(j)}^{i,k}, t),
% \end{align*}

We find that adding importance-weighted replay stabilizes training and prevents premature plateaus. The policy starts concentrate on a narrow set of high-utility actions as it approaches equilibrium, limiting the coverage of fresh samples. Replaying diverse past actions broadens the training distribution and prevents the policy from overfitting to a narrow region. We visualize the impact of replay mechanism in \Cref{fig:is_better}, where self action IS significantly improves coverage. We introduce an additional hyperparameter called trigger \(\kappa\), and we initialize the self-action IS once standard deviation of $\selfbatch$ falls below a fraction \(\kappa\) of its initial value. We analyze the impact of the self action IS mechanism in \Cref{sec:ablation}.

\section{Related Work}
\textbf{Equilibrium Computation.} We provide a comprehensive review in \Cref{appdx:additional-related-work} and focus here on learning-based approaches for continuous action spaces. Prior work learns generative approximate best responses \citep{kamra2019deep}, trains randomized policy networks with zeroth-order equilibrium dynamics \citep{ijcai2023p317}, or minimizes exploitability using learned best responses \citep{martin2025approxed}. In contrast, we parameterize each player's evolving mixed strategy with a diffusion model to capture more expressive distributions.

% Finite two-player zero-sum games admit linear-programming formulations \citep{dantzig1951proof}, while equilibrium computation in finite general-sum games is PPAD-complete \citep{chen2009settling}. Continuous action games remain challenging because mixed strategies are distributions over infinite action spaces and best-response computation may require difficult continuous optimizaition. Prior work establishes continuous-action variants of fictitious-play \citep{perkins2014stochastic} and develops strategy-generaion methods for zero-sum and general-sum continuous games \citep{adam2021double, kroupa2023multiple}. Learning-based approaches include DeepFP, which learns generative approximate best responses \citep{kamra2019deep}, randomized policy networks trained with zeroth-order equilibrium dynamics \citep{ijcai2023p317}, and exploitability minimization with learned best responses \citep{martin2025approxed} . Our approach instead parameterizes each player's evolving mixed strategy with a diffusion model and trains it using OMD based equilibrium updates~\citep{rakhlin2013optimization}.

\textbf{Diffusion Models And Its Application in Online RL} Diffusion models learn complex, multimodal distributions through iterative denoising and score matching \citep{ho2020denoising, song2021score}. Their ability to model complex distributions motivated their use as policy parameterizations in RL \citep{wang2024diffusion, psenka2024learning, ren2025diffusion}.  Most closely related to ours, \citet{ma2025efficient} show that exponential reweighting of samples from the current diffusion policy is sufficient to implement
a KL-regularized policy-improvement step. We adapt the reweighting technique and address it in the equilibrium computation setting. We give a comprehensive review of the field in \Cref{appdx:additional-related-work}.

\textbf{Diffusion Applications in Strategic Environment} To our knowledge, the only direct application of diffusion to our setting is \citet{karthikeyan2025difffp}, which we include as a baseline and discuss in \Cref{sec:experiments}. We discuss diffusion model applications in other related strategic settings in \Cref{appdx:additional-related-work}.

% \textcolor{red}{Kai: literature about zero-sum and general-sum games. Why zero-sum with continuous action space is still hard?}

% \textbf{Diffusion and Flow-Based Models for Online RL} Diffusion and flow-based policies have been widely studied for RL \citep{wang2024diffusion, psenka2024learning, ma2025efficient, ren2025diffusion}. Most closely related to our policy update, \citet{ma2025efficient} show that exponential reweighting of samples from the current diffusion policy implements
% a KL-regularized policy-improvement step, yielding Diffusion Policy Mirror Descent.

% Related work also applies expressive diffusion policies in cooperative multi-agent RL~\citep{li2026diffusing}, while flow-matching policies have been developed for online continuous and combinatorial control \citep{park2025flow, zhang2025reinflow, mcallister2026flow, kong2026latent}. These methods optimize return or coordination rather than equilibrium against strategic opponents. In our setting, policy-improvement weights depend on utility against an evolving opponent. We additionally use ELBO differences as approximate density ratios to reuse samples across strategic iterates.
\section{Experiments}\label{sec:experiments}
Beyond the Cournot game, we evaluate \textbf{DiffNash} on a forest security game and a continuous variant of Colonel Blotto, two widely studied games that also admit intuitive visualizations.

\textbf{Metrics and Baselines:}
We use Nash Gap (\Cref{def:nash-gap}) as our main evaluation metric. The best response optimization required in Nash Gap cannot be computed exactly. Following \citet{kamra2019deep}, we use an evolutionary search that repeatedly retains high-utility candidates and perturbs them with Gaussian noise to approximate the Nash Gap. We defer computation details to \Cref{appdx:nash-gap-computation}. 

We compare against three baselines. \textbf{DeepFP} \citep{kamra2019deep} represents each player's strategy in FP with a neural generator $\mathcal T_\theta(\vsample)$, where $\vsample \sim \mathcal N(0,\mathbf{I})$. \textbf{Neural OMD} adapts the mirror descent policy optimization framework of \citet{tomar2020mirror} to equilibrium computation, using a Gaussian neural policy for each player. \textbf{DiffFP} \citep{karthikeyan2025difffp} parameterizes player strategies with diffusion models within FP. Since FP updates do not directly provide samples from the target strategy, DiffFP instead refines sampled actions $\vaction^i$ toward higher-utility regions using $\nabla_{\vaction^i}u^i(\vaction^i;\Pi^{-i,k})$ and trains the diffusion policy on the refined actions. These baselines help isolate the benefits of diffusion policy and principled policy updates. For fair comparison, we train each player’s policy for $40K$ gradient updates across experiments. We use default hyperparameters for baselines.
    
% \begin{itemize}
%     \item \textbf{DeepFP}: \citet{kamra2019deep} uses a neural network $f_\theta(\vsample), \vsample \sim \mathcal N(0,I)$ to represent player policy in FP.

%     \item \textbf{Neural OMD}: \citet{tomar2020mirror} uses a neural network to parameterize a Gaussian policy and performs mirror-descent policy updates in RL. We adapt it to equilibrium computation while retaining its Gaussian parameterization for each player’s policy.
    
%     \item \textbf{DiffFP}: \citet{karthikeyan2025difffp} use diffusion model to parameterize player strategies within FP. Because the FP update cannot be sampled to train diffusion, they adopt a heuristic procedure: sample initial actions $\vaction^i$, guide them toward high-utility regions using $\nabla_{\vaction^i}u^i(\vaction^i;\Pi^{-i,k})$, and train a diffusion model on the resulting actions.
% \end{itemize}

\subsection{Forest Security Game}
\label{sec: forest results}
% We evaluate DiffNash on the continuous forest security game used by \citet{kamra2018policy} and DeepFP \citep{kamra2019deep}. The underlying spatial enforcement problem originates in the forest economics literature \citep{albers2010spatial} and was later studied as a computational security game by \citep{johnson2012patrol}.
Forest security game \citep{johnson2012patrol} originated from forest economics \citep{albers2010spatial}. We consider the continuous version studied by \citet{kamra2018policy} and \textbf{DeepFP} \citep{kamra2019deep}. A defender selects $M$ locations to patrol, while an attacker selects $N$ locations to target in an endangered circular forest. Each location is parameterized by its radial distance and radian angle. Hence, defender and attacker's action dimensionality are $2M$ and $2N$, respectively. As $M$ and $N$ increase, the resulting strategy space becomes high-dimensional, a setting where \citet{kamra2019deep} reports degraded performance. An attacker can collect trees within $R_A$ of its location, and a defender can stop any attacker within $R_D$ of them. The attacker's utility is the value of the harvested trees but receives no value and incurs a penalty if caught by any defender. The defender’s utility is the negative of the attacker’s utility. Across experiments, we keep the interaction and payoff structure fixed and vary only the spatial distribution of trees along with the number of defenders and attackers. Changing the geometry creates qualitatively different optimal strategies by altering which attack locations are valuable, which regions can be protected together, and how many spatially separated patches the strategy must cover. In this game, we aim to find a robust patrol strategy for defender, so besides Nash gap, we also compute $\mathrm{Exploit_D}$, measured as the attacker's best-response reward against the learned average defender. We seek strategies profiles with low $\mathrm{Exploit_D}$ and Low Nash Gap $\mathrm{GAP}$.

% \paragraph{Experimental setup.}
% We compare DiffNash against three baselines chosen to isloate its main components. \textbf{DeepFP} \citep{kamra2019deep} uses generative neural networks to represent approximate best responses within fictitious play, but can struggle to represent sharply multimodal strategies. \textbf{DiffFP} \citep{karthikeyan2025difffp} introduces diffusion policies, but retains fictitious-play updates and local $Q$-guided sampling, which can concentrate on locally discovered modes. We introduce \textbf{Neural OMD}, which uses the same OMD training framework as DiffNash, but replaces the diffusion policy with a squashed diagonal Gaussian. Together, these baselines isolate the effects of both the expressive diffusion policy and the OMD update step. We report the defender exploitability $\epsilon_D^{\mathrm{BR}}$, measured as the attacker's best-response reward against the learned average defender, together with the Nash gap $G$; lower is better for both. Following DeepFP, best responses are approximated with an evolutionary search that repeatedly retains high-utility candidates and perturbs them with Gaussian noise.

\textbf{Experimental results.}
Table~\ref{tab:forest-results} shows that DiffNash achieves the best mean performance across all six forests, reducing Nash gap by an average of $\textbf{35.6\%}$ relative to the strongest baseline. The gap is most pronounced in heterogeneous, multi-resource settings, where equilibrium strategies must distribute probability mass across several strategically distinct regions. DiffNash also achieves broader effective coverage. In F3 of Figure~\ref{fig:forest}, for example, \textbf{DiffNash}'s guards (blue) concentrate around all four tree clusters (green), whereas DeepFP and DiffFP learn more diffuse patrol strategies and Neural OMD places substantial mass in low-tree-density regions. Importantly, diffuse coverage is not necessarily effective coverage: allocating guard probability to regions with few or no trees provides little protection value. We analyze the failure modes of all baselines:
\begin{itemize}
    \item \textbf{DeepFP/NeuralOMD}: Both exhibit mode collapse. In F2 of \Cref{fig:forest}, \textbf{Neural OMD} concentrates on a single forest patch. For \textbf{DeepFP}, \Cref{fig:forest} and additional Appendix \ref{forest-vis} visualizations show both players' strategies forming narrow circular bands. Since actions are represented in polar coordinates, this pattern indicates collapse along the radial dimension.
    \item \textbf{DiffFP}: In large forests, small changes in guard locations are unlikely to change whether attackers are captured. Consequently, the defender utility gradient, i.e., $\nabla_{\vaction^i}u^i(\vaction^i;\Pi^{-i,k})$, is nearly zero for most locations, causing \textbf{DiffFP}’s defender strategy to get stuck, as shown by \Cref{fig:forest}'s F2. In contrast, \textbf{DiffNash} does not rely on local utility gradients. We update the entire defender policy in a principled way, yielding substantially better performance.
\end{itemize}

% illustrates this behavior on two Forest structures. DeepFP often remains overly diffuse, Neural OMD collapses towards a small number of modes, and DiffFP concentrates around locally discovered regions. DiffNash more consistently maintains probability mass across the relevant forest patches, which translates into lower exploitability and Nash gap. Complete visualizations for all six environments are provided in Figure \ref{fig:forest-all} in Appendix \ref{forest-vis}. Appendix \ref{app:general-sum} further evaluates DiffNash on a general-sum variant of the forest game, where the two players have distinct payoff functions.

\begin{table*}[t]
    \centering
    \caption{
        \textbf{Forest security game results.}
        % Comparison with DeepFP \citep{kamra2019deep}, DiffFP
        % \citep{karthikeyan2025difffp}, and Neural OMD, a controlled OMD baseline introduced in this work. 
        We report the attacker best-response reward $\text{Exploit}_D$ against the learned average defender and the two-sided Nash gap $G$ as mean $\pm$ 95\% Student-$t$ confidence intervals over $5$ random seeds; lower is better for both. F1 and F2 are taken from \citet{kamra2019deep}.
    }
    \label{tab:forest-results}

    \setlength{\tabcolsep}{3.5pt}
    \renewcommand{\arraystretch}{1.15}
    \scriptsize

    \resizebox{\textwidth}{!}{
    \begin{tabular}{ll cc cc cc cc}
        \toprule
        & &
        \multicolumn{2}{c}{\textbf{DeepFP}} &
        \multicolumn{2}{c}{\textbf{Neural OMD}} &
        \multicolumn{2}{c}{\textbf{DiffFP}} &
        \multicolumn{2}{c}{\textbf{DiffNash (Ours)}} \\
        \cmidrule(lr){3-4}
        \cmidrule(lr){5-6}
        \cmidrule(lr){7-8}
        \cmidrule(lr){9-10}

        \textbf{Forest} &
        \textbf{$m=n$} &
        $\text{Exploit}_D$ & $\text{Gap}$ &
        $\text{Exploit}_D$ & $\text{Gap}$ &
        $\text{Exploit}_D$ & $\text{Gap}$ &
        $\text{Exploit}_D$ & $\text{Gap}$ \\

        \midrule

        \textsc{F1}
        & 1
        & $244.5 \pm 164.2$
        & $245.8 \pm 160.6$
        & $212.6 \pm 160.4$
        & $216.6 \pm 160.2$
        & $490.2 \pm 3.4$
        & $493.0 \pm 3.6$
        & $\mathbf{96.7 \pm 21.2}$
        & $\mathbf{79.5 \pm 28.8}$ \\

        \textsc{F1}
        & 3
        & $370.5 \pm 309.2$
        & $373.3 \pm 297.8$
        & $307.8 \pm 287.6$
        & $314.5 \pm 290.0$
        & $735.8 \pm 40.8$
        & $741.3 \pm 41.0$
        & $\mathbf{130.4 \pm 9.3}$
        & $\mathbf{122.1 \pm 12.5}$ \\

        \midrule

        F2
        & 1
        & $338.6 \pm 6.5$
        & $330.3 \pm 5.8$
        & $309.7 \pm 28.3$
        & $313.7 \pm 28.4$
        & $354.9 \pm 0.9$
        & $357.8 \pm 0.7$
        & $\mathbf{300.8 \pm 5.2}$
        & $\mathbf{264.4 \pm 95.5}$ \\

        F2
        & 3
        & $702.1 \pm 84.3$
        & $687.4 \pm 78.5$
        & $501.0 \pm 179.6$
        & $500.1 \pm 177.2$
        & $817.3 \pm 101.5$
        & $821.6 \pm 101.3$
        & $\mathbf{322.5 \pm 110.1}$
        & $\mathbf{300.7 \pm 126.4}$ \\

        \midrule

        \textsc{F3}
        & 5
        & $578.8 \pm 109.4$
        & $565.7 \pm 119.7$
        & $767.1 \pm 278.4$
        & $761.1 \pm 271.1$
        & $818.5 \pm 123.7$
        & $800.9 \pm 121.3$
        & $\mathbf{559.7 \pm 76.4}$
        & $\mathbf{528.2 \pm 142.7}$ \\

        \textsc{F4}
        & 5
        & $804.4 \pm 118.5$
        & $776.6 \pm 117.4$
        & $1145.9 \pm 191.6$
        & $1140.9 \pm 187.3$
        & $836.4 \pm 114.7$
        & $822.7 \pm 112.8$
        & $\mathbf{660.4 \pm 39.8}$
        & $\mathbf{622.6 \pm 52.3}$ \\
        \bottomrule
    \end{tabular}
    }
\end{table*}

\begin{figure*}[h]
    \centering
    \includegraphics[width=0.7\textwidth]
        {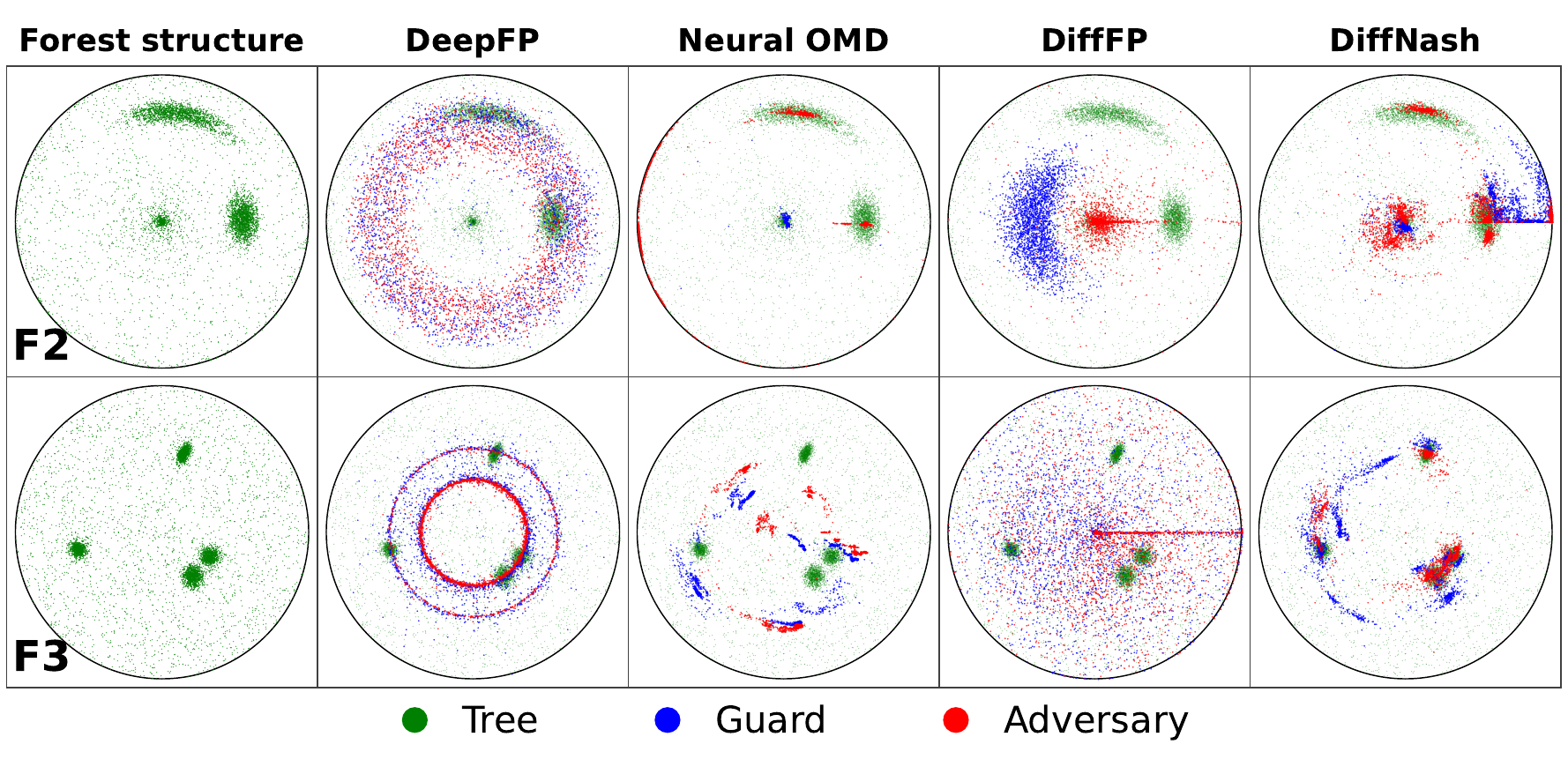}
    \caption{
        \textbf{Empirical strategies in the forest security game} \textit{Top:} F2 $(m = n = 3)$. \textit{Bottom:} F3 $(m = n = 5)$. Green points denote trees. Each blue/red points denote a sample from defender/attacker policy, respectively. For each method and environment, the displayed strategy is taken from $5$ random seeds whose Nash gap is the median. Quantitative results are in Table \ref{tab:forest-results}.
    }
    \label{fig:forest}
\end{figure*}

\subsection{Continuous Colonel Blotto}
\label{sec:results-blotto}

We evaluate \textbf{DiffNash} on the continuous Colonel Blotto game \citep{adam2021double} where two players simultaneously allocate a fixed budget across $n$ weighted battlefields. Player's utility depends on how much they and opponent invest in each battlefield. Because allocations are coupled by the budget constraint, equilibrium play generally requires nontrivial randomization over a continuous action space. We consider games with $n=6$ and $n=10$. \citet{adam2021double} provides a discrete algorithm which approximates the target equilibrium well \footnote{Such oracle-based methods do not directly extend to general continuous-action games without a tractable best-response oracle; see Appendix~\ref{app:blotto-implementation}.}, visualized in the first column of \Cref{fig:blotto6}. We vary player utilty function to get distinct target equilibrium, one with symmetric and another with radial shape. Full game details are provided in Appendix \ref{app:blotto}. 

\begin{wraptable}{r}{0.55\textwidth}
    \caption{
        \textbf{Continuous Colonel Blotto results.}
        Nash gap $G$ on symmetric and radial-band Blotto with $n\in\{6,10\}$.
        Values are mean $\pm$ 95\% Student-$t$ confidence intervals over five seeds; lower is better.
    }
    \label{tab:blotto-results}

    \setlength{\tabcolsep}{3.5pt}
    \renewcommand{\arraystretch}{1.15}
    \scriptsize

    \resizebox{\linewidth}{!}{%
    \begin{tabular}{llcccc}
        \toprule
        $\boldsymbol{n}$ &
        \textbf{Equilibrium} &
        \textbf{DeepFP} &
        \textbf{Neural OMD} &
        \textbf{DiffFP} &
        \textbf{DiffNash (Ours)} \\
        \midrule
        6 & Symmetric
        & $6.854 \pm 1.366$
        & $6.807 \pm 1.832$
        & $\mathbf{0.919 \pm 0.065}$
        & $0.942 \pm 0.168$ \\

        6 & Radial band
        & $0.688 \pm 0.001$
        & $0.836 \pm 0.236$
        & $1.038 \pm 0.056$
        & $\mathbf{0.501 \pm 0.139}$ \\
        \midrule
        10 & Symmetric
        & $9.525 \pm 1.707$
        & $7.233 \pm 1.304$
        & $\mathbf{1.506 \pm 0.056}$
        & $1.724 \pm 0.346$ \\

        10 & Radial band
        & $0.696 \pm 0.002$
        & $0.793 \pm 0.318$
        & $1.446 \pm 0.037$
        & $\mathbf{0.531 \pm 0.116}$ \\
        \bottomrule
    \end{tabular}%
    }
\end{wraptable}
\textbf{Experimental results.} Overall \textbf{DiffNash} reduces the Nash Gap by $\textbf{24.7\%}$ relative to the strongest baseline. In the symmetric setting of \Cref{tab:blotto-results}, diffusion-based policies achieve substantially lower Nash gaps than the neural baselines, highlighting their ability to represent multimodal strategies. In the radial-band setting, \textbf{DiffNash} consistently outperforms \textbf{DiffFP}. As in the forest game, when high-utility actions lie within a narrow region (the radial band), \textbf{DiffFP} can receive little local gradient signal from low-utility initializations. In \Cref{fig:blotto6}, this causes its policy to remain concentrated near the center, outside the support of the target equilibrium, whereas \textbf{DiffNash} recovers the correct equilibrium geometry. We visualize all settings in Appendix \Cref{fig:blotto-all}.

\begin{figure*}[h]
    \centering
    \includegraphics[width=0.7\textwidth]
        {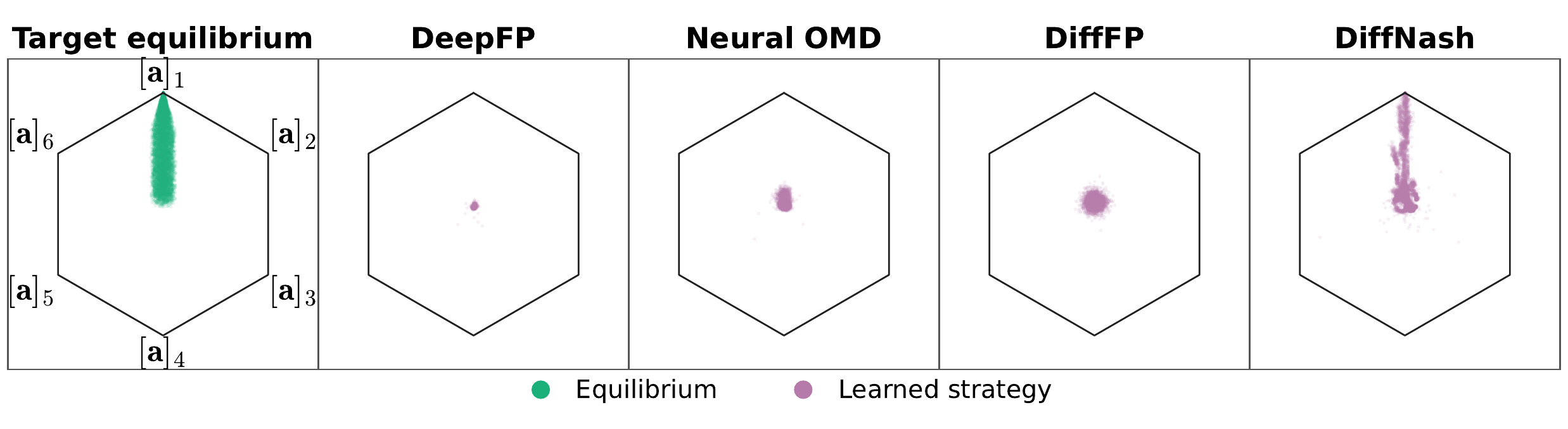}
    \caption{
       Continuous Colonel Blotto (radial-band variant,$n=6$).
       Green denotes samples from the target equilibrium, while purple denotes samples from the learned strategy. Each point represents an allocation, where proximity to $[\vaction]_j$ indicates a larger allocation to the $j$-th battlefield. The point cloud visualizes the distribution of allocations induced by each strategy.
    }
    \label{fig:blotto6}
\end{figure*}
% \begin{figure*}[h]
%     \centering
%     \includegraphics[width=0.7\textwidth]
%         {figures/blotto6.pdf}
%     \captionsetup{width=0.7\textwidth}
%     \caption{
%         \textbf{Continuous Colonel Blotto (radial-band variant, $n=6$).}
%         Green denotes samples from the target equilibrium, while purple denotes samples from the learned strategy. Each point represents an allocation, where proximity to $[\vaction]_j$ indicates a larger allocation to the $j$-th battlefield. The point cloud visualizes the distribution of allocations induced by each strategy.
%     }
%     \label{fig:blotto6}
% \end{figure*}
\begin{wrapfigure}{r}{0.35\columnwidth}
    \centering
   \includegraphics[width=\linewidth]{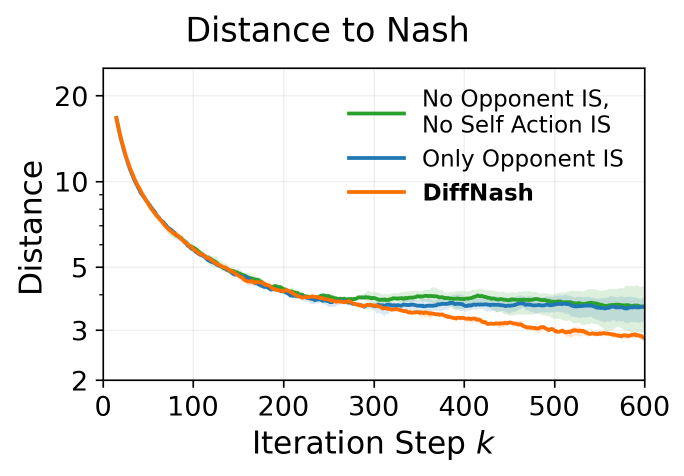}
     \caption{Performance under different IS configurations. Solid lines denote mean, and shades denote the min to max range across runs.}
    \label{fig:cournot-ablation}
\end{wrapfigure}
\textbf{Robustness} Across experiments in \Cref{sec: forest results} and \Cref{sec:results-blotto} we perform minimal hyperparameter tuning: we fix self action ratio $\zeta = 1\%$ and only choose IS trigger $\kappa =\{0,10\%\}$. \textbf{DiffNash} shows strong robustness to hyperparameters.

\subsection{Ablation and Hyperparameter Study}
\label{sec:ablation}

% We ablate the two importance-sampling components of DiffNash on Cournot $d=30$. With Opponent IS, we roughly reduces score network calls to $66\%$ of the No opponent ones. Moreover, equipped with self action IS, \textbf{DiffNash} prevents the early platanue of other baselines without self action IS and shows less variance across runs. A similar pattern has been observed in a forest ablation, which we provide in 
% Appendix~\ref{app:forest-ablation}. Together they justify our IS strategy for both opponent and self action IS. 

\textbf{Ablation} We ablate the two importance-sampling components of \textbf{DiffNash} on Cournot ($d=30$). In \Cref{fig:cournot-ablation}, the green and blue curves almost overlap, indicating that opponent IS has little effect on convergence. Its primary benefit is computational: in this experiment, opponent IS reduces the number of score-network calls to roughly $66\%$ of that required without opponent IS. Moreover, \textbf{DiffNash} (orange) achieves the lowest Nash Gap, 
showing that self action IS prevents the early plateau observed without self action IS and reduces variance across runs. We observe a similar pattern in the forest ablation in Appendix~\ref{app:forest-ablation}. These results support using IS for both opponent and self-action replay.

% DiffNash uses both opponent and own-action IS,
% Only Opponent IS removes own-action replay, and No Opponent IS, No Self Action
% IS removes both. Own-action IS activates when the self-action standard
% deviation falls below $\kappa=10\%$ of its initial value, which occurs around
% $k=100$ in Figure~\ref{fig:cournot-ablation}. After activation, DiffNash
% continues reducing its distance to Nash while both ablated variants plateau,
% supporting the role of own-action replay as a training stabilization
% mechanism. A corresponding forest ablation is provided in
% Appendix~\ref{app:forest-ablation}.

We additionally show our method's performance in a general sum version of the Forest game in \Cref{app:general-sum}. 

\textbf{Hyperparameter Study}
We provide hyperparameter study for self action IS ratio $\zeta$, trigger $\kappa$, self action buffer size and opponent action buffer size in Appendix~\ref{app:hyperparameter-sweep}.

% (Table~\ref{tab:hyperparameter-sweep} and Figure~\ref{fig:is-ratio-sweep-app}).

\section{Conclusion}
We develop \textbf{DiffNash}, a framework that parameterizes player strategies with diffusion models within OMD to compute Nash equilibria in high-dimensional continuous action spaces. We derive a reweighted score-matching objective that trains the diffusion model to realize OMD updates. To improve sample efficiency and training stability, we reuse past actions through a replay buffer and correct the resulting distribution shift with an ELBO-based importance-sampling strategy. On forest security and continuous Colonel Blotto games, \textbf{DiffNash} better recovers equilibrium distributions and reduces Nash gap by $\textbf{35.6\%}$ and $\textbf{24.7\%}$ compared to strongest baselines, respectively.
\newpage

\section*{AI Usage Disclosure}
In this work, we used generative AI tools (GPT-5.6 Sol, GPT-6 Sol, and Claude Opus 5) for polishing and drafting sections of the manuscript, flowcharts, and assists with coding and proofs. Generating synthetic data is not applicable to this work. Additionally, we used generative AI tools for brainstorming research ideas and searching related work. At least one coauthor has reviewed AI-assisted work. We manually check the generated code and proof. The main technical contributions, including reweighted score matching objective and importance sampling strategy, are the authors' original ideas.  We take responsibility for the final content of this work, including text, claims, and artifacts produced with the aid of generative AI.

\section*{Ethics Statement}

This work studies algorithms for computing Nash equilibria in simulated continuous-action games and does not involve human subjects, personal or sensitive data, or real-world interventions. While equilibrium-computation methods may be applicable to domains such as security and resource allocation, all experiments in this work use synthetic benchmark environments and do not involve deployed decision-making systems. We are not aware of additional ethical concerns arising from this work.

\section*{Reproducibility Statement}

We provide complete pseudocode for DiffNash and describe the training procedure, replay mechanism, game definitions, environment parameters, reference solutions, hyperparameters, best-response and Nash-gap evaluation procedures, ablations, and computing infrastructure in the appendix. These details are intended to make our method and reported experiments reproducible from the paper. Some baseline components and environment code, including the original Forest Security Game and DeepFP implementations, were not publicly available and were obtained directly from the original authors. Reported results are averaged over multiple random seeds with confidence intervals, and proofs of all theoretical results are included.

% --------------------------------------------------
% Bibliography
% --------------------------------------------------
\bibliographystyle{plainnat}
\bibliography{references}

% --------------------------------------------------
% Appendix
% --------------------------------------------------
\newpage
\appendix
\appendix
\onecolumn
\begin{center}
	{\Large \textbf{Appendix for DiffNash: Find Nash Equilibrium in High Dimensional Continuous Action Spaces with Diffusion Policies}}
\end{center}
\startcontents[sections]
\printcontents[sections]{l}{1}{\setcounter{tocdepth}{2}}

\section{Limitations}

% Our results suggest several directions for future work. Although replay substantially reduces the need to generate fresh actions, diffusion sampling still requires iterative denoising. Alternative expressive generative policies, such as flow-based models could further reduce the computational cost of equilibrium computation.

We focus on two-player normal-form games. Extending DiffNash to multiplayer or sequential settings, is a natural direction for future work.
\section{Additional Related Work}\label{appdx:additional-related-work}

% \textbf{Diffusion Models} Diffusion models learn complex, multimodal distributions through iterative denoising and score matching \citep{ho2020denoising, song2021score}. Their ability to represent complex distributions has motivated their use as expressive policies in sequential decision making. We use diffusion models to represent mixed strategies over continuous actions.

\textbf{Equilibrium Computation} Finite two-player zero-sum games admit linear-programming formulations \citep{dantzig1951proof}, while equilibrium computation in finite general-sum games is PPAD-complete \citep{chen2009settling}. Continuous action games remain challenging because mixed strategies are distributions over infinite action spaces and best-response computation may require difficult continuous optimizaition. Prior work establishes continuous-action variants of fictitious-play \citep{perkins2014stochastic} and develops strategy-generaion methods for zero-sum and general-sum continuous games \citep{adam2021double, kroupa2023multiple}. Learning-based approaches include DeepFP, which learns generative approximate best responses \citep{kamra2019deep}, randomized policy networks trained with zeroth-order equilibrium dynamics \citep{ijcai2023p317}, and exploitability minimization with learned best responses \citep{martin2025approxed} . Our approach instead parameterizes each player's evolving mixed strategy with a diffusion model and trains it using FP and OMD based equilibrium updates~\citep{rakhlin2013optimization}.
% \textcolor{red}{Kai: literature about zero-sum and general-sum games. Why zero-sum with continuous action space is still hard?}

\textbf{Diffusion and Flow-Based Models for Online RL} Diffusion and flow-based policies have been widely studied for RL \citep{wang2024diffusion, psenka2024learning, ma2025efficient, ren2025diffusion}. Most closely related to our policy update, \citet{ma2025efficient} show that exponential reweighting of samples from the current diffusion policy implements
a KL-regularized policy-improvement step, yielding Diffusion Policy Mirror Descent.

Related work also applies expressive diffusion policies in cooperative multi-agent RL~\citep{li2026diffusing}, while flow-matching policies have been developed for online continuous and combinatorial control \citep{park2025flow, zhang2025reinflow, mcallister2026flow, kong2026latent}. These methods optimize return or coordination rather than equilibrium against strategic opponents. In our setting, policy-improvement weights depend on utility against an evolving opponent. We additionally use ELBO differences as approximate density ratios to reuse samples across strategic iterates.

\paragraph{Diffusion Applications in Strategic Environment} 
Diffusion models and their variants, such as flow matching \citep{lipman2023flow}, have been applied to strategic settings adjacent to equilibrium computation. For auction theory, \citet{wang2026bundleflow} uses flow matching to represent bundle distributions in combinatorial auctions, achieving higher revenue while scaling to larger numbers of items. For green security game, when opponent data is available, diffusion and flow-based models can also  be used to learn opponent behavior. \citet{kong2025robust} uses diffusion models to learn poacher behavior in green security games and design robust patrol strategies. \citet{kong2026generative} uses flow matching to model poacher behavior from real-world data, and \citet{kong2026composite} subsequently uses reinforcement learning to plan patrol policies against the learned poacher behavioral model. Unlike these works, we assume no prior knowledge of opponent behavior and instead seek a Nash strategy, which provides a worst-case guarantee in 2p0s games.
\section{Training Algorithm}\label{app:full-algorithm}

We provide the full pseudocode for DiffNash in
Algorithm~\ref{alg:replay-omd}. Algorithm~\ref{alg:prepare-sample}
constructs replay entries, Algorithm~\ref{alg:compute-weight} computes
their ELBO-based importance weights, and Algorithm~\ref{alg:replay-omd}
gives the complete OMD training procedure. Self-action replay is
activated only if the specified trigger fires; otherwise, training
continues using fresh self-actions.

\begin{algorithm}[ht]
\caption{\textsc{PrepareSample}}
\label{alg:prepare-sample}
\begin{algorithmic}[1]
\Require Action $\vaction_0$, behavior parameters $\theta^{\behavioriteration}$, and diffusion schedule
$\{\alpha_t,\bar{\alpha}_t,\beta_t,\sigma_t^2\}_{t=1}^{T}$.
\Ensure Replay entry $z=(\vaction_0,t,\epsilon,D^{\behavioriteration})$.

\State Sample
\[
t\sim\operatorname{Unif}\{1,\ldots,T\},
\qquad
\epsilon\sim\mathcal N(0,I).
\]

\State Construct
\[
\vaction_t
\leftarrow
\sqrt{\bar{\alpha}_t}\vaction_0
+
\sqrt{1-\bar{\alpha}_t}\epsilon.
\]

\State Compute
\[
D^{\behavioriteration}
\leftarrow
\frac{\beta_t^2}{2\sigma_t^2\alpha_t}
\left\|
s_{\theta^{\behavioriteration}}(\vaction_t,t)
+
\frac{\epsilon}{\sqrt{1-\bar{\alpha}_t}}
\right\|_2^2.
\]

\State \Return $z\leftarrow(\vaction_0,t,\epsilon,D^{\behavioriteration})$.
\end{algorithmic}
\end{algorithm}

\begin{algorithm}[ht]
\caption{\textsc{ComputeWeight}}
\label{alg:compute-weight}
\begin{algorithmic}[1]
\Require Current parameters $\theta^k$, replay entry
$z=(\vaction_0,t,\epsilon,D^{\behavioriteration})$ generated at iteration $\behavioriteration$,
diffusion schedule, and maximum log-weight $C>0$.
\Ensure Clipped importance weight $r^{k,\behavioriteration}\le e^C$.

\State Reconstruct
\[
\vaction_t
\leftarrow
\sqrt{\bar{\alpha}_t}\vaction_0
+
\sqrt{1-\bar{\alpha}_t}\epsilon.
\]

\State Compute
\[
D^k
\leftarrow
\frac{\beta_t^2}
{2\sigma_t^2\alpha_t}
\left\|
s_{\theta^k}(\vaction_t,t)
+
\frac{\epsilon}{\sqrt{1-\bar{\alpha}_t}}
\right\|_2^2.
\]

\State Compute
\[
\widehat{\Delta}^{k,\behavioriteration}
\leftarrow
T(D^{\behavioriteration}-D^k).
\]

\State \Return
\[
r^{k,\behavioriteration}
\leftarrow
\exp\!\left(
\min\{\widehat{\Delta}^{k,\behavioriteration},C\}
\right).
\]
\end{algorithmic}
\end{algorithm}

\begin{algorithm}[ht]
\caption{Replay-Buffer Diffusion Online Mirror Descent (DiffNash)}
\label{alg:replay-omd}
\begin{algorithmic}[1]
\Require Outer iterations $K$, inner steps $H$, self-action batch size
$B_{\mathrm{self}}$, opponent batch size $M$, initial buffer size $N_0$,
opponent-buffer capacity $N_{\mathrm{buf}}$, refresh size $R$, OMD parameter
$\eta$, self-action replay fraction $\zeta$, trigger threshold $\kappa$, and
maximum log-weight $C$.

\State Initialize $\pi_{\theta^{i,1}}=\operatorname{Unif}(\mathcal A^i)$,
$i\in\{1,2\}$; initialize each player's opponent replay buffer and self-action
history with $N_0$ samples using \textsc{PrepareSample} with behavior parameters
$\theta^{i,1}$, and record the initial self-action spread.

\For{$k=1,\ldots,K$}
    \For{$i\in\{1,2\}$}

        \State Sample opponent entries $\widetilde z^{-i,(1:L)}$ with behavior
        iterations $k^{(1)},\ldots,k^{(L)}$; let $\widetilde \vaction_0^{-i,(\ell)}$ denote the
        clean action in $\widetilde z^{-i,(\ell)}$, and compute
        \[
        \widetilde r^{-i,k,\behavioriteration}
        \leftarrow
        \Call{ComputeWeight}{\theta^{-i,k},\widetilde z^{-i, \behavioriteration},C},
        \qquad
        \widetilde w_{(\ell)}^{-i}
        \leftarrow
        \frac{L\widetilde r^{-i,k,\behavioriteration}} {\sum_{l=1}^L\widetilde r^{-i,k,\behavioriteration}},
        \quad l\in[L].
        \]

        \State Set
        $b_{\mathrm{IS}}\leftarrow\lceil\zeta B_{\mathrm{self}}\rceil$
        if the current self-action spread is below $\kappa$ times its initial
        value, and $b_{\mathrm{IS}}\leftarrow0$ otherwise; set
        $\theta^{i,k,0}\leftarrow\theta^{i,k}$.

        \For{$h=1,\ldots,H$}

            \State Sample
            \[
            \vaction_0^{i,(b_{\mathrm{IS}}+1:B_{\mathrm{self}})}
            \overset{\mathrm{i.i.d.}}{\sim}\pi_{\theta^{i,k}},
            \qquad
            z^{i,(1:b_{\mathrm{IS}})}
            \sim \text{self-action history}.
            \]

            \State For $m \in[b_{\mathrm{IS}}]$, let $\vaction_0^{i,\selfbehavioriteration}$ be the clean
            action in $z^{i,\selfbehavioriteration}$, with behavior iteration $\selfbehavioriteration$, and compute
            \[
            r^{i,k, \selfbehavioriteration}
            \leftarrow
            \Call{ComputeWeight}{\theta^{i,k},z^{i,\selfbehavioriteration},C},
            \qquad
            w_{(m)}^{i}
            \leftarrow
            \frac{b_{\mathrm{IS}}r^{i,k,\selfbehavioriteration}}
            {\sum_{j=1}^{b_{\mathrm{IS}}}r^{i,k,\selfbehavioriteration}}.
            \]

            \State For $m \in[B_{\mathrm{self}}]$, estimate
            \[
            \widehat u^i
            \left(\vaction_{(m)}^{i};\pi_{\theta^{-i,k}}\right)
            \leftarrow
            \frac1 L\sum_{\ell=1}^L
            \widetilde w_{(\ell)}^{-i}
            u^i\left(
            \vaction_{(m)}^{i},\widetilde \vaction_{(\ell)}^{-i,\behavioriteration}
            \right).
            \]

            \State For $m \in[B_{\mathrm{self}}]$, sample
            $t \sim\operatorname{Unif}\{1,\ldots,T\}$ and
            $\epsilon \sim\mathcal N(0, I)$, and set
            \[
            \vaction_{t, (m)}^{i}
            \leftarrow
            \sqrt{\bar\alpha_{t}}\,\vaction_{(m)}^{i}
            +
            \sqrt{1-\bar\alpha_{t}}\,
            \epsilon.
            \]

            \State Take one optimizer step from $\theta^{i,k,h-1}$ on
            \[
            \begin{aligned}
            \mathcal L^{i,k,h}(\theta)
            =\frac1{B_{\mathrm{self}}}\Bigg[
            &\sum_{m=1}^{b_{\mathrm{IS}}}
            w_{(m)}^{i}
            \exp\!\left(
            \frac{\widehat u^i
            (\vaction_{(m)}^{i,\selfbehavioriteration};\pi_{\theta^{-i,k}})}{\eta}
            \right)
            (1-\bar\alpha_{t})
            \left\|
            s_\theta
            (\vaction_{t,(m)}^{i,\selfbehavioriteration},t)
            +
            \frac{\epsilon}
            {\sqrt{1-\bar\alpha_{t}}}
            \right\|_2^2
            \\
            +{}&
            \sum_{m=b_{\mathrm{IS}}+1}^{B_{\mathrm{self}}}
            \exp\!\left(
            \frac{\widehat u^i
            (\vaction_{(m)}^{i, k};\pi_{\theta^{-i,k}})}{\eta}
            \right)
            (1-\bar\alpha_{t})
            \left\|
            s_\theta
            (\vaction_{t,(m)}^{i,k},t)
            +
            \frac{\epsilon}
            {\sqrt{1-\bar\alpha_{t}}}
            \right\|_2^2
            \Bigg],
            \end{aligned}
            \]
            obtaining $\theta^{i,k,h}$.

        \EndFor

        \State Set
        $\theta^{i,k+1}\leftarrow\theta^{i,k,H}$.

        \State Draw $R$ actions from $\pi_{\theta^{i,k+1}}$ and apply
        \textsc{PrepareSample} with behavior parameters $\theta^{i,k+1}$;
        append the resulting entries to both buffers and retain the most recent
        $N_{\mathrm{buf}}$ opponent entries.

    \EndFor
\EndFor

\State \Return
$\displaystyle
\Pi^{i,K}
=
\frac1K\sum_{k=1}^K\pi_{\theta^{i,k}},
\quad i\in\{1,2\}$.
\end{algorithmic}
\end{algorithm}

\newpage 
\section{Details about Experiment Setup}

\subsection{Nash Gap Computation}\label{appdx:nash-gap-computation}

For a learned strategy profile $\pi=(\pi^1,\pi^2)$, we measure the total gain available from unilateral deviations. For player $i$, define
\begin{equation}
    \epsilon^i(\pi)
    =
    \max_{a^i\in\mathcal{A}^i}
    u^i(a^i;\pi^{-i})
    -
    U^i(\pi^i,\pi^{-i}),
\end{equation}
and compute the Nash gap as
\begin{equation}
    G(\pi^1,\pi^2)
    =
    \epsilon^1(\pi)+\epsilon^2(\pi).
    \label{eq:nash-gap-eval}
\end{equation}
Because expected utility is linear in the deviating strategy, it is sufficient to optimize over pure best responses. In the two-player zero-sum setting, writing $U$ for player 1's payoff, this simplifies to
\begin{equation}
    G(\pi^1,\pi^2)
    =
    \max_{a^1\in\mathcal{A}^1} U(a^1,\pi^2)
    -
    \min_{a^2\in\mathcal{A}^2} U(\pi^1,a^2).
\end{equation}

For continuous action spaces, the best-response optimization is computed with a game-specific evaluation oracle. For the forest game, following \citet{kamra2019deep}, we use evolutionary search with a population of 50 candidates for 20 iterations. At each iteration, the top 10 candidates are retained and four Gaussian-perturbed mutations are generated from each survivor. Candidate utilities are estimated against 100 opponent samples during the search, and the resulting best response is reevaluated against 1,000 samples. When estimating the individual unilateral gains in \eqref{eq:nash-gap-eval}, the utility of the learned profile is estimated from 2,000 sampled action pairs.

For symmetric Colonel Blotto, we follow the discretized best-response procedure of \citet{adam2021double}. Final Nash gaps use a simplex grid with spacing $1/1024$ and an exact knapsack dynamic program, with 20,000 samples from each learned policy. For the radial-band variant, we approximate each best response from a pool of approximately 11,400 candidates, refine the best 64 candidates for 400 gradient steps, and estimate final payoffs using 8,000 policy samples per player. The same evaluation procedure is used for all methods within each game.

\subsection{Experimental Details of Cournot Game}
\label{app:cournot}

We consider a two-player multi-market Cournot competition game with $d$ independent markets, where $A>0$ is the demand intercept and $b>0$ controls the slope of inverse demand. Player $i\in\{1,2\}$ chooses production quantities across $d$ markets:

\[
\vaction^i=([\vaction^i]_1,\ldots,[\vaction^i]_d)\in[0,q_{\max}]^d.
\]

In each market $j$, the inverse demand function is

\[
p_j(\vaction^1_j,\vaction^2_j)
=
A-b(\vaction^1_j+\vaction^2_j),
\]

and both firms have constant marginal cost $c$. Player $i$'s total utility is

\begin{equation}
u^i(\vaction^i,\vaction^{-i})
=
\sum_{j=1}^{d}
\left(
A-b(\vaction^i_j+\vaction^{-i}_j)-c
\right)\vaction^i_j.
\label{eq:cournot-payoff}
\end{equation}

We use $A=10$, $b=1$, $c=1$, and
$q_{\max}=(A-c)/b=9$, with $d\in\{1,30\}$ in the experiments.

In the implementation, the policy produces a normalized action
$\mathbf{z}^i\in[-1,1]^d$, which is mapped to the feasible Cournot action by

\[
\vaction^i
=
\frac{\mathbf{z}^i+1}{2}q_{\max}.
\]

During optimization, we use the scaled utility $u^i/d$ so that its magnitude
does not grow with the number of markets.

\paragraph{Analytical equilibrium.}
The separability and quadratic structure of the Cournot game allow its Nash
equilibrium to be characterized exactly.

\begin{proposition}
\label{prop:cournot-equilibrium}
The Cournot game in~\eqref{eq:cournot-payoff} has a unique Nash equilibrium $(\vaction^{1,\star}, \vaction^{2, \star})$, and for every market $j=1,\ldots,d$,
\[
[\vaction^{1,\star}]_j
=
[\vaction^{2,\star}]_j
=
\frac{A-c}{3b}.
\]
Under our parameters, $[\vaction^{1,\star}]_j=[\vaction^{2,\star}]_j=3$ for every market.
\end{proposition}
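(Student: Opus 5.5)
The statement is about pure-strategy equilibria of the deterministic Cournot game. I would prove it in two parts: first identify the unique pure-strategy Nash equilibrium through best responses, then rule out any mixed equilibrium.

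\textbf{Best responses.} The payoff in \eqref{eq:cournot-payoff} is a sum over markets. Each summand depends only on the coordinates $[\vaction^1]_j$ and $[\vaction^2]_j$, so against any mixed opponent strategy $\pi^{-i}$ the expected utility separates by market:
\[
u^i(\vaction^i;\pi^{-i})=\sum_j \bigl(A-c-b\,\mu^{-i}_j-b[\vaction^i]_j\bigr)[\vaction^i]_j ,
\]
where $\mu^{-i}_j=\mathbb{E}_{\pi^{-i}}[\vaction^{-i}]_j$. Each summand is strictly concave in $[\vaction^i]_j$, with second derivative $-2b<0$. Hence the best response is the unique pure action
\[
[\vaction^i]_j=\mathrm{clip}_{[0,q_{\max}]}\bigl((A-c-b\mu^{-i}_j)/(2b)\bigr).
\]
Because $\mu^{-i}_j\in[0,q_{\max}]$ and $q_{\max}=(A-c)/b$, the unclipped value lies in $[0,(A-c)/(2b)]\subset[0,q_{\max}]$. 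So clipping never binds, and the best response is $[\vaction^i]_j=(A-c-b\mu^{-i}_j)/(2b)$.

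\textbf{Mixed equilibria are pure.} At any Nash equilibrium, each player's strategy must be supported on best responses to the other's strategy. The best response is a single point, so both equilibrium strategies are point masses. In particular $\mu^{-i}_j$ equals the opponent's pure action.

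\textbf{Solving the system.} For each market $j$, the equilibrium conditions are $2b x_1+b x_2=A-c$ and $b x_1+2b x_2=A-c$. This linear system has determinant $3b^2\neq 0$, so its solution is unique: $x_1=x_2=(A-c)/(3b)$. This value lies in $[0,q_{\max}]$. Conversely, the first-order conditions show that this profile is a mutual best response, so it is indeed an equilibrium, and it is the only one. Substituting $A=10$, $b=1$, $c=1$ gives $3$.

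The only step needing care is checking that the clipping never binds, which uses the specific choice $q_{\max}=(A-c)/b$; everything else is routine.
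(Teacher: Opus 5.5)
Your proof is correct and follows essentially the same route as the paper: you separate the expected payoff by market against the opponent's mean $\mu^{-i}_j$, use strict concavity to get a unique best response that the choice $q_{\max}=(A-c)/b$ keeps interior, conclude from this uniqueness that every equilibrium is pure, and solve the resulting $2\times 2$ linear system. Your determinant check and explicit verification that $(A-c)/(3b)$ is a mutual best response spell out the existence half slightly more than the paper does.
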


\begin{proof}
Consider an arbitrary mixed strategy $\pi^{-i}$ of player $-i$, and let
\[
\mu^{-i}_j
=
\mathbb{E}_{\vaction^{-i}\sim\pi^{-i}}[[\vaction^{-i}]_j]
\]
denote its mean production in market $j$. Since the payoff is separable across
markets and linear in the opponent's production, the expected utility from a
action $\vaction^i$ is
\[
\mathbb{E}_{\vaction^{-i}\sim\pi^{-i}}
\left[
u^i(\vaction^i,\vaction^{-i})
\right]
=
\sum_{j=1}^{d}
\left[
(A-c-b\mu^{-i}_j)\vaction^i_j
-
b(\vaction^i_j)^2
\right].
\]

For $b>0$, this expression is strictly concave in $\vaction^i$, so player $i$ has a
unique best response against any mixed opponent strategy. In each market,
\[
\operatorname{BR}^i_j(\pi^{-i})
=
\frac{A-c-b\mu^{-i}_j}{2b}.
\]

Since
\[
0\leq \mu^{-i}_j\leq q_{\max}=\frac{A-c}{b},
\]
this quantity lies in $[0,q_{\max}]$, so the action constraints do not alter
the best response.

A player can randomize at a Nash equilibrium only over best responses. Since
the best response is unique, every Nash equilibrium must therefore be pure (all probability mass on a single action).
For each market $j$, the equilibrium quantities satisfy
\[
2b [\vaction^{1,\star}]_j+b [\vaction^{2,\star}]_j=A-c,
\qquad
b [\vaction^{1,\star}]_j+2b [\vaction^{2,\star}]_j=A-c.
\]

Subtracting the two equations gives
$[\vaction^{1,\star}]_j=[\vaction^{2,\star}]_j$, and substitution yields
\[
[\vaction^{1,\star}]_j=[\vaction^{2,\star}]_j
=
\frac{A-c}{3b}.
\]

Because the markets are independent, this holds for every
$j=1,\ldots,d$. With $A=10$, $b=1$, and $c=1$, the equilibrium quantity is
$3$ in every market.
\end{proof}

\subsection{Experimental Details of Forest Protection Game}
\label{app:forest}

\subsubsection{Environment Details}
\label{app:forest-environment}

We use the continuous forest security game introduced by
\citet{kamra2018policy} and subsequently adopted as the forest protection benchmark in DeepFP \citep{kamra2019deep}. Our environment and parameter
settings follow the latter. The game is played between a defender $D$ and an adversary $A$ in a circular forest containing a fixed set of tree locations
$\mathcal{T}\subset\mathbb{R}^{2}$. The adversary controls $N$ adversaries, while the defender controls $M$ stationary guards.

Each adversary enters at the boundary of the forest, travels along a radial path toward the center, stops at a chosen location, and exits along the same path. Its stopping location therefore determines both its trajectory and its harvesting region. The adversary's action is

\begin{equation*}
    \vaction_A=([\vaction_A]_1,\ldots,[\vaction_A]_N)\in\mathcal{F}^{N},
    \label{eq:forest-adversary-action}
\end{equation*}

where $\mathcal{F}$ is the unit disk and $[\vaction_A]_k$ is the stopping location of the $k$-th adversary. The defender selects the guard locations

\begin{equation*}
    \vaction_D=([\vaction_D]_1,\ldots,
    [\vaction_D]_M)\in\mathcal{F}^{M}.
    \label{eq:forest-defender-action}
\end{equation*}

where $[\vaction_D]_j$ is the location of guard $j$. Locations are represented using normalized polar coordinates $(r,\theta)$, so the adversary and defender action dimensions are $2N$ and $2M$, respectively.

An adversary is captured when its path passes within guard radius $R_g$ of at least one defender resource. Let $P_k([\vaction_A]_k)$ denote the radial path followed by the adversary $k$.  Its capture indicator is

\begin{equation*}
    C_k(\vaction_D, \vaction_A)
    =
    \mathbb{I}
    \left[
        \min_{j\in\{1,\ldots,M\}}
        \min_{p\in P_k([\vaction_A]_k)}
        \lVert p- [\vaction_D]_j\rVert_2
        \leq R_D
    \right].
    \label{eq:forest-capture}
\end{equation*}

An uncaptured adversary harvests trees lying within chopping radius $R_A$ of its stopping location. Define

\begin{equation*}
    H_k([\vaction_A]_k)
    =
    \left\{
        t\in\mathcal{T}:
        \lVert t-[\vaction_A]_k\rVert_2\leq R_A
    \right\}.
    \label{eq:forest-harvest-set}
\end{equation*}

Let $N_{\mathrm{stolen}}(\vaction_D,\vaction_A)$ denote the number of trees successfully removed by the uncaptured adversaries, with overlapping harvest regions combined according to the game simulator. A captured adversary loses its harvest and incurs a penalty $r_{\mathrm{pen}}>0$. The adversary's reward is

\begin{equation*}
    u_A(a_D,a_A)
    =
    N_{\mathrm{stolen}}(a_D,a_A)
    -r_{\mathrm{pen}}
    \sum_{k=1}^{N}C_k(a_D,a_A),
    \label{eq:forest-payoff}
\end{equation*}

and the defender receives the negative reward,
$u_D(a_D,a_A)=-u_A(a_D,a_A)$. Following the DeepFP experimental setup \citep{kamra2019deep}, we use

\begin{equation*}
    R_D=0.1,
    \qquad
    R_A=0.04,
    \qquad
    r_{\mathrm{pen}}=4.0.
    \label{eq:forest-parameters}
\end{equation*}

The reward landscape is highly nonconvex and contains large flat regions. In particular, moving a guard without changing which trajectories it intercepts
does not change the defender's reward. Heterogeneous tree densities also create spatially separated high-value regions, so effective mixed strategies may place probability mass on several distinct guard or adversary configurations. The game therefore tests both the expressiveness of the learned strategy distribution and its ability to improve when informative utility gradients are sparse.

\paragraph{Forest instances.} 

We use four forest layouts, denoted F1--F4 throughout the paper. F1 and F2 contain 4,000 and 6,000 trees, respectively, and are evaluated with $M=N\in\{1,3\}$. F3 and F4 are denser layouts containing 6,200 and 7,100 trees and are evaluated with $M=N=5$. Thus, the six configurations reported in the main experiments are

\[
(\mathrm{F1},1),\;(\mathrm{F1},3),\;
(\mathrm{F2},1),\;(\mathrm{F2},3),\;
(\mathrm{F3},5),\;(\mathrm{F4},5).
\]

The interaction and payoff structure is held fixed across instances; only the spatial tree layout and number of resources change. This changes which attack locations are valuable and how many spatially separated regions must be represented by the equilibrium strategy.

\paragraph{Scaling the action space.}

Each guard and adversary is represented by a polar coordinate $(r,\theta)$, so the defender and attacker action spaces have dimensions $2M$ and $2N$, respectively. Latent actions in $[-1,1]$ are mapped to $r\in[0,1]$ and $\theta\in[0,2\pi]$. Consequently, the headline experiments increase each player's action dimension from 2 at $M=N=1$, to 6 at
$M=N=3$, and 10 at $M=N=5$. Although resources of the same type are interchangeable in the payoff, we use the direct ordered representation and do not explicitly enforce permutation symmetry.

\subsubsection{Implementation Details}
\label{app:forest-implementation}

\paragraph{Compared methods.}

We compare DiffNash with DeepFP \citep{kamra2019deep}, DiffFP \citep{karthikeyan2025difffp}, and Neural OMD. DeepFP uses neural fictitious play with a learned attacker and geometric defender best response. DiffFP replaces the neural policy with a diffusion best response while retaining fictitious-play updates. Neural OMD uses the same OMD training loop as DiffNash but represents the policy with a squashed diagonal Gaussian, isolating the contribution of the diffusion policy. DiffNash combines a diffusion policy with OMD, opponent replay, and self-action importance sampling.

\paragraph{Training details.}

All headline methods are trained for 40,000 gradient updates per player. DiffNash and Neural OMD use 2,000 outer OMD iterations with 20 inner updates per iteration. DiffFP uses 100 fictitious-play iterations with 400 best-response updates per iteration. Unless otherwise noted, we run five seeds. OMD methods use batches of 256 player actions and 128 opponent replay samples. DiffNash maintains a FIFO opponent replay buffer of size 1,024 and draws 1\% of each player batch from its full self-action history.

DiffNash and Neural OMD use Adam \cite{kingma2014adam} with learning rate $3\times10^{-4}$, decayed to $3\times10^{-5}$ during training. The diffusion policies use $T=20$ denoising steps. All experiments are run on CPU using 8 cores and 16\,GB of memory.

\paragraph{Evaluation.}

We report defender exploitability, defined as the best attacker reward against the learned time-averaged defender, together with the two sided Nash gap. Best responses are approximated using evolutionary search on the reference simulator. The search uses a population of 50 candidates for 20 iterations, retaining the top 10 and generating four mutations per survivor. Each candidate is evaluated against 100 sampled opponent actions, followed by a final evaluation against 1,000 samples. Self-play utility for the Nash gap is estimated from 2,000 sampled action pairs.

We evaluate only the final checkpoint and use the complete time-averaged strategy accumulated during training. Reported uncertainty is the 95\% Student-$t$ confidence interval across seeds.

\paragraph{Hyperparameters.}

DiffNash uses a DACER-style denoiser with three 256-unit hidden layers and Mish activations. The diffusion-time embedding is a 16-dimensional sinusoidal encoding followed by a two-layer MLP. We use OMD step size $\eta=1$, normalize utility-derived energies within each batch, clip their logits to $[-3,3]$, and renormalize the resulting weights to mean one. Opponent importance ratios are upper-clipped at 10. Self-action replay constitutes 1\% of the training batch and is activated once the standard deviation of the current action samples falls below 10\% of its initial value.

\subsection{Experimental Details of Continuous Colonel Blotto}
\label{app:blotto}

\subsubsection{Environment Details}
\label{app:blotto-environment}

We follow the continuous Colonel Blotto formulation of
\citet{adam2021double}. Two players simultaneously allocate a unit budget
across $N$ battlefields. Both players have the action space
\begin{equation*}
\mathcal{A}^1=\mathcal{A}^2=\mathcal{A}
    =
    \left\{
        \vaction=([\vaction]_1,\ldots, [\vaction]_N)\in\mathbb{R}_{+}^{n}
        :
        \sum_{j=1}^{N}[\vaction]_j=1
    \right\}.
    \label{eq:blotto-action-space}
\end{equation*}

Let $\vaction^1,\vaction^2\in\mathcal{A}$ denote the allocations selected by players
$1$ and $2$, respectively. The symmetric case and radial band case of \Cref{tab:blotto-results} differ in the utility function of the agent, which we detail below. 

\paragraph{Symmetric}
To construct the symmetric case of \Cref{tab:blotto-results}, we define the following battlefield contest function:

% each battlefield $j$ has a positive weight $\omega_j$. The battlefield contest function is

\begin{equation*}
    \ell_c(z)
    =
    \begin{cases}
        -1, & z\leq -c,\\[1mm]
        z/c, & -c<z<c,\\[1mm]
        1, & z\geq c,
    \end{cases}
    \label{eq:blotto-contest}
\end{equation*}
, where $\omega_j$ is battlefield $j$'s positive weight and $c>0$ controls the allocation advantage required to obtain the full value of a battlefield. Player $1$'s utility is
\begin{equation*}
    u^1(\vaction^1, \vaction^2)
    =
    \sum_{j=1}^{n}
    \omega_j\,
    \ell_c(\vaction^1_j-\vaction^2_j),
    \label{eq:blotto-payoff}
\end{equation*}
and player $2$ receives
\[
    u^2(\vaction^1,\vaction^2)=-u^1(\vaction^1, \vaction^2).
\]
Thus, player $1$ receives $\omega_j$ when it allocates at least $c$ more than player $2$ to battlefield $j$, receives $-\omega_j$ when it allocates at least $c$ less, and receives a payoff that varies linearly with the allocation difference inside this margin. Our symmetric Blotto experiment uses the symmetric N-battlefield setting
\begin{equation*}
    \omega_j=1 \ \text{for all } j,
    \qquad
    c=\frac{1}{16}.
    \label{eq:blotto-primary-setting}
\end{equation*}
for $N = \{6, 10\}$. As $c\rightarrow 0$, $\ell_c$ approaches the sign function used in the classical continuous Colonel Blotto game.

% \paragraph{Standard six-battlefield instance.}
% Our main Blotto experiment uses the symmetric six-battlefield setting
% \begin{equation*}
%     N=6,
%     \qquad
%     \omega_j=1 \ \text{for all } j,
%     \qquad
%     c=\frac{1}{16}.
%     \label{eq:blotto-primary-setting}
% \end{equation*}

The game is symmetric and zero-sum, so its equilibrium value is zero. Because the budget constraint couples all six coordinates, increasing the allocation to one battlefield reduces the allocation available to the others. Equilibrium play therefore requires coordinated allocations rather than independent decisions at individual battlefields. This makes the game a useful test of whether a policy class can represent a broad mixed strategy.

\paragraph{Radial-band instances.}

% We construct two synthetic continuous allocation games with $N \in\{6,10\}$ battlefields. 

To construct the radial band case of \Cref{tab:blotto-results}, we first specify precisely what the radial band is. Define the uniform allocation $\bar{\vaction}$ which allocates $\frac{1}{N}$ budget to each battlefield, and define $\mathbf{e}_1$ be the allocation which allocates all battles to the first battlefield. Now define:
$v_{\mathrm{rad}} = \mathbf{e}_1 - \bar{\vaction}$, center line $\gamma_{\mathrm{rad}}(\tau) = \bar{\vaction} + \tau v_{\mathrm{rad}}$, and width $\omega_{\mathrm{rad}}(\tau) = \min \{0.06, 0.20(1-\tau)\}$. The radial interval is $[\tau_{\min}, \tau_{\max}]$, with 
\[
\tau_{\min}
=
\begin{cases}
0.12, & n=6,\\
0.15, & n=10,
\end{cases}
\]
and $\tau_{\max} = 0.995$. $\gamma_{\mathrm{rad}}(\tau)$ and $\omega_{\mathrm{rad}}(\tau)$ together define the shape of the radial band.

The next step we define how probability is distributed within the band. We put $60\%$ of the probability along the interior portion and $40\%$ closer to the vertex. To achieve that we first sample
First sample

\[
\tau \sim
\begin{cases}
\operatorname{Unif}[\tau_{\min},0.80), & \text{with probability }0.6,\\
\operatorname{Unif}[0.80,\tau_{\max}], & \text{with probability }0.4.
\end{cases}
\]
We also add some perpendicular displacement to make the distribution has width. Formally, given $\tau$, draw the displacement $\xi$ uniformly from
\[
\left\{
\xi \in \mathbb{R}^n :
\mathbf{1}^{\top}\xi = 0,\quad
v_{\mathrm{rad}}^{\top}\xi = 0,\quad
\|\xi\|_2 \le w_{\mathrm{rad}}(\tau),\quad
\gamma_{\mathrm{rad}}(\tau) + \xi \in \Delta^{N-1}
\right\}.
\]
The first constraint make sure the resulting allocation does not exceed the budget, and the second make sure the displacement is orthogonal to the radial band. Define the target distribution $\mu_{\mathrm{rad}}$ as the law of $\gamma_{\mathrm{rad}} + \xi$.

Before we specify the payoff which makes $\mu_{\mathrm{rad}}$ the equilibrium strategy, we first define the following notations. For any allocation, we first locate its clipped projection onto the centerline, measure its perpendicular distance, and finally penalize the portion that exceeds the allowed width. Define the projected radial coordinate $\widehat{\tau}(\vaction)$
and penalty $V_{\mathrm{rad}}(\vaction)$ by
\[
\widehat{\tau}(\vaction)
=
\min\left\{
\tau_{\max},
\max\left\{
\tau_{\min},
\frac{\langle \vaction-\bar \vaction,\,v_{\mathrm{rad}}\rangle}
{\|v_{\mathrm{rad}}\|_2^2}
\right\}
\right\},
\]
\[
V_{\mathrm{rad}}(\vaction)
=
-15\left[
\left\|\vaction-\gamma_{\mathrm{rad}}\bigl(\widehat{\tau}(\vaction)\bigr)\right\|_2
-
w_{\mathrm{rad}}\bigl(\widehat{\tau}(\vaction)\bigr)
\right]_+.
\]
Here $V_{\mathrm{rad}} \leq 0$, and every target sample does not receive penalty. We also define the following Gaussian kernel $K(\cdot, \cdot)$ which measure the similarity between allocations:
\[
K(a^1,a^2)
=
\exp\left(
-\frac{\|a^1-a^2\|_2^2}{2\sigma_K^2}
\right).
\]
, with bandwidth $\sigma_K = 0.08$. We define a centered version of the Gaussian kernel where we subtract each allocation's average similarity to the target radial shape $\mu_\mathrm{rad}$, then add back the target's average similarity to itself. Formally, for independent samples $\radialsample, \radialsample' \sim \mu_\mathrm{rad}$, define
\[
\bar{K}(\vaction^1,\vaction^2)
=
K(\vaction^1, \vaction^2) - \mathbb{E}_{\radialsample} K(\vaction^1, \radialsample)- \mathbb{E}_{b} K(\radialsample, \vaction^2) + \mathbb{E}_{\radialsample, \radialsample'}K(\radialsample,\radialsample').
\]
Set player $1$'s utilities to 
$u^1(\vaction^1, \vaction^2) = \bar{K}_{\mathrm{rad}}(\vaction^1, \vaction^2) + V_{\mathrm{rad}}(\vaction^1) - V_{\mathrm{rad}}(\vaction^2)$ and set $u^2(\vaction^1, \vaction^2) = - u^1(\vaction^1, \vaction^2)$.

\begin{proposition}[Radial Band is a Nash equilibrium]
    $(\mu_{\mathrm{rad}}, \mu_{\mathrm{rad}})$ is the Nash equilibrium of the game we described above. 
\end{proposition}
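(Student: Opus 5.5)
The plan is to show directly that, against $\mu_{\mathrm{rad}}$, every pure action of player 1 earns at most the value $0$, and every pure action of player 2 concedes at least $0$. Since the game is zero-sum and expected utility is linear in the deviating player's mixed strategy (as noted in \Cref{appdx:nash-gap-computation}), it suffices to consider pure deviations. We then check that $U(\mu_{\mathrm{rad}},\mu_{\mathrm{rad}})=0$.

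The first step is to compute the payoff of a pure action $\vaction^1$ against $\vaction^2\sim\mu_{\mathrm{rad}}$. We integrate the centered kernel in its second argument. The terms $K(\vaction^1,\vaction^2)$ and $-\mathbb{E}_{\radialsample}K(\vaction^1,\radialsample)$ cancel after integration. The terms $-\mathbb{E}_{\radialsample}K(\radialsample,\vaction^2)$ and $+\mathbb{E}_{\radialsample,\radialsample'}K(\radialsample,\radialsample')$ also cancel. Hence
\[
\mathbb{E}_{\vaction^2\sim\mu_{\mathrm{rad}}}\bar K(\vaction^1,\vaction^2)=0 \quad\text{for every }\vaction^1.
\]
This gives
\[
u^1(\vaction^1;\mu_{\mathrm{rad}})=V_{\mathrm{rad}}(\vaction^1)-\mathbb{E}_{\radialsample\sim\mu_{\mathrm{rad}}}V_{\mathrm{rad}}(\radialsample).
\]
By symmetry of $K$, the same computation shows $\mathbb{E}_{\vaction^1\sim\mu_{\mathrm{rad}}}\bar K(\vaction^1,\vaction^2)=0$ for every $\vaction^2$. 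So player 1's payoff when player 2 plays $\vaction^2$ and player 1 plays $\mu_{\mathrm{rad}}$ is
\[
\mathbb{E}_{\radialsample}V_{\mathrm{rad}}(\radialsample)-V_{\mathrm{rad}}(\vaction^2).
\]

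The second step, which is the only real obstacle, is to verify that $V_{\mathrm{rad}}(\radialsample)=0$ for $\mu_{\mathrm{rad}}$-almost every $\radialsample$, as claimed in the text. A sample has the form $\radialsample=\gamma_{\mathrm{rad}}(\tau)+\xi$ with $\mathbf 1^\top\xi=0$, $v_{\mathrm{rad}}^\top\xi=0$ and $\|\xi\|_2\le w_{\mathrm{rad}}(\tau)$. Since $\langle\gamma_{\mathrm{rad}}(\tau)-\bar\vaction,v_{\mathrm{rad}}\rangle=\tau\|v_{\mathrm{rad}}\|_2^2$ and $\xi\perp v_{\mathrm{rad}}$, the unclipped projection equals $\tau$. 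Because $\tau\in[\tau_{\min},\tau_{\max}]$, clipping leaves it unchanged, so $\widehat\tau(\radialsample)=\tau$. Then $\|\radialsample-\gamma_{\mathrm{rad}}(\tau)\|_2=\|\xi\|_2\le w_{\mathrm{rad}}(\tau)$, and the positive part vanishes. I will also note that the width function is written $\omega_{\mathrm{rad}}$ in one place and $w_{\mathrm{rad}}$ in another, and treat these as the same function. With this, $\mathbb{E}_{\radialsample}V_{\mathrm{rad}}(\radialsample)=0$.

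Finally, we combine the pieces. For every $\vaction^1$, $u^1(\vaction^1;\mu_{\mathrm{rad}})=V_{\mathrm{rad}}(\vaction^1)\le 0$. For every $\vaction^2$, player 1's payoff is $-V_{\mathrm{rad}}(\vaction^2)\ge 0$, because $V_{\mathrm{rad}}\le 0$ by its definition as $-15[\cdot]_+$. The payoff at $(\mu_{\mathrm{rad}},\mu_{\mathrm{rad}})$ is $0$: the kernel part is zero by the first step and the $V$ terms are zero by the second. Thus neither player can gain by a unilateral pure deviation, and by linearity neither can gain by a mixed deviation. Therefore $(\mu_{\mathrm{rad}},\mu_{\mathrm{rad}})$ is a Nash equilibrium, and the Nash gap in \Cref{def:nash-gap} is zero there. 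Measurability and boundedness of $K$ and $V_{\mathrm{rad}}$ on the compact simplex justify all the interchanges of expectation used above.
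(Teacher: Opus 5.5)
Your proposal is correct and follows the paper's proof essentially step for step: you integrate out the centered kernel against $\mu_{\mathrm{rad}}$ in either argument, use $V_{\mathrm{rad}}=0$ on the support of $\mu_{\mathrm{rad}}$ and $V_{\mathrm{rad}}\le 0$ everywhere to obtain $U^1(\vaction^1,\mu_{\mathrm{rad}})\le 0 = U^1(\mu_{\mathrm{rad}},\mu_{\mathrm{rad}})\le U^1(\mu_{\mathrm{rad}},\vaction^2)$, and extend to mixed deviations by linearity. The only difference is that you explicitly check $\widehat{\tau}(\radialsample)=\tau$, and hence $V_{\mathrm{rad}}(\radialsample)=0$, for samples of $\mu_{\mathrm{rad}}$, whereas the paper simply asserts this from the corridor construction.
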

\begin{proof}
   Observe that for every
$\vaction^1,\vaction^2\in\Delta^{N-1}$,
\[
\mathbb{E}_{\radialsample \sim\mu_{\mathrm{rad}}}
\left[\bar K(\vaction^1, \radialsample)\right]
=0,
\qquad
\mathbb{E}_{\radialsample \sim\mu_{\mathrm{rad}}}
\left[\bar K(\radialsample,\vaction^2)\right]
=0,
\]
by the definition of the centered kernel.
Since $\mu_{\mathrm{rad}}$ is supported inside the corridor,
$V_{\mathrm{rad}}(\radialsample)=0$ almost surely for
$\radialsample \sim\mu_{\mathrm{rad}}$, while
$V_{\mathrm{rad}}(\vaction)\leq0$ for every feasible action.

Let $U^1$ denote the expected utility induced by $u^1$.
For any pure deviations $\vaction^1$ and $\vaction^2$, we therefore have
\[
U^1(\vaction^1,\mu_{\mathrm{rad}})
=
V_{\mathrm{rad}}(\vaction^1)
\leq0,
\qquad
U^1(\mu_{\mathrm{rad}},\vaction^2)
=
-V_{\mathrm{rad}}(\vaction^2)
\geq0.
\]
Moreover,
\[
U^1(\mu_{\mathrm{rad}},\mu_{\mathrm{rad}})=0.
\]
Thus,
\[
U^1(\vaction^1,\mu_{\mathrm{rad}})
\leq
U^1(\mu_{\mathrm{rad}},\mu_{\mathrm{rad}})
\leq
U^1(\mu_{\mathrm{rad}},\vaction^2)
\]
for all $\vaction^1,\vaction^2\in\Delta^{n-1}$.
These inequalities extend to mixed deviations by linearity
of expected utility. Hence
$(\mu_{\mathrm{rad}},\mu_{\mathrm{rad}})$ is an exact Nash
equilibrium with value zero for both $N=6$ and $N=10$.
\end{proof}

In implementation, we replace $\mu_{\mathrm{rad}}^\star$ by a fixed empirical distribution and compute all kernel-centering terms from the same bank. The same argument then applies exactly to the resulting finite game.

\paragraph{Visualization.}
For visualization, six-dimensional allocations are projected onto a regular
hexagon using barycentric coordinates. Let
$[\vaction]_1,\ldots,[\vaction]_N \in\mathbb{R}^N$ denote the hexagon vertices. An allocation
$\vaction$ is plotted at

\begin{equation}
    p(\vaction)
    =
    \sum_{j=1}^{N} [\vaction]_j e_j.
\end{equation}

Under this projection, the uniform allocation appears at the hexagon center.
Allocating the entire budget to battlefield $j$ would map directly to vertex
$e_j$. The same projection is used for both the standard Blotto and radial-band
games, which allows their learned strategy distributions to be compared
directly.

\subsubsection{Implementation Details}
\label{app:blotto-implementation}

\paragraph{Reference solutions.}

For the symmetric Colonel Blotto instances, we compute reference equilibria with the grid-based double-oracle (DO) solver proposed in \citet{adam2021double}. For both $N=6$ and $N=10$, we discretize allocations with grid spacing $1/256$ and run DO to tolerance $10^{-6}$, using an exact knapsack dynamic program to compute best responses over the discretized action space. For $N=6$, DO terminates after $21$ iterations with a $42$-action support and Nash gap $4.31\times 10^{-14}$; for $N=10$, it terminates after $12$ iterations with a $50$-action support and gap $1.33\times 10^{-14}$. Both instances use $c=1/16$. This solver relies on a game-specific best-response oracle: discretization makes exact best responses tractable for these Blotto instances, but such an oracle is generally unavailable in high-dimensional continuous-action games, where best-response optimization may itself be difficult and uniform discretization scales poorly with dimension. We therefore use DO only to construct reference equilibria for these structured symmetric instances, rather than as a general-purpose baseline.

For the radial-band instances, the reference equilibrium is known by construction. We instantiate it using a fixed empirical distribution $\widehat{\mu}^{\star}_n$ with $K=1200$ anchors, consisting of $720$ samples from the interior portion of the corridor and $480$ from the near-extreme portion. The anchor bank is sampled once for each game instance and then frozen, and all kernel-centering terms are computed from the same bank. Thus, $\widehat{\mu}^{\star}_n$ is an exact equilibrium of the implemented finite-anchor game. We use $r_{\min}=0.12$ for $N=6$ and $r_{\min}=0.15$ for $N=10$, with $r_{\mathrm{split}}=0.80$ and $r_{\max}=0.995$. The corresponding reference Nash gap is exactly zero. We additionally report the finite-sample evaluation floors, $0.014213$ for $N=6$ and $0.024741$ for $N=10$.

\paragraph{Compared Methods.}

We compare DiffNash against \textbf{DeepFP}, \textbf{DiffFP}, and \textbf{Neural OMD}. For DeepFP, we use the original implementation provided by the authors; their codebase is not publicly released. For DiffFP, no implementation is available, so we reimplement the method from the paper, using diffusion policies within the fictitious-play update and its $Q$-guided action refinement procedure.

We additionally introduce Neural OMD as a controlled baseline to separate the effect of the OMD update from the expressive diffusion used by DiffNash. Neural OMD uses the same OMD training structure and optimization budget as DiffNash, but replaces the diffusion policy with a squashed diagonal-Gaussian policy parameterized by a two-layer MLP with hidden dimensions $(256,256)$ and \texttt{tanh} activations. Because this policy admits a tractable density, the mirror-descent regularizer is implemented using the analytic KL divergence between consecutive policies, with coefficient $1.0$. Neural OMD is trained on-policy using fresh samples at each outer iteration and doe not use a replay buffer for importance weighting.

For a given seed, all methods are evaluated on the same game instance and frozen anchor bank. The target distribution, kernel centering terms, and payoff parameters are therefore identical across methods.

\paragraph{Training details.}

All four experiments use five random seeds $0$-$4$. 

DiffNash and Neural OMD each use $2000$ outer iterations with $20$ gradient updates per iteration, for a total of $40,000$ updates per player. Both use Adam with a learning rate annealed from $3 \times 10^{-4}$ to $3\times 10^{-5}$, a policy batch size of $256$, and $128$ opponent samples. DiffNash parameterizes each policy with a $20$-step diffusion model with three hidden layers of width $256$ and a linear noise schedule scaled by 0.8. Its replay buffer if FIFO with capacity $1024$, with $32$ new samples added per outer iteration. Opponent samples are corrected using importance weights capped at $10$. For own-policy correction, $3$ of the $256$ samples in each batch are importance weighted, corresponding to approximately $1.17\%$ of the batch.

DiffFP uses a $20$-step diffusion model with hidden dimensions $(256,256)$ and a $32$-dimensional time embedding. We train it for $100$ fictitious-play iterations, with $400$ best-response updates per iteration, again giving $40{,}000$ gradient updates. It uses Adam \citep{kingma2014adam}  with learning rate $3\times10^{-4}$, gradient-norm clipping at $2.0$, training batches of $256$, and rollout batches of $32$. Its replay buffer has capacity $200{,}000$, with a pool of $100$ historical policies, and the best-response policy is reinitialized at each fictitious-play iteration. DiffFP does not use importance sampling.

DeepFP uses the authors' TensorFlow~1 implementation, with a two-layer $(128,128)$ policy network followed by a softmax output. We use Adam with learning rate $10^{-3}$ and train for $400{,}000$ games, applying a policy update every $10$ games.

\paragraph{Evaluation}

We measure convergence using the two-player zero-sum Nash gap \[ G(p,q) = \max_x U(x,q)-\min_y U(p,y), \] which is the sum of the two players' unilateral improvement incentives and is zero at equilibrium.

For symmetric Blotto, we follow the double-oracle evaluation of \citet{adam2021double}, computing best responses exactly on a discretized simplex with a knapsack dynamic program. Reference solutions use grid spacing $1/256$, while final Nash gaps use a finer $1/1024$ grid and $20{,}000$ policy samples per player. For the radial-band games, we approximate each best response using a pool of approximately $11{,}400$ candidates followed by $400$ gradient-refinement steps on the best $64$, and estimate payoffs using $8{,}000$ samples per player. We additionally report recovery of the known radial equilibrium, including near-extreme mass ($0.40$ at equilibrium) and active versus inactive radial mass. All metrics are aggregated over five seeds using the same evaluator for every method within each game.

\newpage

\newpage
% \input{appdx/cournot}
% \input{appdx/forest}
% \input{appdx/blotto}
% \input{appdx/prop1}
% % \input{appdx/lemma}
% \input{appdx/prop2}

\section{Missing Proofs}
\subsection{Proof of Proposition~\ref{lem:new-loss}}
\label{app:proof-rsm}

% \gwfpomdloss*
\setcounter{proposition}{0}
\begin{proposition}[GWFP and OMD Loss]
By choosing appropriate $g_{\OMD}(\vaction_t)$ and $g_{\GWFP}(\vaction_t)$
and substituting them into \Cref{eq:reweighted-formula}, we can equivalently
write their loss functions as
\begin{equation}\label{eq:appdx-OMD-loss}
\mathcal{L}_{\OMD}(\theta;t)
=
\mathbb{E}_{\vaction_0\sim\pi^{i,k}}
\left[
\mathbb{E}_{\vaction_t\sim q_{t|0}(\cdot\mid \vaction_0)}
\left[
\exp\left(\frac{1}{\eta}u^i(\vaction_0;\pi^{-i,k})\right)
\left\|
s_{\theta}(\vaction_t,t)
-
\nabla_{\vaction_t}\log q_{t|0}(\vaction_t\mid \vaction_0)
\right\|_2^2
\right].
\right]
\end{equation}
% _{G_{\theta^i}(\vaction_0,t)}
% \right].

\begin{equation}\label{eq:GWFP-loss}
\text{and} \quad
\mathcal{L}_{\mathrm{GWFP}}(\theta;t)
=
\mathbb{E}_{
    \substack{
    \vaction_t\sim h_t,\\
    \tilde{\vaction}_0\sim \phi_{0|t}
    }
}
\left[
    \exp\left(\frac{1}{\eta^k}
    u^i(\tilde{\vaction}_0;\Pi^{-i,k})\right)
    \left\|
        s_{\theta}(\vaction_t,t)
        -
        \nabla_{\vaction_t}
        \log \phi_{0|t}(\tilde{\vaction}_0\mid\vaction_t)
    \right\|_2^2
\right],
\end{equation}
where $h_t$ is a density which is positive almost anywhere and $\phi_{0\mid t}(\tilde{\vaction}_0 \mid \vaction_t)
\coloneqq
\mathcal{N}\!\left(
    \tilde{\vaction}_0;
    \frac{\vaction_t}{\sqrt{\bar{\alpha}_t}},
    \frac{1-\bar{\alpha}_t}{\bar{\alpha}_t}I
\right).$
For every $t \in (0,T]$, the minimizers of Equations~\eqref{eq:appdx-OMD-loss}
and~\eqref{eq:GWFP-loss} over $s$ are the score functions of
$\pi^{i,k+1}_{t,\OMD}(a_t)$ and
$\pi^{i,k+1}_{t,\mathrm{GWFP}}(a_t)$, respectively.
\end{proposition}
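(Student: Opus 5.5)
We prove both parts by reducing each loss to the reweighted form \Cref{eq:reweighted-formula}, taking $p_t=\pi^{i,k+1}_{t,\solver}$ and a suitable strictly positive $g$. We then invoke the fact of \citet{ma2025efficient} that any such $g$ leaves the pointwise minimizer $s_\theta(\cdot,t)=\nabla\log p_t$ unchanged. Concretely, we expand each stated loss as an integral in $\vaction_t$ of a quadratic in $s(\vaction_t)$ and minimize pointwise. Up to an $s$-independent constant, it has the form $\int g(\vaction_t)\,\|s(\vaction_t)-m(\vaction_t)\|^2 d\vaction_t$. Here $g$ is the total weight attached to $\vaction_t$, and $m$ is the weighted conditional mean of the regression target. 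So it suffices to show $g>0$ a.e.\ and $m=\nabla\log\pi^{i,k+1}_{t,\solver}$.

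\textbf{OMD.} Write $w(\vaction_0)=\exp(u^i(\vaction_0;\pi^{-i,k})/\eta)$. Since $u^i$ is bounded, $w$ is bounded above and below by positive constants. By \Cref{eq:mirror-decent-update}, $\pi^{i,k}(\vaction_0)w(\vaction_0)=Z^{i,k}_{\mathbf{OMD}}\,\pi^{i,k+1}_{\OMD}(\vaction_0)$. The loss is therefore $Z^{i,k}_{\mathbf{OMD}}$ times the standard denoising loss \Cref{eq:ddpm-score-loss} at fixed $t$, with data distribution $\pi^{i,k+1}_{\OMD}$. The weight is $g_{\OMD}=Z^{i,k}_{\mathbf{OMD}}\,\pi^{i,k+1}_{t,\OMD}$, which is positive everywhere because it is a Gaussian convolution. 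For the target we use Tweedie/Vincent's identity:
\[
\mathbb{E}_{\vaction_0\mid\vaction_t}\bigl[\nabla_{\vaction_t}\log q_{t|0}(\vaction_t\mid\vaction_0)\bigr]=\nabla_{\vaction_t}\log \pi^{i,k+1}_{t,\OMD}(\vaction_t),
\]
where the conditional is taken under the joint $\pi^{i,k+1}_{\OMD}(\vaction_0)q_{t|0}(\vaction_t\mid\vaction_0)$. This follows by differentiating $\pi_{t}(\vaction_t)=\int \pi(\vaction_0)q_{t|0}(\vaction_t\mid\vaction_0)d\vaction_0$ under the integral, which is justified since $q$ is Gaussian and smooth. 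Completing the square then gives the claim. This matches the equivalence between \Cref{eq:ddpm-score-loss} and \Cref{eq:ddpm-equivalent-loss} quoted from \citet{ma2025efficient}. The positive multiplicative constant $Z$ does not affect the minimizer.

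\textbf{GWFP.} Let $w(\tilde\vaction_0)=\exp(u^i(\tilde\vaction_0;\Pi^{-i,k})/\eta^k)$, so that $\pi^{i,k+1}_{\GWFP}=w/Z^{i,k}$. Note that $\phi_{0|t}(\tilde\vaction_0\mid\vaction_t)=\bar\alpha_t^{-d/2}q_{t|0}(\vaction_t\mid\tilde\vaction_0)$ as functions of both arguments: the ratio is exactly the Jacobian of the scaling $\vaction_t=\sqrt{\bar\alpha_t}\tilde\vaction_0+\dots$. Hence $\nabla_{\vaction_t}\log\phi_{0|t}=\nabla_{\vaction_t}\log q_{t|0}$, and
\[
\int w(\tilde\vaction_0)\phi_{0|t}(\tilde\vaction_0\mid\vaction_t)\,d\tilde\vaction_0=\bar\alpha_t^{-d/2}Z^{i,k}\,\pi^{i,k+1}_{t,\GWFP}(\vaction_t).
\]
With these identities, the same pointwise argument applies. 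The weight is $g_{\GWFP}(\vaction_t)=h_t(\vaction_t)\bar\alpha_t^{-d/2}Z^{i,k}\pi^{i,k+1}_{t,\GWFP}(\vaction_t)$, which is positive a.e.\ because $h_t>0$ a.e. The weighted mean of the target is
\[
\frac{\int w\,\phi_{0|t}\nabla_{\vaction_t}\log q_{t|0}\,d\tilde\vaction_0}{\int w\,\phi_{0|t}\,d\tilde\vaction_0}=\frac{\nabla_{\vaction_t}\int w\,q_{t|0}\,d\tilde\vaction_0}{\int w\,q_{t|0}\,d\tilde\vaction_0}=\nabla\log\pi^{i,k+1}_{t,\GWFP}.
\]
The minimizer is therefore the desired score on the support of $h_t$, which is everything up to a null set.

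\textbf{Main obstacle.} The delicate step is the GWFP bookkeeping: the Jacobian factor relating $\phi_{0|t}$ and $q_{t|0}$, and making sure the reweighting only multiplies $g$ without shifting the conditional mean. A secondary issue is justifying the exchange of gradient and integral, and finiteness of the losses. Both follow from boundedness of $u^i$, compactness of $\mathcal A^i$, and Gaussian tails, assuming $s$ ranges over square-integrable functions.
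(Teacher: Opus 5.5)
Your proof is essentially the paper's: it uses the same weights $g_{\OMD}=Z^{i,k}_{\mathbf{OMD}}\,\pi^{i,k+1}_{t,\OMD}$ and $g_{\GWFP}\propto h_t\,\pi^{i,k+1}_{t,\GWFP}$ and the same Tweedie/weighted denoising-score-matching identity, only run from the tractable loss back to \Cref{eq:reweighted-formula} instead of forward. One slip in the step you flagged as delicate: comparing the Gaussians gives $\phi_{0|t}(\tilde{\vaction}_0\mid\vaction_t)=\bar\alpha_t^{d/2}\,q_{t|0}(\vaction_t\mid\tilde{\vaction}_0)$, not $\bar\alpha_t^{-d/2}$ (the wider Gaussian $\phi_{0|t}$ has the lower peak), so your $g_{\GWFP}$ should carry $\bar\alpha_t^{d/2}$ as in the paper; this is harmless, because the conclusion depends only on $g$ being positive and on the ratio that defines the weighted mean.
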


\begin{proof}
Fix player $i$, equilibrium-solving iteration $k$, and diffusion time $t\in(0,T]$. Recall that
\[
q_{t|0}(\vaction_t\mid \vaction_0)
=
\mathcal{N}\left(
    \vaction_t;
    \sqrt{\bar{\alpha}_t}\vaction_0,
    (1-\bar{\alpha}_t)I
\right).
\]
For either $\mathrm{solver}\in\{\mathrm{GWFP},\OMD\}$, define
\[
\pi^{i,k+1}_{t,\mathrm{solver}}(\vaction_t)
=
\int
\pi^{i,k+1}_{\mathrm{solver}}(\vaction_0)
q_{t|0}(\vaction_t\mid \vaction_0)\,d\vaction_0.
\]

We first recall the denoising score-matching identity. For any target density $\rho_0$ and its noised marginal
\[
\rho_t(\vaction_t)
=
\int
\rho_0(\vaction_0)q_{t|0}(\vaction_t\mid \vaction_0)\,d\vaction_0,
\]

differentiating under the integral gives

\begin{align}
\nabla_{\vaction_t}\log \rho_t(\vaction_t)
&=
\frac{
    \int
    \rho_0(\vaction_0)
    q_{t|0}(\vaction_t\mid \vaction_0)
    \nabla_{\vaction_t}\log q_{t|0}(\vaction_t\mid \vaction_0)
    \,d\vaction_0
}{
    \rho_t(\vaction_t)
}
\nonumber\\
&=
\mathbb{E}_{\vaction_0\sim \rho_{0|t}(\cdot\mid \vaction_t)}
\left[
    \nabla_{\vaction_t}\log q_{t|0}(\vaction_t\mid \vaction_0)
\right].
\label{eq:proof-tweedie}
\end{align}

For any nonnegative function $w_t(\vaction_t)$ for which the expectations are finite,
\begin{align}
&\int
w_t(\vaction_t)\rho_t(\vaction_t)
\left\|
    s_{\theta}(\vaction_t,t)
    -
    \nabla_{\vaction_t}\log \rho_t(\vaction_t)
\right\|_2^2
\,d\vaction_t
\nonumber\\
&\qquad =
\iint
w_t(\vaction_t)\rho_0(\vaction_0)q_{t|0}(\vaction_t\mid \vaction_0)
\left\|
    s_{\theta}(\vaction_t,t)
    -
    \nabla_{\vaction_t}\log q_{t|0}(\vaction_t\mid \vaction_0)
\right\|_2^2
\,d\vaction_0\,d\vaction_t
+
C_t,
\label{eq:weighted-dsm}
\end{align}

where $C_t$ does not depend on $\theta$. This follows directly from Equation~\eqref{eq:proof-tweedie} by expanding the squared norms.

\paragraph{OMD.}
From Equation~\eqref{eq:mirror-decent-update}, the OMD update is

\begin{equation}
\pi^{i,k+1}_{\OMD}(\vaction_0)
=
\frac{1}{Z^{i,k}_{\OMD}}
\pi^{i,k}(\vaction_0)
\exp\left(
    \frac{1}{\eta}
    u^i(\vaction_0;\pi^{-i,k})
\right).
\label{eq:proof-omd-target}
\end{equation}

In Equation~\eqref{eq:reweighted-formula}, set $p_t = \pi^{i,k+1}_{t, \OMD}$ and choose 
\begin{equation}
g_{\OMD}(\vaction_t)
=
Z^{i,k}_{\OMD}
\pi^{i,k+1}_{t,\OMD}(\vaction_t).
\label{eq:proof-g-omd}
\end{equation}

Substituting both $p_t$ and $g_\OMD$ into Equation~\eqref{eq:reweighted-formula} gives
\[
\mathcal{L}^{g_{\OMD}}(\theta;t)
=
Z_{\OMD}^{i,k}
\int
\pi_{t,\OMD}^{i,k+1}(\mathbf{a}_t)
\left\|
s_{\theta}(\mathbf{a}_t,t)
-
\nabla_{\mathbf{a}_t}
\log \pi_{t,\OMD}^{i,k+1}(\mathbf{a}_t)
\right\|_2^2
\, d\mathbf{a}_t.
\]
Then apply Equation~\ref{eq:weighted-dsm} with
\[
\rho_0 = \pi_{\OMD}^{i,k+1},
\qquad
\rho_t = \pi_{t,\OMD}^{i,k+1},
\qquad
w_t(\mathbf{a}_t) = Z_{\OMD}^{i,k}.
\]
Then
\[
\mathcal{L}^{g_{\OMD}}(\theta;t)
=
Z_{\OMD}^{i,k}
\iint
\pi_{\OMD}^{i,k+1}(\mathbf{a}_0)
q_{t|0}(\mathbf{a}_t \mid \mathbf{a}_0)
\left\|
s_{\theta}(\mathbf{a}_t,t)
-
\nabla_{\mathbf{a}_t}
\log q_{t|0}(\mathbf{a}_t \mid \mathbf{a}_0)
\right\|_2^2
\, d\mathbf{a}_0 \, d\mathbf{a}_t
+
C_t^{\OMD}, 
\]
where $C_t^{\OMD}$ does not depend on $\theta$. 
Substituting the OMD update and cancelling $Z_{\mathrm{OMD}}^{i,k}$ yields
\[
\begin{aligned}
\mathcal{L}^{g_{\mathrm{OMD}}}(\theta;t)
&=
\iint
\pi^{i,k}(\mathbf{a}_0)
q_{t|0}(\mathbf{a}_t \mid \mathbf{a}_0)
\exp\left(
\frac{1}{\eta}
u^i(\mathbf{a}_0;\pi^{-i,k})
\right)
\\
&\qquad \times
\left\|
s_{\theta}(\mathbf{a}_t,t)
-
\nabla_{\mathbf{a}_t}
\log q_{t|0}(\mathbf{a}_t \mid \mathbf{a}_0)
\right\|_2^2
\, d\mathbf{a}_0 \, d\mathbf{a}_t
+
C_t^{\mathrm{OMD}}
\\
&=
\mathbb{E}_{\mathbf{a}_0 \sim \pi^{i,k}}
\left[
\mathbb{E}_{\mathbf{a}_t \sim q_{t|0}(\cdot \mid \mathbf{a}_0)}
\left[
\exp\left(
\frac{1}{\eta}
u^i(\mathbf{a}_0;\pi^{-i,k})
\right)
\left\|
s_{\theta}(\mathbf{a}_t,t)
-
\nabla_{\mathbf{a}_t}
\log q_{t|0}(\mathbf{a}_t \mid \mathbf{a}_0)
\right\|_2^2
\right]
\right]
+
C_t^{\mathrm{OMD}}, 
\end{aligned}
\]
which is Equation~\eqref{eq:appdx-OMD-loss} up to an additive constant independent
of $\theta$.

% \newpage 

% Choose

% \begin{equation}
% g_{\OMD}(\vaction_t)
% =
% Z^{i,k}_{\OMD}
% \pi^{i,k+1}_{t,\OMD}(\vaction_t).
% \label{eq:proof-g-omd}
% \end{equation}

% Substituting Equation~\eqref{eq:proof-g-omd} into
% Equation~\eqref{eq:reweighted-formula} and applying
% Equation~\eqref{eq:weighted-dsm} gives

% \begin{align}
% \mathcal{L}^{g_{\OMD}}(\theta^i;t)
% &=
% \iint
% \pi^{i,k}(\vaction_0)
% q_{t|0}(\vaction_t\mid \vaction_0)
% \exp\left(
%     \frac{1}{\eta}
%     u^i(\vaction_0;\pi^{-i,k})
% \right)
% \nonumber\\[-1mm]
% &\qquad\qquad\quad\times
% \left\|
%     s_{\theta^i}(\vaction_t,t)
%     -
%     \nabla_{\vaction_t}\log q_{t|0}(\vaction_t\mid \vaction_0)
% \right\|_2^2
% \,d\vaction_t\,d\vaction_0
% +
% C_t^{\OMD}
% \nonumber\\
% &=
% \mathbb{E}_{\vaction_0\sim\pi^{i,k}}
% \left[
%     \mathbb{E}_{\vaction_t\sim q_{t|0}(\cdot\mid \vaction_0)}
%     \left[
%         \exp\left(
%             \frac{1}{\eta}
%             u^i(\vaction_0;\pi^{-i,k})
%         \right)
% \right.\right.
% \nonumber\\[-1mm]
% &\hspace{31mm}\left.\left.
%         \times
%         \left\|
%             s_{\theta^i}(\vaction_t,t)
%             -
%             \nabla_{\vaction_t}
%             \log q_{t|0}(\vaction_t\mid \vaction_0)
%         \right\|_2^2
%     \right]
% \right]
% +
% C_t^{\OMD}.
% \end{align}

% which is Equation~\eqref{eq:OMD-loss} up to an additive constant independent
% of $\theta^i$.

\paragraph{GWFP.}
From Equation~\eqref{eq:FP-update-rule}, the GWFP target is
\begin{equation}
\pi^{i,k+1}_{\mathrm{GWFP}}(\vaction_0)
=
\frac{1}{Z^{i,k}_{\mathrm{GWFP}}}
\exp\left(
    \frac{1}{\eta^k}
    u^i(\vaction_0;\Pi^{-i,k})
\right),
\label{eq:proof-gwfp-target}
\end{equation}

where $Z^{i,k}_{\mathrm{GWFP}}$ is the corresponding normalization constant. In Equation~\eqref{eq:reweighted-formula}, set $p_t = \pi^{i,k+1}_{t, \GWFP}$ and choose 
\begin{equation}
g_{\mathrm{GWFP}}(\vaction_t)
=
\bar{\alpha}_t^{d_i/2}
Z^{i,k}_{\mathrm{GWFP}}
h_t(\vaction_t)
\pi^{i,k+1}_{t,\mathrm{GWFP}}(\vaction_t),
\label{eq:proof-g-gwfp}
\end{equation}
where $d^i$ is player $i$'s action dimension. Substituting both $p_t$ and $g_\GWFP$ into Equation~\eqref{eq:reweighted-formula} and applying Equation~\eqref{eq:weighted-dsm} gives
\begin{align}
\mathcal{L}^{g_{\mathrm{GWFP}}}(\theta;t)
&=
\bar{\alpha}_t^{d_i/2}
\iint
h_t(\vaction_t)
\exp\left(
    \frac{1}{\eta^k}
    u^i(\vaction_0;\Pi^{-i,k})
\right)
q_{t|0}(\vaction_t\mid \vaction_0)
\nonumber\\[-1mm]
&\qquad\qquad\quad\times
\left\|
    s_{\theta}(\vaction_t,t)
    -
    \nabla_{\vaction_t}\log q_{t|0}(\vaction_t\mid \vaction_0)
\right\|_2^2
\,d\vaction_0\,d\vaction_t
+
C_t^{\mathrm{GWFP}}.
\label{eq:proof-gwfp-mid}
\end{align}

Now define
\[
\phi_{0|t}(\vaction_0\mid \vaction_t)
=
\mathcal{N}\left(
    \vaction_0;
    \frac{\vaction_t}{\sqrt{\bar{\alpha}_t}},
    \frac{1-\bar{\alpha}_t}{\bar{\alpha}_t}I
\right).
\]

Comparing the Gaussian densities gives
\begin{equation}
\phi_{0|t}(\vaction_0\mid \vaction_t)
=
\bar{\alpha}_t^{d_i/2}
q_{t|0}(\vaction_t\mid \vaction_0),
\label{eq:proof-reverse-density}
\end{equation}
and therefore
\begin{equation}
\nabla_{\vaction_t}\log \phi_{0|t}(\vaction_0\mid \vaction_t)
=
\nabla_{\vaction_t}\log q_{t|0}(\vaction_t\mid \vaction_0)
=
-\frac{
    \vaction_t-\sqrt{\bar{\alpha}_t}\vaction_0
}{
    1-\bar{\alpha}_t
}.
\label{eq:proof-reverse-score}
\end{equation}

Applying Equations \eqref{eq:proof-reverse-density} and \eqref{eq:proof-reverse-score} to Equation \eqref{eq:proof-gwfp-mid} yields

\begin{align}
\mathcal{L}^{g_{\mathrm{GWFP}}}(\theta;t)
&=
\mathbb{E}_{\substack{
\vaction_t\sim h_t\\
\tilde \vaction_0\sim\phi_{0|t}(\cdot\mid \vaction_t)
}}
\left[
\exp\left(
    \frac{1}{\eta^k}
    u^i(\tilde \vaction_0;\Pi^{-i,k})
\right)
\right.
\nonumber\\[-1mm]
&\hspace{27mm}\left.
\times
\left\|
    s_{\theta}(\vaction_t,t)
    -
    \nabla_{\vaction_t}
    \log\phi_{0|t}(\tilde \vaction_0\mid \vaction_t)
\right\|_2^2
\right]
+
C_t^{\mathrm{GWFP}},
\end{align}

which is the GWFP loss up to an additive constant independent of $\theta$.

\end{proof}

Finally, Gaussian corruption implies $\pi^{i,k+1}_{t,\mathrm{GWFP}}(\vaction_t)>0$ and $\pi^{i,k+1}_{t,\OMD}(\vaction_t)>0$ for $t>0$. Because $h_t$ is strictly positive on its support, both $g_{\mathrm{GWFP}}$ and $g_{\OMD}$ are positive almost everywhere. Equation~\eqref{eq:reweighted-formula} is therefore globally minimized precisely when
\[
s_{\theta}(\vaction_t,t)
=
\nabla_{\vaction_t}
\log \pi^{i,k+1}_{t,\mathrm{solver}}(\vaction_t)
\]
almost everywhere. This proves the result for both
$\mathrm{solver}\in\{\mathrm{GWFP},\OMD\}$.

\subsection{Proof of \Cref{prop:elbo-difference}}\label{appdx:noise-prediction-form}
\noiseprediction*

\begin{proof}
Fix $\vaction_0$. By the ELBO decomposition in \citet{ho2020denoising},
\begin{align}
\operatorname{ELBO}_{\theta}(\vaction_0)
={}&
\underbrace{
\mathbb{E}_{q(\vaction_1\mid \vaction_0)}
\bigl[\log p_\theta(\vaction_0\mid \vaction_1)\bigr]
}_{\text{Reconstruction term}}
\nonumber\\
&-
\underbrace{
\sum_{t=2}^{T}
\mathbb{E}_{q(\vaction_t\mid \vaction_0)}
\left[
D_{\mathrm{KL}}
\left(
q(\vaction_{t-1}\mid \vaction_t,\vaction_0)
\,\|\,p_\theta(\vaction_{t-1}\mid \vaction_t)
\right)
\right]
}_{\text{Denoising KL terms}}
\nonumber\\
&-
\underbrace{
D_{\mathrm{KL}}\!\left(
q(\vaction_T\mid \vaction_0)
\,\|\,p(\vaction_T)
\right)
}_{\text{Terminal prior KL term}}.
\label{eq:elbo-decomp-prop}
\end{align}

% \begin{align}
% \operatorname{ELBO}_{\theta}(\vaction_0)
% ={}&
% \mathbb{E}_{q(\vaction_1\mid \vaction_0)}
%     \bigl[\log p_\theta(\vaction_0\mid \vaction_1)\bigr]
% \nonumber\\
% &-
% \sum_{t=2}^{T}
% \mathbb{E}_{q(\vaction_t\mid \vaction_0)}
% \left[
% D_{\mathrm{KL}}
% \left(
% q(\vaction_{t-1}\mid \vaction_t,\vaction_0)
% \,\|\,p_\theta(\vaction_{t-1}\mid \vaction_t)
% \right)
% \right]
% \nonumber\\
% &-
% \underbrace{
% D_{\mathrm{KL}}\!\left(q(a_T\mid a_0)\,\|\,p(a_T)\right)
% }_{\text{Terminal prior}}.
% \label{eq:elbo-decomp-prop}
% \end{align}

% \begin{align}
% \operatorname{ELBO}_{\theta}(\vaction_0)
% ={}&
% \mathbb{E}_{q(\vaction_1\mid \vaction_0)}
%     \bigl[\log p_\theta(\vaction_0\mid \vaction_1)\bigr]
% \nonumber\\
% &-
% \sum_{t=2}^{T}
% \mathbb{E}_{q(\vaction_T\mid \vaction_0)}
% \left[
% D_{\mathrm{KL}}
% \left(
% q(\vaction_{t-1}\mid \vaction_T,\vaction_0)
% \,\|\,p_\theta(\vaction_{t-1}\mid \vaction_T)
% \right)
% \right]
% \nonumber\\
% &-
% D_{\mathrm{KL}}
% \left(q(\vaction_T\mid \vaction_0)\,\|\,p(\vaction_T)\right).
% \label{eq:elbo-decomp-prop}
% \end{align}

Under the forward process,
\[
\vaction_t
=
\sqrt{\bar{\alpha}_t}\,\vaction_0
+
\sqrt{1-\bar{\alpha}_t}\,\epsilon,
\qquad
\epsilon\sim\mathcal{N}(0,I).
\]
This implies
\begin{equation}\label{eq:noise-equivalent-form}
    \epsilon
=
\frac{\mathbf{a}_t-\sqrt{\bar{\alpha}_t}\,\mathbf{a}_0}
{\sqrt{1-\bar{\alpha}_t}}.
\end{equation}
\paragraph{Proof Strategy} 
We first express the reconstruction term and each denoising KL term as an expectation of $D(\theta;a_0,t,\epsilon)$, with additive constants independent of $\theta$. We then subtract the two ELBOs: the shared terminal prior KL term and additive constants cancel, leaving a difference of expected $D$ terms from which we construct an unbiased estimator.
\paragraph{Denoising KL Terms}
Our goal here is to prove that $D_{\mathrm{KL}} \bigg(q(\vaction_{t-1}\mid \vaction_t,\vaction_0)
\,\|\,p_\theta(\vaction_{t-1}\mid \vaction_t)\bigg)$ is equivalent to $D(\theta;a_0, t, \epsilon)$ up to a constant that does not depend on $\theta$. 

For $t \geq 2$, citing Equation 6 in \citet{ho2020denoising} we know that:
\begin{equation}
q(\vaction_{t-1}\mid \vaction_t,\vaction_0)
=
\mathcal{N}\!\left(
\vaction_{t-1};
\tilde{\mu}_t(\vaction_t,\vaction_0),
\tilde{\beta}_t I
\right),
\end{equation}
where $\tilde{\beta}_t
=
\frac{1-\bar{\alpha}_{t-1}}{1-\bar{\alpha}_t}\beta_t$ and $\tilde{\mu}_t(\vaction_t,\vaction_0)
=
\frac{\sqrt{\bar{\alpha}_{t-1}}\beta_t}
     {1-\bar{\alpha}_t}\vaction_0
+
\frac{\sqrt{\alpha_t}(1-\bar{\alpha}_{t-1})}
     {1-\bar{\alpha}_t}\vaction_t.$
     
Use \Cref{eq:noise-equivalent-form} and 
$\bar{\alpha}_t=\alpha_t\bar{\alpha}_{t-1}$, we can rewrite this mean $\tilde{\mu}_t(\vaction_t,\vaction_0)$ in terms of $\epsilon$:
\begin{align}
&\frac{1}{\sqrt{\alpha_t}}
\left(
\vaction_t
-
\frac{\beta_t}{\sqrt{1-\bar{\alpha}_t}}\epsilon
\right)
\nonumber\\
&\quad=
\frac{1}{\sqrt{\alpha_t}}
\left(
\vaction_t
-
\frac{\beta_t}{1-\bar{\alpha}_t}
\left(
\vaction_t-\sqrt{\bar{\alpha}_t}\vaction_0
\right)
\right)
\nonumber\\
&\quad=
\frac{1-\bar{\alpha}_t-\beta_t}
     {\sqrt{\alpha_t}(1-\bar{\alpha}_t)}\vaction_t
+
\frac{\beta_t\sqrt{\bar{\alpha}_t}}
     {\sqrt{\alpha_t}(1-\bar{\alpha}_t)}\vaction_0
\nonumber\\
&\quad=
\frac{\sqrt{\alpha_t}(1-\bar{\alpha}_{t-1})}
     {1-\bar{\alpha}_t}\vaction_t
+
\frac{\beta_t\sqrt{\bar{\alpha}_{t-1}}}
     {1-\bar{\alpha}_t}\vaction_0
\nonumber\\
&\quad=
\tilde{\mu}_t(\vaction_t,\vaction_0).
\end{align}
To obtain the third equality above, we use the identity $1-\bar{\alpha}_t-\beta_t
=
\alpha_t(1-\bar{\alpha}_{t-1})$.

Now we focus on the the reverse transition which has fixed covariance $p_\theta(\vaction_{t-1}\mid \vaction_t)
=
\mathcal{N}\!\left(
\vaction_{t-1};
\mu_\theta(\vaction_t,t),
\sigma_t^2 I
\right)$ with score-parameterized mean $\mu_\theta(\vaction_t,t)
=
\frac{1}{\sqrt{\alpha_t}}
\left(
\vaction_t+\beta_t s_\theta(\vaction_t,t)
\right).$
Subtracting the two means gives
\begin{align}
\tilde{\mu}_t(\vaction_t,\vaction_0)
-
\mu_\theta(\vaction_t,t)
&=
\frac{1}{\sqrt{\alpha_t}}
\left[
\vaction_t
-
\frac{\beta_t}{\sqrt{1-\bar{\alpha}_t}}\epsilon
-
\vaction_t
-
\beta_t s_\theta(\vaction_t,t)
\right]
\nonumber\\
&=
-\frac{\beta_t}{\sqrt{\alpha_t}}
\left(
s_\theta(\vaction_t,t)
+
\frac{\epsilon}{\sqrt{1-\bar{\alpha}_t}}
\right).
\end{align}
Consequently,
\begin{equation}
\left\|\tilde{\mu}_t-\mu_\theta\right\|_2^2
=
\frac{\beta_t^2}{\alpha_t}
\left\|
s_\theta(\vaction_t,t)
+
\frac{\epsilon}{\sqrt{1-\bar{\alpha}_t}}
\right\|_2^2.
\label{eq:posterior-reverse-mean-difference}
\end{equation}
Recall $d$ is the dimension of $\vaction_t$. Now we can focus on the denoising KL terms. Note the KL between these two Gaussians is 
\begin{align*}
    D_{\mathrm{KL}}\!\left(
q(\vaction_{t-1}\mid \vaction_t,\vaction_0)
\,\|\,
p_\theta(\vaction_{t-1}\mid \vaction_t)
\right)
\nonumber =
\frac{1}{2}
\left[
d\frac{\tilde{\beta}_t}{\sigma_t^2}
-d
+d\log\frac{\sigma_t^2}{\tilde{\beta}_t}
+
\frac{\left\|\tilde{\mu}_t-\mu_\theta\right\|_2^2}
     {\sigma_t^2}
\right].
\end{align*}
Plug in the expression of mean difference in \Cref{eq:posterior-reverse-mean-difference} we obtain
\begin{align}
D_{\mathrm{KL}}(q\|p_\theta)
=
\frac{d}{2}
\left(
\frac{\tilde{\beta}_t}{\sigma_t^2}
-1
+\log\frac{\sigma_t^2}{\tilde{\beta}_t}
\right)
\nonumber +
\underbrace{
\frac{\beta_t^2}{2\sigma_t^2\alpha_t}
}_{C_t}
\left\|
s_\theta(\vaction_t,t)
+
\frac{\epsilon}{\sqrt{1-\bar{\alpha}_t}}
\right\|_2^2
\nonumber
=
c_t+D(\theta;\vaction_0,t,\epsilon).
\end{align}
We denote $C_t = \frac{\beta_t^2}{2\sigma_t^2\alpha_t}$, a constant that does not depend on $\theta$. This finishes our goal that the denoising KL term is equivalent to $D(\theta; a_0, t, \epsilon)$. 

\paragraph{Reconstruction Term}
We next consider the reconstruction term at $t=1$. Since $\bar{\alpha}_1=\alpha_1$ and
$1-\bar{\alpha}_1=\beta_1$, the forward sample satisfies $\vaction_1
=
\sqrt{\alpha_1}\vaction_0+\sqrt{\beta_1}\epsilon$ Therefore, we can rewrite $\vaction_0
=
\frac{\vaction_1-\sqrt{\beta_1}\epsilon}
{\sqrt{\alpha_1}}.$ Subtracting the reverse-model mean $\mu_\theta(\vaction_1, 1)$ gives
\begin{align}\label{eq:reconstruction-mean-difference}
\vaction_0-\mu_\theta(\vaction_1,1)
&=
\frac{
\vaction_1-\sqrt{\beta_1}\epsilon
-\vaction_1-\beta_1s_\theta(\vaction_1,1)
}{\sqrt{\alpha_1}}
\nonumber\\
&=
-\frac{\beta_1}{\sqrt{\alpha_1}}
\left(
s_\theta(\vaction_1,1)
+\frac{\epsilon}{\sqrt{\beta_1}}
\right).
\end{align}

For the Gaussian decoder
$p_\theta(\vaction_0\mid\vaction_1)
=\mathcal N(\vaction_0;\mu_\theta(\vaction_1,1),\sigma_1^2I)$
with fixed $\sigma_1^2>0$, by plugging in \Cref{eq:reconstruction-mean-difference}, we obtain
\begin{align}
\log p_\theta(\vaction_0\mid\vaction_1)
&=
-\frac{d}{2}\log(2\pi\sigma_1^2)
-\frac{
\|\vaction_0-\mu_\theta(\vaction_1,1)\|_2^2
}{2\sigma_1^2}
\nonumber\\
&=
-\frac{d}{2}\log(2\pi\sigma_1^2)
-\underbrace{
\frac{\beta_1^2}{2\sigma_1^2\alpha_1}
}_{C_1}
\left\|
s_\theta(\vaction_1,1)
+\frac{\epsilon}{\sqrt{1-\bar{\alpha}_1}}
\right\|_2^2
\nonumber\\
&=
-\frac{d}{2}\log(2\pi\sigma_1^2)
-D(\theta;\vaction_0,1,\epsilon).
\end{align}

The coefficient $C_1$ has the same form as
$C_t=\beta_t^2/(2\sigma_t^2\alpha_t)$ for $t\ge2$. 

\paragraph{Combining the Reconstruction Term and the denoising KL term}
Because the two policy iterates share the same forward schedule,
reverse-variance schedule, and terminal prior, the same $\const(\vaction_0)$ applies
to both. Consequently, the constant terms get cancelled and we obtain
\begin{align}
\operatorname{ELBO}^{k}(\vaction_0)
-
\operatorname{ELBO}^{\behavioriteration}(\vaction_0)
&=
\sum_{t=1}^{T}
\mathbb E_{\epsilon}
\left[
D(\theta^{\behavioriteration};\vaction_0,t,\epsilon)
-
D(\theta^{k};\vaction_0,t,\epsilon)
\right].
\label{eq:elbo-difference-proof}
\end{align}

Finally, because $t\sim\operatorname{Unif}\{1,\ldots,T\}$,
\begin{align*}
\mathbb E_{t,\epsilon}[\isestimator]
&=
T\,
\mathbb E_{t,\epsilon}
\left[
D(\theta^{\behavioriteration};\vaction_0,t,\epsilon)
-
D(\theta^{k};\vaction_0,t,\epsilon)
\right] \\
&=
\sum_{t=1}^{T}
\mathbb E_{\epsilon}
\left[
D(\theta^{\behavioriteration};\vaction_0,t,\epsilon)
-
D(\theta^{k};\vaction_0,t,\epsilon)
\right] \\
&=
\operatorname{ELBO}^{\behavioriteration}(\vaction_0)
-
\operatorname{ELBO}^{k}(\vaction_0).
\end{align*}
Thus, $\isestimator$ is an unbiased estimator of the ELBO difference.
\end{proof}

\section{Additional Experimental Results}
\label{app:additional-results}

\subsection{Additional Forest Game Visualizations}
\label{forest-vis}

Figure \ref{fig:forest-all} provides the complete strategy visualization for all six forest configurations reported in Table \ref{tab:forest-results}. Although the payoff rule is unchanged, each tree layout induces a different spatial equilibrium. 

\begin{figure*}[h]
    \centering
    \includegraphics[width=\textwidth]
        {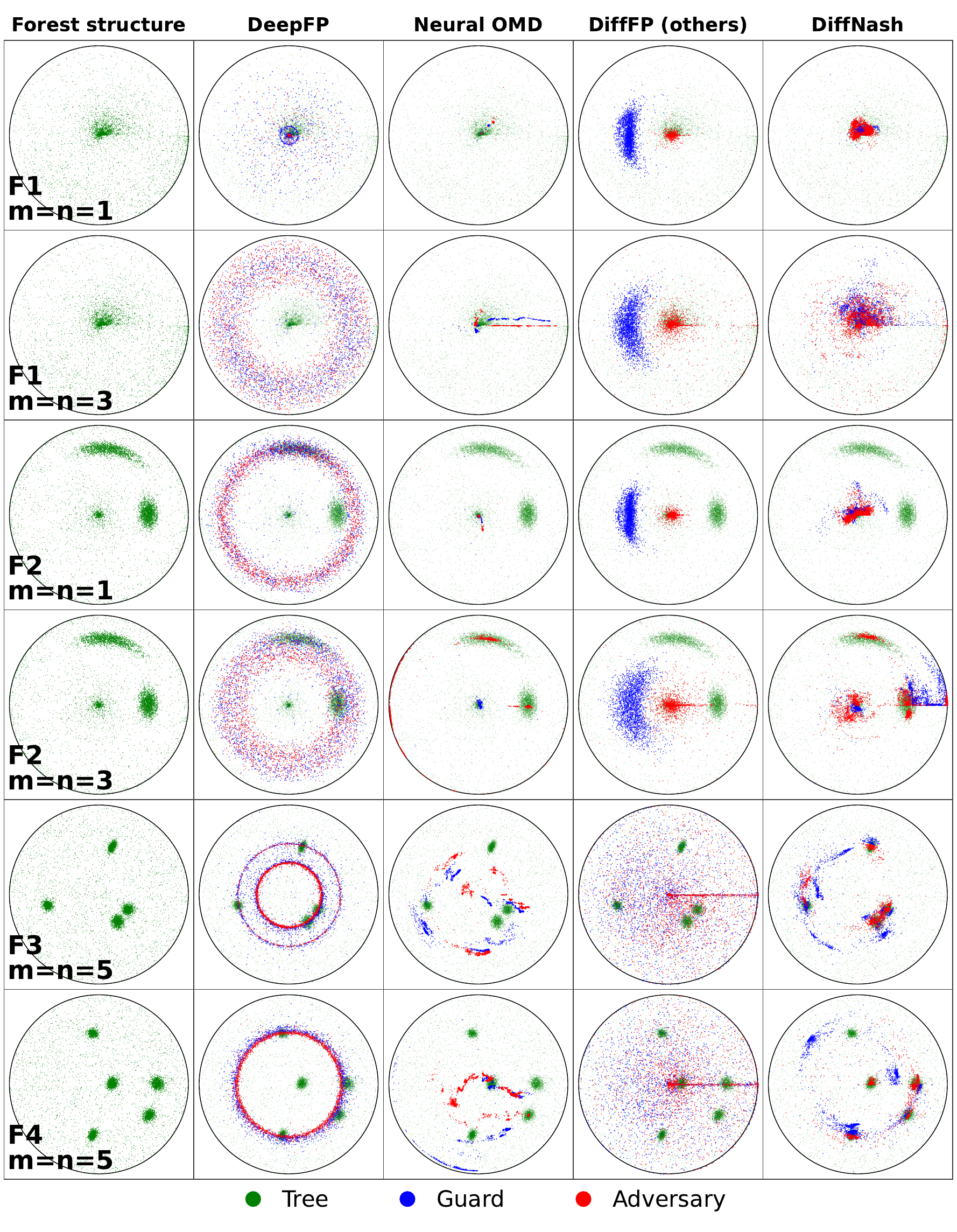}
    \caption{
        \textbf{Learned strategies across all forest security games.}
        Green points denote trees, blue points defender actions, and red points
        attacker actions. Rows vary both the forest geometry and the number of
        resources $m=n$. DiffNash obtains the lowest mean Nash gap in all six
        configurations; Figure~\ref{fig:forest} highlights two representative
        multi-resource cases in the main text.
    }
    \label{fig:forest-all}
\end{figure*}

\subsection{10-dimensional Continuous Colonel Blotto}
\label{app:10d-blotto}

Figure \ref{fig:blotto-all} shows the learned strategy distributions for the $n=6$ and $n=10$ Blotto experiments reported in Table \ref{tab:blotto-results}. 

\begin{figure*}[t]
    \centering
    \includegraphics[width=\textwidth]
        {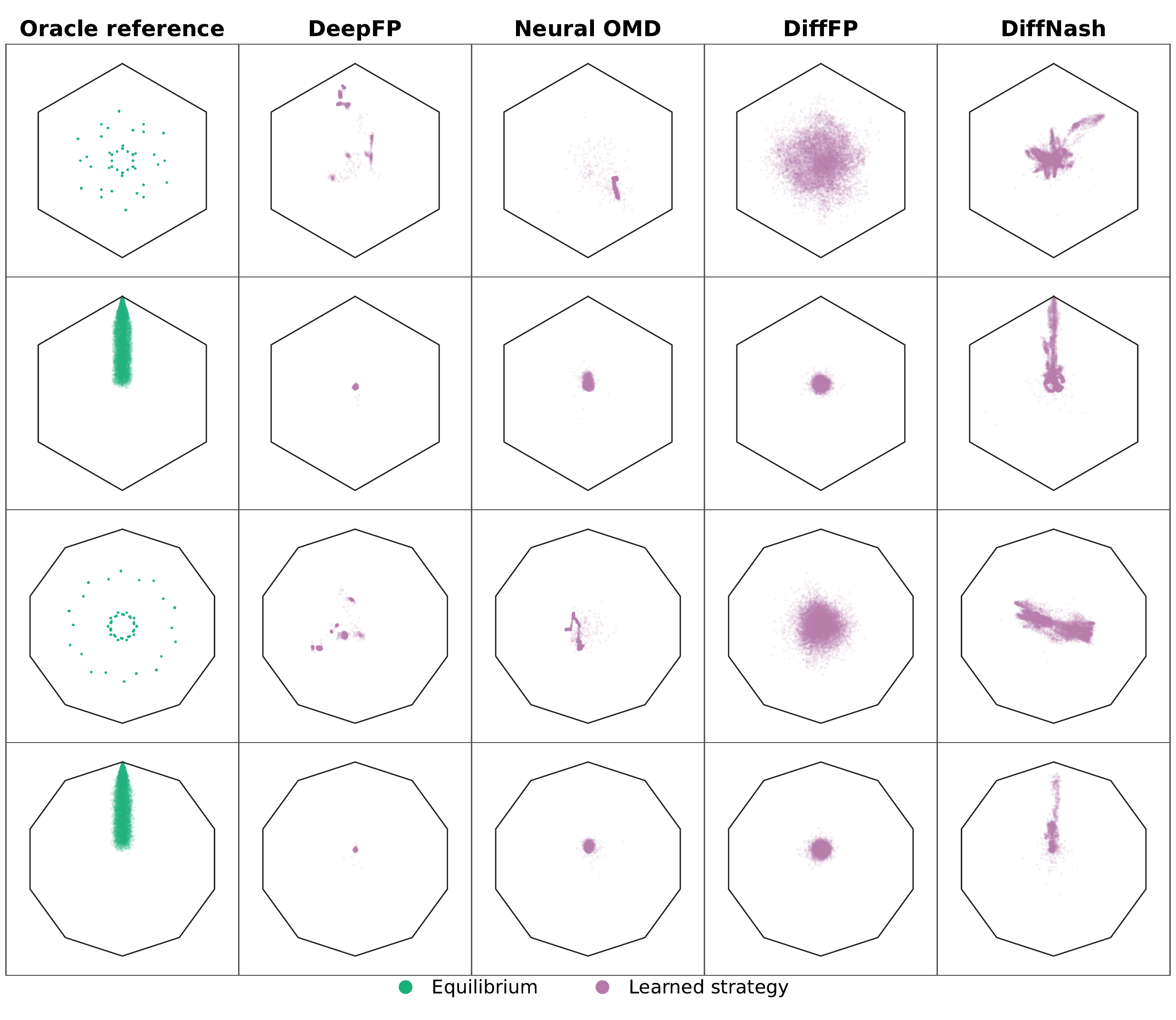}
    \caption{
        \textbf{Continuous Colonel Blotto strategy distributions.}
        Representative learned strategies projected into the polygonal action-space visualization. Columns show the oracle reference and each method; rows correspond to symmetric and radial-band Blotto for $n=6$ and $n=10$, respectively. Green denotes equilibrium samples and purple the learned strategy.
    }
    \label{fig:blotto-all}
\end{figure*}

\subsection{General-Sum Forest Protection}
\label{app:general-sum}

We also consider a general-sum variant of the F2 forest protection game found in \cite{kamra2019deep}. We retain the original F2 geometry and interaction mechanics, but assign each tree separate ecological and economic values. This creates different objectives for the defender and poacher without changing the continuous action space.

Let $v_t^D$ and $v_t^P$ denote the ecological and economic values of tree $t$, respectively. For each adversary logger $l$, let $a_l = 1$ indicate that it is captured by at least one guard and let $s_{t,l}$ denote its share of tree $t$ when multiple adversary target the same tree. The payoffs are

\begin{align}
    u^D
    &=
    \sum_l
    \left[
        -(1-a_l)\sum_t s_{t,l}v_t^D
        + 4a_l
    \right], \\
    u^P
    &=
    \sum_l
    \left[
        (1-a_l)\sum_t s_{t,l}v_t^P
        - 4a_l
    \right].
\end{align}

where $a_l$ indicates whether adversary $l$ is captured and $s_{(t,l}$ is its share of tree $t$. The original zero-sum game is recovered when $v_t^D =v_t^P = 1$.

We construct the two value fields to favor different parts of the forest. For a tree $t$, let $r_t$ denote its radial distance from the center and let $k_t$ be the number of other trees within Euclidean distance $0.05$. We define

\begin{equation}
    \widetilde v_t^D
    =
    \frac{\exp(-r_t/0.3)}{1+k_t},
    \qquad
    \widetilde v_t^P
    =
    \exp\left(-\frac{1-r_t}{0.3}\right)(1+k_t).
\end{equation}

and independently rescale and clip each field to $[0.25, 4]$ while preserving mean 1, so that the overall payoff scale is comparable to the original game. Thus, the defender places greater value on isolated interior trees, while the poacher focuses more on dense patches near the boundary. On F2, the two value fields have correlation $-0.52$. 

Figure \ref{fig:forest-general-sum} compares DeepFP, Neural OMD, DiffFP, and DiffNash for $m=n \in \{1,3\}$. DiffNash attains the lowest mean Nash gap, while DeepFP and DiffFP remain more exploitable. Neural OMD performs better than the FP-based methods, but exhibits higher variance than DiffNash. The visualized seed also shows qualitatively different spatial strategies across methods. 

\begin{figure*}[t]
    \centering
    \includegraphics[width=\textwidth]
        {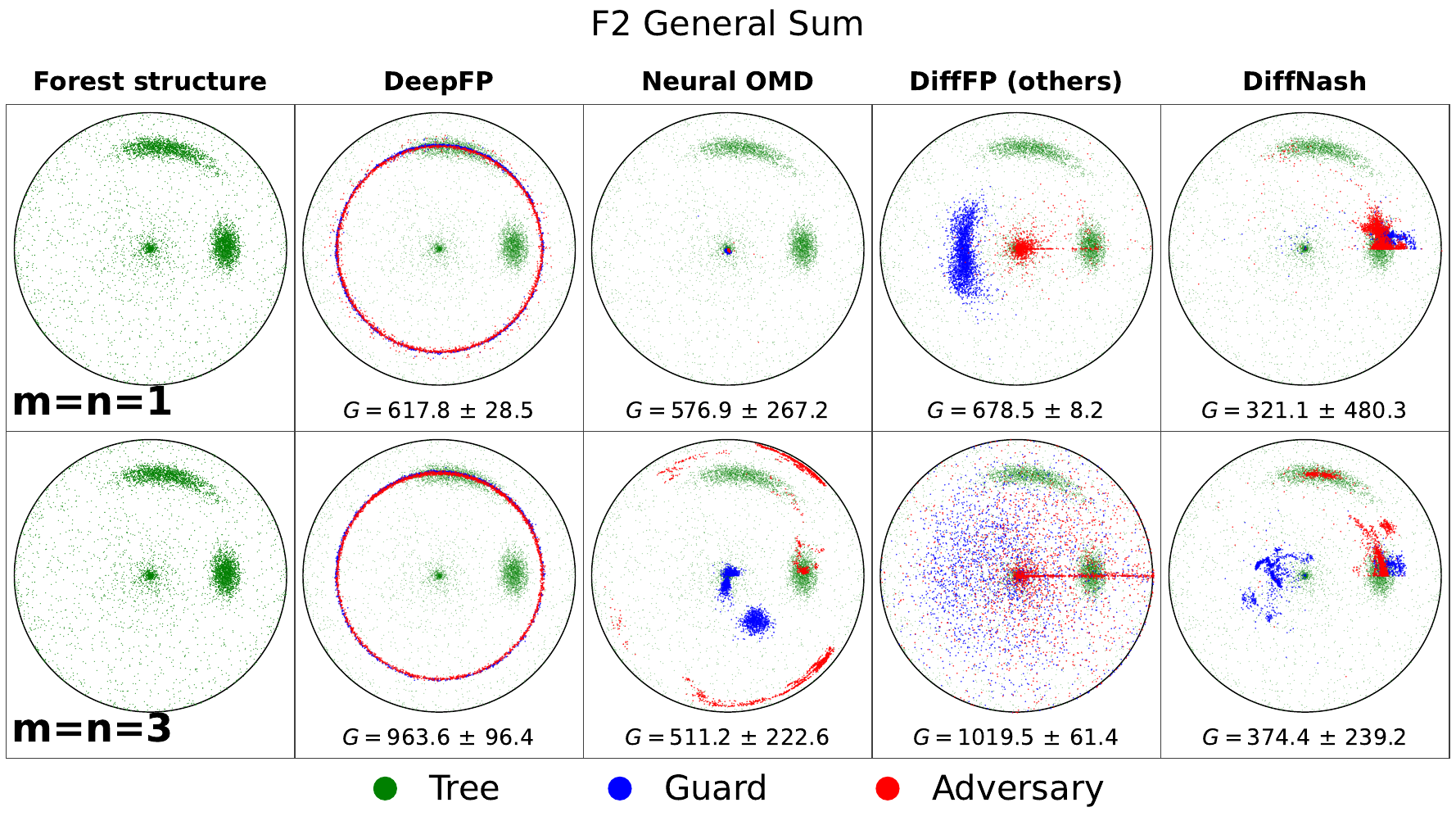}
    \caption{
        \textbf{General-sum Forest Game.}
        Learned strategies on the general-sum F2 forest game for \(m=n\in\{1,3\}\). Metrics are mean \(\pm\) 95\% confidence intervals over three seeds.
    }
    \label{fig:forest-general-sum}
\end{figure*}

\subsection{Importance-sampling Forest ablation}
\label{app:forest-ablation}

\begin{figure*}[t]
    \centering
    \includegraphics[width=0.7\textwidth]
{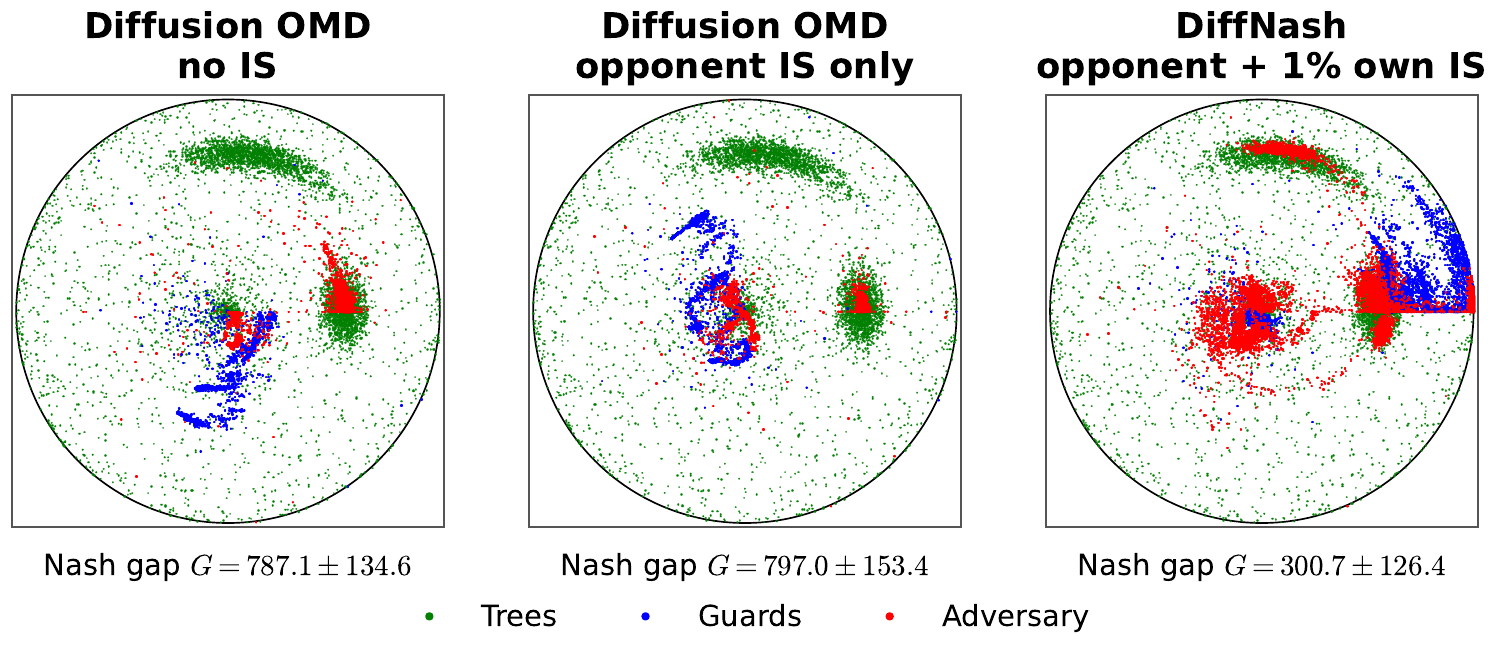}
    \caption{
        \textbf{Importance-sampling ablation on F2.}
    Removing own-policy importance sampling substantially degrades DiffNash, whereas opponent IS has little effect on convergence when own IS is omitted.
    }
    \label{fig:forest-is-ablation}
\end{figure*}

We use F2, where $m=n=3$, to isolate the roles of the two importance-sampling corrections in DiffNash. In particular, we separate the correction applied to replayed samples from the player's own past policies (\emph{own IS}) from the correction used when evaluating samples from past opponent policies (\emph{opponent IS}). Figure~\ref{fig:forest-is-ablation} shows that the two corrections have different effects. Removing own IS substantially degrades performance, increasing the Nash gap from $300.7\pm126.4$ for the full DiffNash method to $797.0\pm153.4$ with opponent IS alone. In contrast, opponent IS has little effect on convergence when own IS is omitted: the Nash gap is $787.1\pm134.6$ without either correction and $797.0\pm153.4$ with opponent IS alone. This is consistent with the Cournot ablation in the main paper (Figure \ref{fig:cournot-ablation}, where opponent IS likewise has little effect on convergence performance, but reduces score-network calls to roughly $66\%$ of those required without opponent IS. Together, these results suggest that own IS provides the primary optimization benefit by correcting the distribution shift induced by reusing the player's historical diffusion samples, while opponent IS mainly improves computational efficiency.

\subsection{Forest Game Hyperparameter Sweep}
\label{app:hyperparameter-sweep}
We fix other hyperparamters and change self action rate $\zeta$ in \Cref{fig:self-action-is-hyperparamter-study}. \Cref{fig:self-action-is-hyperparamter-study} shows that lower self-action IS rates generally yield smaller mean Nash gaps. Because importance weights are clipped, including too many samples from past iterations may bias the OMD loss in \Cref{eq:OMD-loss}. Empirically, a $1\%$ self-action IS rate suffices to stabilize training.

% , we find that lower self action IS rate generally lowers Nash gap. This is because the importance weight is clipped, so too many past iteration samples may bias the OMD training loss \Cref{eq:OMD-loss} too much. We empirically find that $1\%$ of self action IS sample is enough to stablize training. 

In Table \ref{tab:hyperparameter-sweep}, we study sensitivity to the own-action batch size, opponent batch size, and trigger threshold $\kappa$ on F2 and F3 over five seeds. 

\paragraph{Trigger $\kappa$}: The default $\kappa = 10\%$ achieves a slightly lower Nash gap on F2 than the other trigger values; otherwise, no trigger value consistently outperforms the others, demonstrating that \(\mathbf{DiffNash}\) is relatively robust to this hyperparameter. 

\paragraph{Own Buffer Size}: Increasing the own-action batch size to \(512\) degrades exploitability by \(25.9\%\) and the Nash gap by \(28.2\%\) relative to the default, suggesting that a moderate own-action batch size combined with self action importance sampling is more robust. In F2, the default batch size modestly outperforms a batch size of 128.

\paragraph{Opponent Buffer Size}: The default opponent batch size wins in all but one setting in terms of defender exploitability and Nash gap.

% For the batch-size and trigger sweeps, the default configuration performs best on average across both defender exploitability and Nash gap, although we observe no clear monotonic trend. The effect of $\zeta$ is more game-dependent: increasing it above the default $1\%$ worsens performance on F2, while F3 improves at $5\%$.

\begin{table}[t]
\caption{\textbf{Hyperparameter sensitivity on F2 (m=n=3) and F3 (m=n=5).}
Each row changes one parameter from the default DiffNash configuration
(own batch 256, opponent batch 128, $\zeta=1\%$, $\kappa=10\%$).
We report defender exploitability $\mathrm{Exploit_D}$ and Nash gap $G$;
lower is better. We add $95\%$ confidence interval to each value.}
\label{tab:hyperparameter-sweep}
\centering
\scriptsize
\setlength{\tabcolsep}{3pt}
\begin{tabular}{llcccc}
\toprule
& & \multicolumn{2}{c}{F2} & \multicolumn{2}{c}{F3} \\
Setting & Value &
$\mathrm{Exploit_D}$ & $\mathrm{Gap}$ &
$\mathrm{Exploit_D}$ & $\mathrm{Gap}$ \\
\midrule
\textbf{Default}
& --
& $\mathbf{322.5}$ $\pm$ 110.1 & $\mathbf{300.7}$ $\pm$ 126.4
& $559.7$ $\pm$ 76.4 & $\mathbf{528.2}$ $\pm$ 142.7 \\
\midrule
Own batch
& 128
& 388.9 $\pm$ 62.0 & 358.7 $\pm$ 88.2
& $\mathbf{553.5}$ $\pm$ 250.5 & 510.5 $\pm$ 271.8 \\
& 512
& 441.5 $\pm$ 53.0 & 431.5 $\pm$ 66.7
& 669.2 $\pm$ 87.8 & 631.4 $\pm$ 75.8 \\
\midrule
Opponent batch
& 64
& 442.0 $\pm$ 179.8 & 419.5 $\pm$ 200.5
& 679.8 $\pm$ 289.7 & 667.2 $\pm$ 292.1 \\
& 256
& 397.9 $\pm$ 129.4 & 385.5 $\pm$ 141.3
& 576.7 $\pm$ 209.6 & 485.0 $\pm$ 246.6 \\
\midrule
% IS fraction $\zeta$
% & $2.5\%$
% & 385.8 $\pm$ 131.1 & 372.0 $\pm$ 126.9
% & 591.4 $\pm$ 56.5 & 538.4 $\pm$ 68.3 \\
% & $5\%$
% & 502.5 $\pm$ 176.6 & 502.7 $\pm$ 187.4
% & 391.2 $\pm$ 363.5 & 348.6 $\pm$ 374.6 \\
% \midrule
Trigger $\kappa$
& $5\%$
& 372.2 $\pm$ 39.9 & 361.9 $\pm$ 51.6
& 576.7 $\pm$ 94.7 & 531.8 $\pm$ 126.4 \\
& $15\%$
& 373.0 $\pm$ 93.5 & 366.2 $\pm$ 81.2
& 570.3 $\pm$ 189.1 & \textbf{502.0} $\pm$ 245.7 \\
\bottomrule
\end{tabular}
\end{table}

\begin{figure*}[t]
    \centering
    \includegraphics[width=0.5\textwidth]
{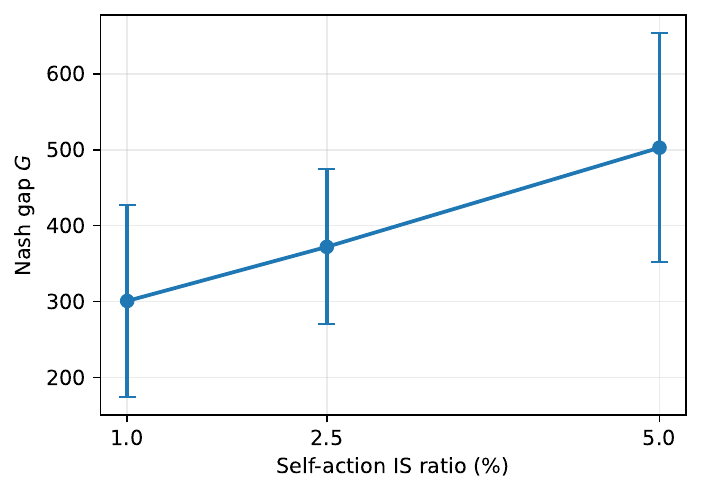}
    \caption{\textbf{Self-action IS ratio sensitivity.} Nash gap on F2 (m=n=3) as the self-action IS ratio varies. Lower is better.}
    \label{fig:self-action-is-hyperparamter-study}
\end{figure*}

\newpage 

\section{Computing Infrastructure}

Experiments were run entirely on CPU hardware, using Intel Xeon Platinum 8268 nodes (48 cores, 192 GB RAM) and 32-vCPU cloud instances.

\end{document}